\newtheorem{theorem}{Theorem}
\newtheorem{lemma}[theorem]{Lemma}
\newtheorem{cor}[theorem]{Corollary}
\newtheorem{prop}[theorem]{Proposition}
\theoremstyle{definition}
\newcommand{\eps}{\varepsilon}
\newcommand{\epsp}{\varepsilon'}
\newcommand{\E}{\mathbb{E}}
\newcommand{\ind}{\mathds{1}}
\DeclareMathOperator{\erf}{\textnormal{erf}}
\DeclareMathOperator{\sign}{sign}
\DeclareMathOperator{\diag}{diag}
\DeclareMathOperator{\unif}{Unif}
\DeclareMathOperator{\pois}{Poisson}
\DeclareMathOperator*{\argmax}{arg\,max}
\DeclareMathOperator*{\argmin}{arg\,min}
\newcommand{\defcal}[1]{\expandafter\newcommand\csname 
	c#1\endcsname{{\mathcal{#1}}}}
\newcommand{\defbb}[1]{\expandafter\newcommand\csname 
	b#1\endcsname{{\mathbb{#1}}}}
\newcommand{\defbf}[1]{\expandafter\newcommand\csname 
	bf#1\endcsname{{\mathbf{#1}}}}
\newcounter{calBbCounter}
	\edef\letter{\Alph{calBbCounter}}
	\edef\letter{\alph{calBbCounter}}
\newcommand{\wrob}{w^\textnormal{rob}}
\newcommand{\frob}{f^\textnormal{rob}}
\newcommand{\dgau}{\cD_{\cN}}
\title{The Curious Case of Adversarially Robust Models:\\ More Data Can Help,  Double Descend, or  Hurt Generalization}
\author{Yifei Min\thanks{Department of Statistics and Data Science, Yale University.  E-mail: \texttt{yifei.min@yale.edu}.}
\and 
Lin Chen\thanks{Yale Institute for Network Science, Yale University.  E-mail: \texttt{lin.chen@yale.edu}.} 
\and
Amin Karbasi\thanks{Yale Institute for Network Science, Yale University.  E-mail: \texttt{amin.karbasi@yale.edu}.}
}
\date{}
\begin{document}

\maketitle

\begin{abstract}

    Adversarial training has shown its ability in producing models that are robust to perturbations on the input data, but usually at the expense of decrease in the standard accuracy. To mitigate this issue, it is commonly believed that more training data will eventually help such adversarially robust models generalize better on the benign/unperturbed test data.
    In this paper, however, we challenge this conventional belief and show that more training data can hurt the generalization of adversarially robust models in the classification problems. We first investigate the Gaussian mixture classification with a linear loss and identify three regimes based on the strength of the adversary. In the weak adversary regime, more data improves the generalization of adversarially robust models. In the medium adversary regime, with more training data, the generalization loss exhibits a double descent curve, which implies the existence of an intermediate stage where more training data hurts the generalization. In the strong adversary regime, more data almost immediately causes the generalization error to increase. Then we move to the analysis of a two-dimensional classification problem with a 0-1 loss. We prove that more data always hurts the generalization performance of adversarially trained models with large perturbations. To complement our theoretical results, we conduct empirical studies on Gaussian mixture classification, support vector machines (SVMs), and linear regression.

\end{abstract} %
\section{Introduction}\label{sec:intro}
In recent years, modern machine learning methods have exhibited their superiority over traditional models in an abundance of machine learning tasks, e.g., image classification~\citep{krizhevsky2012imagenet}, speech recognition and language translation~\citep{graves2013speech, bahdanau2015neural}, medical diagnosis~\citep{lakhani2017deep,xiao2019denxfpn}, text recognition and information extraction~\citep{long2020new,mei2018halo,wang2012end}, online fraud detection~\citep{pumsirirat2018credit}, and self-driving cars~\citep{ramos2017detecting}, among others. However, they can also be extremely vulnerable to adversarial,  human-imperceptible data modifications
~\citep{szegedy2013intriguing,carlini2018audio,kos2018adversarial}. This vulnerability is even more concerning and dangerous when machine learning methods are used in scenarios directly connected to human safety such as medical diagnosis (misinterpreting medical images) or self-driving cars (misreading traffic signs). %
To circumvent these issues, practitioners introduce adversarial training in order to produce adversarially robust models~\citep{huang2015learning,shaham2018understanding,madry2017towards,zhang2019you,gao2019convergence,song2018improving} that can still make consistently correct predictions, even when faced with perturbed data.

There is a large body of work dedicated to adversarially robust models~\citep{zhang2019interpreting,santurkar2019image,zhang2019theoretically,bhagoji2019lower,diochnos2019lower,wei2019improved,zhai2019adversarially}. In particular, it has been shown that there exists a trade-off between the generalization of a model (i.e. the standard accuracy) and its robustness to adversarial perturbation \citep{tsipras2018robustness}. %
Along a similar vein, \citet{schmidt2018adversarially} showed that adversarially robust models need more training data compared to their standard counterparts in order to achieve the same generalization performance. In this paper, we want to further investigate these ideas and explore whether simply adding more data is enough for adversarially robust models to catch up to the generalization ability of their standard counterparts.

Previous works have studied the generalization of adversarially robust models from a variety of perspectives. For instance, \citet{yin2019rademacher} and \citet{khim2018adversarial} gave bounds on the generalization error of adversarially robust models via Rademacher complexity. More recently, \citet{chen2020more} studied the influence of a larger training set upon the gap between the generalization performance of an adversarially robust model and a standard model. They proved that more training data could result in expansion of the gap and denied the belief that more training data always helps adversarially robust models reach a similar generalization performance to the standard model. Building on these works, our goal is to move past bounds and gaps, and directly characterize how the size of training set affects the accuracy of adversarially robust models on unperturbed test data.

\subsection{Our Contributions}

A conventional wisdom in machine learning is that a larger training set will result in better generalization on the test data. We provably establish a surprising, and to some extent even paradoxical, result that more training data can hurt the generalization of adversarially robust models.
We first consider a linear classification problem with a linear loss function and identify three regimes of different 
adversary strengths, i.e., the weak, medium, and strong adversary regimes. 

\begin{itemize}
    \item In the \textbf{strong adversary regime}, the generalization of adversarially 
    robust models deteriorates with more training data, except for a possible 
    short initial stage where the generalization is improved with more data. 
    \item The \textbf{medium adversary regime} is probably the most interesting one among the three regimes. In this regime, the evolution of the generalization performance of adversarially robust models could be a double descent curve.  In particular, at the initial stage, the generalization loss on the test data is reduced with more training data. At the intermediate stage, however, the generalization loss increases as there is more training data (more data hurts the generalization of adversarial robust models). At the final stage, more training data improves the generalization performance. 
    \item In the \textbf{weak adversary regime}, the generalization is consistently 
    improved with more training data.
\end{itemize}

We then move to the analysis of the 0-1 loss and investigate a two-dimensional classification problem where the candidate decision boundary is given by a piecewise constant function. 
Similar weak and strong adversary regimes are observed under this setting. In particular, in the strong adversary regime, more data always hurts the generalization of adversarially robust models. 

We complement the above theroetical results with empirical studies on important machine learning models, including support vector machines (SVMs), linear regression, and Gaussian mixture classification with 0-1 loss. We observe a similar phenomenon that more data hurts generalization in adversarial training. These empirical results suggest that the observed phenomenon may be ubiquitous across different models and loss functions and that we need to reflect on the true role that the size of the training set plays in adversarial training.

\section{Related Work}
In this section, we briefly discuss some additional papers on the generalization of adversarially robust models and the double descent phenomenon, which are most relevant to our work.

\citet{schmidt2018adversarially} showed that adversarially robust models need more training data compared to their standard counterpart. They considered a Gaussian mixture model similar to ours and proved that the training of a robust model requires a training set with size $\Omega(d)$ where $d$ is the dimension of the data, whereas the standard model only needs a constant number of data points. \citet{bubeck2019adversarial} studied a binary classification problem under a statistical query setting and showed that to train a robust classifier one needs exponentially (in dimension $d$) many queries, while only polynomially many to train a standard classifier. The main difference between their work and our work is that we quantify the training dynamic in terms of the size of the training set. Very recently, \citet{javanmard2020precise} precisely characterized the trade-off of standard/robust accuracy under the linear regression setting. \citet{raghunathan2019adversarial} gave empirical evidence that adversarial training could hurt the standard accuracy, despite its improvement on robustness. The PAC-learning setting has also been studied by several authors~\citep{cullina2018pac,diochnos2019lower,montasser2020efficiently}. \citet{cullina2018pac} provided a polynomial (in the VC dimension) upper bound for the sample complexity, while \citet{diochnos2019lower} gave a lower bound for the sample complexity which is exponential in the dimension of the input. 

The strength of the adversary is crucial in the adversarial training. Theoretically, \citet{dohmatob2019generalized} showed that a classifier with high standard accuracy can inevitably be fooled by a strong adversary. Empirically, \citet{papernot2016towards} and \citet{tsipras2018robustness} found that a strong adversary can drive down standard accuracy for robust models. \citet{ilyas2019adversarial} found that the adversarial training tends to learn non-robust features and omit robust ones if the adversary is too strong. 

The double descent phenomenon has been studied by several authors. \citet{belkin2019reconciling,belkin2019two} and \citet{mei2019generalization} provably showed the existence of double descent curves for the generalization error. However, we would like to remark that the double descent curve they considered is in terms of the number of parameters (model complexity), while ours is sample-wise. Empirically, \citet{nakkiran2019deep} also discovered a sample-wise double descent phenomenon.

\section{Preliminaries}

Throughout this paper, let $ [n] $ be a shorthand notation for $ 
\{1,2,\dots,n\} $. Assume the data point $(x,y) $ consists of the input variable $x$ and label $y$, and $(x,y)$ is generated from some distribution $\cD$. Denote the loss function by $\ell(x,y;w)$ and the robust classifier is defined as follows~\cite{goodfellow2015explaining, madry2017towards}: 
\begin{equation}\label{eq:wrob_general}
    \begin{split}
    \wrob_n = {}& \argmin_{w\in \Theta}  \sum_{i=1}^n
	\max_{\tilde{x}_i\in B^\infty_{x_i}(\eps)}
	\ell(\tilde{x}_i, y_i ;w) \,,
    \end{split}
\end{equation}
where $\Theta$ is the parameter space and $B^\infty_{x}(\eps)\triangleq \{ \tilde{x}\in \bR^d| \| \tilde{x}-x \|_\infty \le \eps  \}$ is an $\ell^\infty$ ball centered at $x$ with radius $\eps$. The radius $\eps$ characterizes the strength of the adversary. A larger $\eps$ means a stronger adversary. This robust classifier minimizes the robust loss, or equivalently, maximizes the robust reward (i.e., negative loss).

The generalization error of the robust classifier is given by
\begin{equation}\label{eq:generalization_err}
	\begin{split}
	L_n 
	={} &
	\bE_{\{(x_i,y_i)\}_{i=1}^n\stackrel{\textnormal{i.i.d.}}{\sim} 
		\dgau} \left[ \bE_{(x,y)\sim \dgau}[\ell(x,y;\wrob_n)]\right]\,,
	\end{split}
	\end{equation} where the inner expectation is over the randomness of the test data point and the outer expectation is over the randomness of the training dataset. The test and training data are assumed to be independently sampled from the same distribution. The generalization error can be interpreted as the expected loss of the robust model over standard/unperturbed test data.

\section{Theoretical Results}\label{sec:main}

In this section we study two different binary classification models. In \cref{sec:linear_loss}, we analyze the Gaussian mixture model under linear loss and prove the existence of three possible regimes (weak, medium and strong adversary regimes), in which more training data can help, double descend, or hurt generalization of the adversarially trained model, respectively. In \cref{sec:manhattan}, we construct a model called the Manhattan model that enables us to analyze the 0-1 loss and prove that analogous weak and strong adversary regimes also exist under a different loss function.%

\subsection{Gaussian Mixture with Linear Loss}\label{sec:linear_loss}

\begin{figure}[tb]
    \centering
    \begin{subfigure}[b]{0.21\textwidth}
		\includegraphics[width=\textwidth]{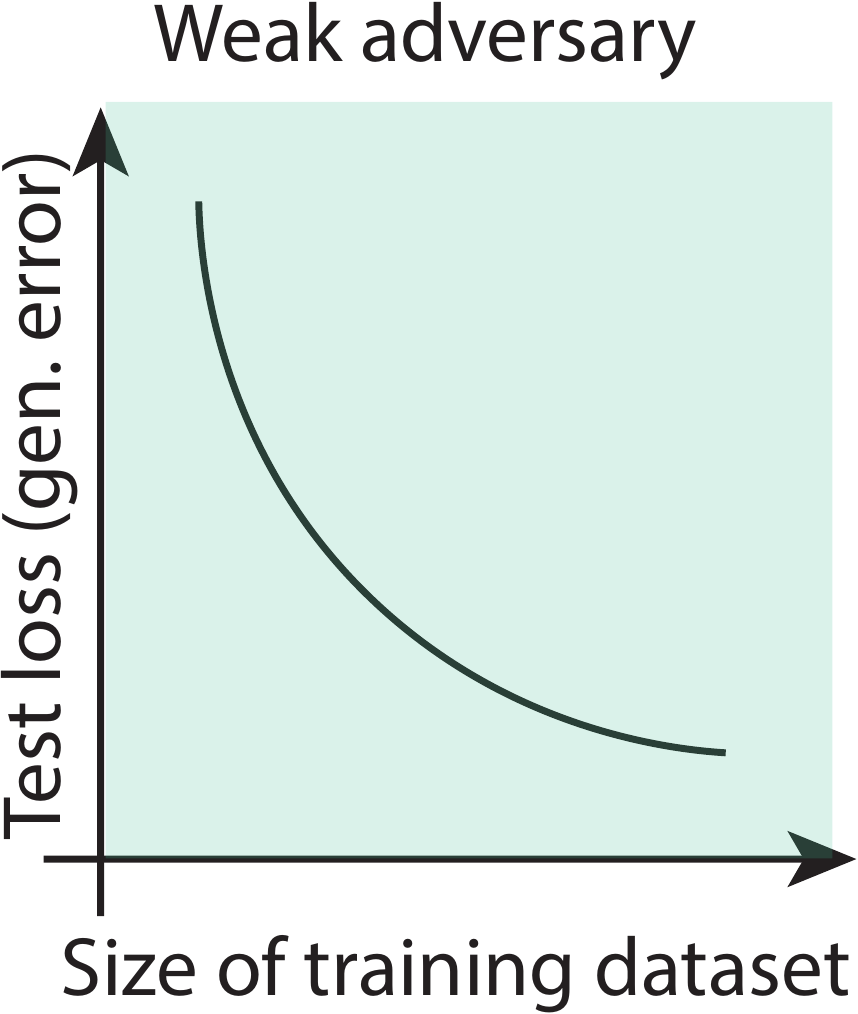}
		\caption{Weak adversary}
		\label{fig:theorem_schematic_weak}
	\end{subfigure}\hspace{0.09\textwidth}%
	\begin{subfigure}[b]{0.21\textwidth}
		\includegraphics[width=\textwidth]{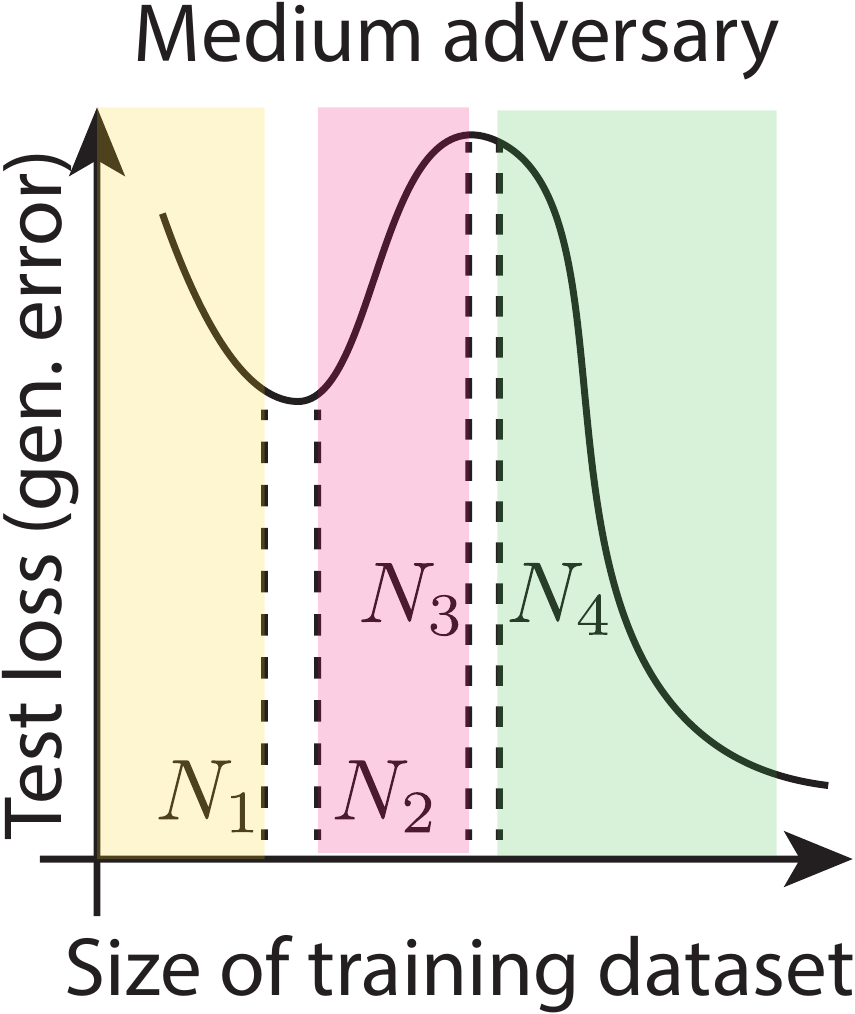}
		\caption{Medium adversary}
		\label{fig:theorem_schematic_medium}
	\end{subfigure}\hspace{0.09\textwidth}%
	\begin{subfigure}[b]{0.21\textwidth}
		\includegraphics[width=\textwidth]{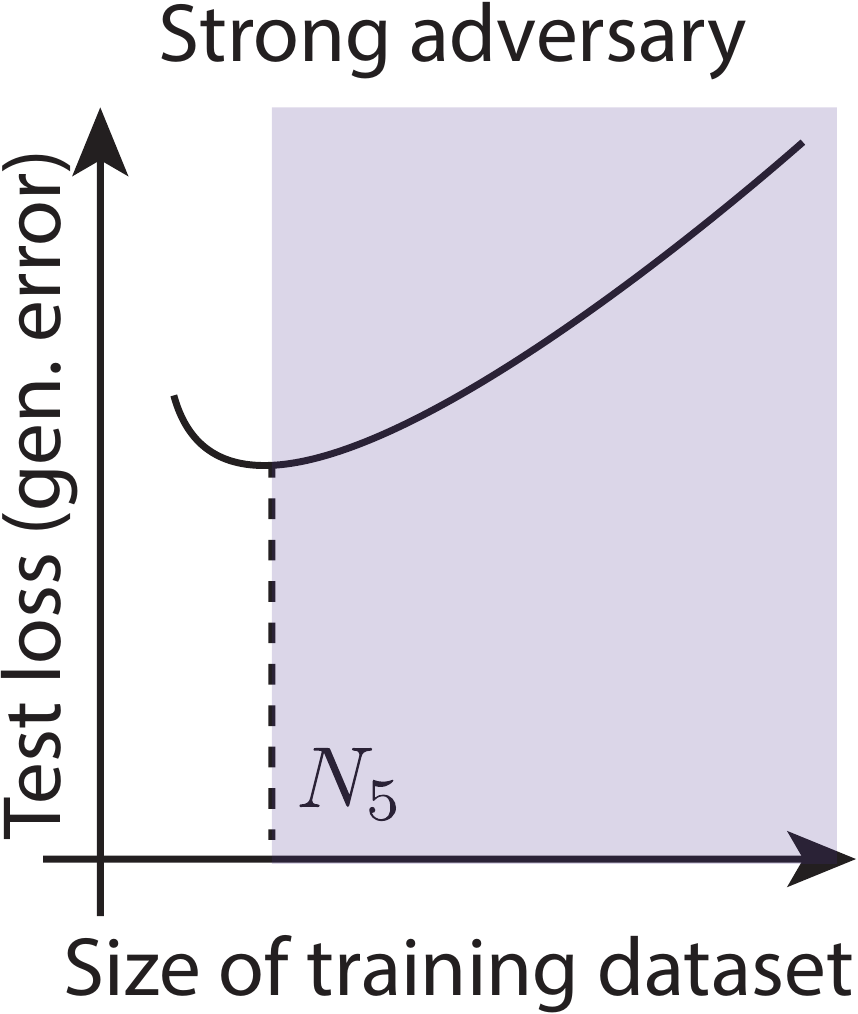}
		\caption{Strong adversary}
		\label{fig:theorem_schematic_strong}
	\end{subfigure}
     \caption{This figure illustrates the three adversary regimes (i.e., weak, medium, and strong) and the corresponding results of \cref{thm:gaussian}. In the weak adversary regime, more training data always improves generalization. The medium adversary regime exhibits a double descent curve. 
     When the size of training set $n\le N_1$ (the initial stage), more training data improves the generalization; when $N_2<n<N_3$ (the intermediate stage), generalization is hurt by more data; when $n\ge N_4$, more data helps with generalization again. 
     In the strong regime, generalization deteriorates with more data when the size of training size is sufficiently large. 
     }
     \label{fig:theorem_schematic}
 \end{figure}

In this subsection, we consider the Gaussian mixture model with linear loss. More specifically, the distribution for the data generation is specified by $ y\sim \unif(\{\pm 1\}) $ and $ x\mid y\sim \cN(y\mu, \Sigma) $, where $ \mu(j)\ge 0 $ for all $j\in [d]$ and $\Sigma=\diag(\sigma^2(1),\sigma^2(2),\dots,\sigma^2(d)) $. In the remaining parts we denote this distribution by $(x,y) \sim \dgau$.
We consider the linear loss $\ell(x,y;w) = -y \langle w,x\rangle$ and we set the constraint set as $ w\in \Theta = \{ w\in \bR^d|  \|w\|_\infty\le W \} $ where $W$ is a positive constant, similar to~\citep{chen2020more,yin2019rademacher,khim2018adversarial}.
In this setting, by \eqref{eq:wrob_general} the robust classifier is
\begin{equation}\label{eq:wrob_linear_loss}
    \begin{split}
    \wrob_n = {}& \argmin_{\|w\|_\infty\le W}  \sum_{i=1}^n
	\max_{\tilde{x}_i\in B^\infty_{x_i}(\eps)}
	\left( -y_i\langle w,\tilde{x_i}\rangle \right) = \argmax_{\|w\|_\infty\le W} \sum_{i=1}^n 
	\min_{\tilde{x}_i\in B^\infty_{x_i}(\eps)}
	y_i\langle w,\tilde{x_i}\rangle\,.
    \end{split}
\end{equation} 
We study how the generalization error of the robust model evolves as the size of the training dataset changes, i.e., the dependence of $L_n$ on $n$.
By \eqref{eq:generalization_err} the generalization error of the robust classifier under linear loss is given by
\begin{equation}\label{eq:generalization_err_linear_loss}
	\begin{split}
	L_n 
	={} &
	\bE_{\{(x_i,y_i)\}_{i=1}^n\stackrel{\textnormal{i.i.d.}}{\sim} 
		\dgau} \left[ \bE_{(x,y)\sim \dgau}[-y\langle 
	\wrob_n,x\rangle]\right]\,.
	\end{split}
	\end{equation}
 For the Gaussian classification problem under the linear loss, we identify that the behavior of $L_n$ exhibits a phase transition which is determined by the strength of the adversary. Our main result is summarized by \cref{thm:gaussian}.

\begin{theorem}[Proof in \cref{sec:proof_linear_loss}]\label{thm:gaussian}
Given $n$ i.i.d.\,training data points $(x_i , y_i) \sim \dgau$, if the robust 
classifier is defined by \eqref{eq:wrob_linear_loss} and its generalization 
error is defined by \eqref{eq:generalization_err_linear_loss}, then there exist 
$0 < \delta_1 < \delta_2 < 1$, such that
\begin{enumerate}[label=(\alph*),nosep]
    \item If $0 < \eps < \delta_1 \cdot \min_{j \in [d]} \mu(j)$, then $L_n < L_{n-1}$ for all $n$. That is, the loss $L_n$ monotonically decreases as the number of training points $n$ increases. %
    \label{it:weak_regime}
    \item If $\delta_2 \cdot \max_{j \in [d]} \mu(j)<\eps <\min_{j \in [d]} \mu(j)$, and we further %
    assume that $\frac{\mu (j)}{\sigma (j)}$ is the same for all $j$, then there exist $N_1 < N_2 < N_3 < N_4$ such that 
    \begin{equation*}
        \begin{split}
            L_n  \begin{cases}
            < L_{n-1} & \quad \textnormal{for} \ 0<n \leq N_1  \,,
            \\ > L_{n-1} & \quad \textnormal{for} \ N_2 < n < N_3 \,,
            \\ < L_{n-1} & \quad \textnormal{for} \ N_4 \leq n \,.
            \end{cases}
        \end{split}
    \end{equation*} \label{it:medium_regime}
    \item If $\max_{j \in [d]} \mu(j) \leq \eps $, then there exists $N_5$ such that $L_n > L_{n-1}$ for all $n> N_5$. \label{it:strong_regime}
\end{enumerate}
\end{theorem}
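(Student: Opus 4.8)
The plan is to reduce everything to a closed form for $L_n$ and then do one-variable calculus on it. \textbf{Step 1 (closed form for $\wrob_n$).} The loss is linear in $x$, so the inner minimization over the $\ell^\infty$-ball is just a dual norm: $\min_{\tilde x_i\in B^\infty_{x_i}(\eps)}y_i\langle w,\tilde x_i\rangle=y_i\langle w,x_i\rangle-\eps\|w\|_1$. Summing over $i$ gives $\wrob_n=\argmax_{\|w\|_\infty\le W}\big(\langle w,S_n\rangle-n\eps\|w\|_1\big)$ with $S_n:=\sum_{i=1}^n y_i x_i$, an objective that decouples across coordinates; hence for almost every realization $\wrob_n(j)=W\,\sign(S_n(j))\,\ind\{|S_n(j)|>n\eps\}$.

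\textbf{Step 2 (closed form for $L_n$).} Since $\E_{(x,y)\sim\dgau}[yx]=\mu$, the test expectation collapses and $L_n=-\langle\E[\wrob_n],\mu\rangle=-W\sum_j\mu(j)\big(\Pr[S_n(j)>n\eps]-\Pr[S_n(j)<-n\eps]\big)$. The distributional input is that $y_i x_i(j)$ are i.i.d.\ $\cN(\mu(j),\sigma^2(j))$ (the sign of $y_i$ cancels), so $S_n(j)\sim\cN(n\mu(j),n\sigma^2(j))$ and
\[
L_n=-W\sum_{j=1}^d\mu(j)\left[\Phi\!\left(\sqrt n\,\tfrac{\mu(j)-\eps}{\sigma(j)}\right)-\Phi\!\left(-\sqrt n\,\tfrac{\mu(j)+\eps}{\sigma(j)}\right)\right].
\]
Write $a_j:=\tfrac{\mu(j)-\eps}{\sigma(j)}$ and $b_j:=\tfrac{\mu(j)+\eps}{\sigma(j)}>0$, let $t>0$ play the role of $n$, and call $g(t)$ the resulting function. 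By the mean value theorem $\sign(L_n-L_{n-1})$ equals the sign of $g'$ at some point of $(n-1,n)$, and $g'(t)=-\tfrac{W}{2\sqrt t}\,B(t)$ with $B(t):=\sum_j\mu(j)\big(a_j\phi(\sqrt t\,a_j)+b_j\phi(\sqrt t\,b_j)\big)$, a signed sum of scaled Gaussian densities. Everything now reduces to the sign pattern of $B$.

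\textbf{Step 3 (reading off the regimes).} Part (a): if $\eps<\min_j\mu(j)$ (so one may take $\delta_1\le1$), every $a_j>0$, hence $B>0$ everywhere, $g'<0$, and $L_n$ strictly decreases. Part (c): if $\eps\ge\max_j\mu(j)$ then every $a_j\le0$, and within coordinate $j$ the slower-decaying of its two bumps is the $a_j$-bump (as $a_j^2\le b_j^2$, strictly when $\mu(j)>0$), whose sign is $\le0$; so for large $t$ the overall slowest term of $B$ is negative, $B(t)<0$, $g'>0$, and $L_n$ eventually strictly increases, yielding $N_5$ — the only thing to watch being a coordinate with $\mu(j)=\eps$, where $a_j=0$ and one needs a short separate argument that the genuinely negative terms still dominate. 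Part (b): under $\mu(j)/\sigma(j)\equiv\rho$ the decay rates become $a_j^2=\rho^2(1-\eps/\mu(j))^2$, functions of $\mu(j)$ alone, and for $\eps$ in the intermediate window the signs and exponents in $B=\sum_k c_k\,t^{-1/2}e^{-\lambda_k t}$ line up so that $B$ is positive, then negative, then positive — exactly two sign changes — which is precisely a descent, then an ascent, then a final descent.

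The \textbf{main obstacle} is part (b): proving that $B$ has \emph{exactly} two sign changes (a Descartes-type sign count for signed sums of Gaussian densities, plus asymptotics to confirm the zeros are genuinely there) and then converting them into the thresholds $N_1<N_2<N_3<N_4$ in the stated order, which requires quantitative bounds on $\Phi$ and $\phi$ and uses up the slack in $\delta_1,\delta_2$. The hypotheses earn their keep exactly here: the common-ratio assumption makes the rates $\lambda_k$ depend only on $\mu(j)$, and pushing $\eps$ above $\delta_2\max_j\mu(j)$ forces the \emph{slowest}-decaying term of $B$ to be a positive one (heuristically, $\eps$ then exceeds the harmonic mean of the signals straddling it), so the long-run behavior is a descent rather than a monotone tail — which is the $n\ge N_4$ clause.
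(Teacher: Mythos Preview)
Your Step~1 formula $\wrob_n(j)=W\,\sign(S_n(j))\,\ind\{|S_n(j)|>n\eps\}$ is not the one the paper works with. The paper takes $\wrob_n(j)=W\,\sign\!\big(u(j)-\eps\,\sign(u(j))\big)$ (\cref{prop:wrob}), which equals $\mp W$ rather than $0$ on the event $0<|u(j)|<\eps$. Consequently your closed form for $L_n$ omits the contribution of that region, and your derivative function $B$ is missing the piece that, in the paper's parametrization $t=e^{-v^2}$, is the leading $t$ in
\[
f(t,\epsp)=t-(1+\epsp)\,t^{(1+\epsp)^2}-(1-\epsp)\,t^{(1-\epsp)^2}\,.
\]

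This breaks part~(b) outright. Under the hypothesis of (b) one has $\eps<\min_j\mu(j)$, so every $a_j=(\mu(j)-\eps)/\sigma(j)>0$ and every $b_j>0$; hence your
\[
B(t)=\sum_j\mu(j)\big(a_j\,\phi(\sqrt t\,a_j)+b_j\,\phi(\sqrt t\,b_j)\big)
\]
is a sum of strictly positive terms for all $t>0$. There are no sign changes, $g'<0$ everywhere, and $L_n$ is strictly decreasing --- which is your part~(a), not a double descent. Your claim that ``$B$ is positive, then negative, then positive'' is incompatible with your own expression for $B$. (The red flag was already visible in your part~(a): you obtained monotone decrease for \emph{all} $\eps<\min_j\mu(j)$, which leaves no room whatsoever for (b).) In the paper's argument the double descent is driven precisely by that extra $t$ term: part~\ref{it:f_delta2} of \cref{lem:f} shows that for $\epsp$ close to~$1$ the function $f(t,\epsp)$ has two sign changes on $t\in(0,1)$, and establishing those sign changes --- through the study of $f'$ and its unique interior maximum in \cref{lem:fprime_trd,lem:fprime_zeros} --- is the real work, work that your sketch cannot replicate with the truncated $B$.
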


\cref{thm:gaussian} verifies the existence of three possible regimes during the commonly used adversarial training procedure and gives conditions for when the phase transition between these regimes will take place. Part \ref{it:weak_regime} identifies the weak regime, showing that when the strength of the adversary $\eps$ is small compared to the signal $\mu$, the generalization error decreases as the size of the training dataset increases. In this regime, the generalization benefits from the use of a large training set. This regime is illustrated by \cref{fig:theorem_schematic_weak}, where the curve is always decreasing.

However, as the adversary becomes stronger, we reach the medium regime and things change. Part \ref{it:medium_regime} proves the existence of a double descent curve for the generalization error. It shows that when $\eps$ becomes larger and approaches the signal in magnitude, the generalization error will first decrease as more training data is used. Surprisingly, once it reaches a certain point, it will start increasing as we feed more data. This increasing stage continues until the dataset size reaches some threshold $N_2$ and then the error will decrease again. The medium adversary regime is illustrated by \cref{fig:theorem_schematic_medium}, where the three stages are marked by three different colored areas. 

If the adversary's strength reaches the signal level or becomes even stronger, then for all sufficiently large $n$, the generalization error monotonically increases as the size of training set increases. This strong regime is described in part \ref{it:strong_regime} of \cref{thm:gaussian} and illustrated by \cref{fig:theorem_schematic_strong}. Note that despite the decreasing stage near the very beginning, the loss keeps going up after the threshold $N_5$. 

Furthermore, we see that in the medium regime, the length of the increasing stage is given by  $N_3-N_2$, according to part \ref{it:medium_regime} of \cref{thm:gaussian}. We would like to remark that the model can have an arbitrarily long increasing stage, which depends on the adversary's strength. To better interpret this idea and the meaning behind \cref{thm:gaussian}, we consider the following special case where $\mu(j) = \mu_0$ and $\sigma (j) = \sigma_0$ for all $j\in [d]$.  %
In this special case, it can be shown that in the medium regime, as $\eps$ approaches the signal strength $\mu_0$, the increasing stage grows and can be arbitrarily long.

\begin{cor}[Proof in \cref{sec:proof_linear_loss}]\label{cor:gaussian}
Under the same assumption as \cref{thm:gaussian} and further assuming that $\mu(j) = \mu_0$ and $\sigma (j) = \sigma_0$ for all $j\in [d]$, we have
\begin{enumerate}[label=(\alph*),nosep]
    \item If $0 < \eps < \delta_1 \mu_0$, then $L_n < L_{n-1}$ for all $n$. \label{it:cor_weak_regime}
    \item If $\delta_2 \mu_0<\eps < \mu_0$, then there exist $N_1 (\eps) < N_2 (\eps) $ such that 
    \begin{equation*}
        \begin{split}
            L_n  \begin{cases}
            < L_{n-1} & \quad \textnormal{for} \ 0<n \leq N_1  \,,
            \\ > L_{n-1} & \quad \textnormal{for} \ N_1 < n < N_2 \,,
            \\ < L_{n-1} & \quad \textnormal{for} \ N_2 \leq n \,,
            \end{cases}
        \end{split}
    \end{equation*} and $\lim_{\eps \to \mu_0^-} N_2 (\eps) - N_1 (\eps) = + \infty$.
    \label{it:cor_medium_regime}
    
    \item If $\mu_0 \leq \eps $, then there exists $N_3 (\eps) $ such that $ L_n > L_{n-1} $ for all $n> N_3$. \label{it:cor_strong_regime}
\end{enumerate}
\end{cor}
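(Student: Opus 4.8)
The plan is to obtain \cref{cor:gaussian} from \cref{thm:gaussian} by specialization, the only genuinely new ingredient being the divergence of the length of the ``middle'' (increasing) stage as $\eps\to\mu_0^-$. Parts \ref{it:cor_weak_regime} and \ref{it:cor_strong_regime} require nothing but bookkeeping: with $\mu(j)\equiv\mu_0$ and $\sigma(j)\equiv\sigma_0$ we have $\min_{j\in[d]}\mu(j)=\max_{j\in[d]}\mu(j)=\mu_0$, so $0<\eps<\delta_1\mu_0$ is exactly the hypothesis of part \ref{it:weak_regime} of \cref{thm:gaussian}, and $\mu_0\le\eps$ is exactly that of part \ref{it:strong_regime}; the conclusions then transfer verbatim. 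For part \ref{it:cor_medium_regime}, note first that the auxiliary hypothesis of part \ref{it:medium_regime} of \cref{thm:gaussian} --- constancy of $\mu(j)/\sigma(j)$ --- holds trivially with common value $\mu_0/\sigma_0$, so that theorem already supplies thresholds $N_1<N_2<N_3<N_4$ controlling the sign of $L_n-L_{n-1}$ on $(0,N_1]$, on $(N_2,N_3)$, and on $[N_4,\infty)$.

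It remains to (i) collapse the two ``unknown'' windows $(N_1,N_2)$ and $(N_3,N_4)$ into a single pair $N_1(\eps)<N_2(\eps)$ at which the sign of $L_n-L_{n-1}$ changes sharply, and (ii) show $N_2(\eps)-N_1(\eps)\to\infty$ as $\eps\to\mu_0^-$. For (i) I would use the extra rigidity of the homogeneous case: there the closed form for $L_n$ produced in the proof of \cref{thm:gaussian} collapses to $L_n=d\cdot\lambda_\eps(n)$ for one scalar profile $\lambda_\eps$, and after extending $n$ to a continuous variable $t>0$ the sign of $L_n-L_{n-1}$ agrees with that of $\lambda_\eps'$ except possibly on the unit intervals straddling its zeros. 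Differentiating the ratio of the two Gaussian-type terms that compose $\lambda_\eps'$ --- whose exponents carry the competing rates $(\mu_0-\eps)^2$ and $(\mu_0+\eps)^2$ --- one checks by elementary calculus that this ratio is monotone in $t$, hence $\lambda_\eps'$ has at most two zeros on $(0,\infty)$; and in the window $\delta_2\mu_0<\eps<\mu_0$ it has exactly two, with sign pattern $(-,+,-)$. Thus $L_n$ has a single interior local minimum and a single interior local maximum, and taking $N_1(\eps),N_2(\eps)$ to be the appropriate integer roundings of the two critical points (checking that the strict inequalities survive the rounding) yields the gap-free three-phase statement.

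For (ii) I would pin down the two critical points $t_1(\eps)<t_2(\eps)$ of $\lambda_\eps'$ quantitatively. The smaller stays bounded, $t_1(\eps)=O(1)$, because the first sign change of $\lambda_\eps'$ occurs while both of its Gaussian terms are still $\Theta(1)$. The larger solves a balance equation of the schematic form $\tfrac{1}{2}(\mu_0-\eps)^2\,t=\log\!\big(\text{slowly varying in }t\big)+O(1)$, and since the left-hand coefficient $(\mu_0-\eps)^2$ tends to $0$ the solution is forced to diverge, at a rate of order $(\mu_0-\eps)^{-2}\log\tfrac{1}{\mu_0-\eps}$. Because $N_1(\eps)=t_1(\eps)+O(1)$ and $N_2(\eps)=t_2(\eps)+O(1)$, this gives $N_2(\eps)-N_1(\eps)\to+\infty$. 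I expect step (ii) to be the main obstacle: the clean ``two exponentials'' picture is muddied by polynomial/logarithmic prefactors in $\lambda_\eps'$, so the real work is to make the asymptotics of the larger zero rigorous and uniform, and to confirm that the smaller zero genuinely remains $O(1)$, so that passing from the continuous critical points to the integer thresholds $N_1,N_2$ does not consume the growing gap.
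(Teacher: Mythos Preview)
Your outline is correct and would lead to a valid proof, but it is considerably more laborious than the paper's route and contains some loose heuristics.

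The paper does not re-derive anything for step~(i). In the homogeneous case every $\epsp_j=\eps/\mu_0$ is the \emph{same} number, so the quantities $\tau_1(\epsp_j)$ and $\tau_2(\epsp_j)$ from \cref{lem:f}\ref{it:f_delta2} (used inside the proof of \cref{thm:gaussian}) are equal across $j$. In the notation of that proof this forces $\hat\tau_1=\tau_1$ and $\tau_2=\min_j\tau_2(\epsp_j)=\max_j\tau_2(\epsp_j)$, so the four thresholds of \cref{thm:gaussian}\ref{it:medium_regime} collapse to two with no gap --- no new ``at most two zeros'' argument is needed. Your proposed ratio-monotonicity argument is also shaky as stated: $\lambda_\eps'$ is a signed sum of \emph{three} exponential terms (rates $\mu_0^2$, $(\mu_0+\eps)^2$, $(\mu_0-\eps)^2$), so monotonicity of one ratio does not by itself bound the zero count; the paper's \cref{lem:f} handles this via a second-derivative analysis of $f$.

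For step~(ii) the paper again just reads off \cref{lem:f}\ref{it:f_delta2}: with $t=\exp(-n\mu_0^2/(2\sigma_0^2))$ one has $N_1=\tfrac{2\sigma_0^2}{\mu_0^2}\log(1/\tau_2)$ and $N_2=\tfrac{2\sigma_0^2}{\mu_0^2}\log(1/\tau_1)$, and the lemma gives $\tau_1(\epsp)\to 0$ while $\tau_2(\epsp)\ge 1/3$ as $\epsp\to 1^-$, whence $N_2-N_1\to\infty$ immediately. Your balance-equation analysis would work too, but note your claimed rate is off: balancing the $e^{-n\mu_0^2/(2\sigma_0^2)}$ term against $(1-\epsp)e^{-n(\mu_0-\eps)^2/(2\sigma_0^2)}$ gives $N_2\sim \tfrac{2\sigma_0^2}{\mu_0^2}\log\tfrac{1}{\mu_0-\eps}$, not $(\mu_0-\eps)^{-2}\log\tfrac{1}{\mu_0-\eps}$. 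This does not affect the conclusion, since only divergence is claimed.
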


Part \ref{it:cor_weak_regime} and \ref{it:cor_strong_regime} of 
\cref{cor:gaussian} are a re-statement of corresponding parts of 
\cref{thm:gaussian} in the simplified setting. Part \ref{it:cor_medium_regime} 
additionally states that as $\eps$ increases towards $\mu_0$, the length of the increasing stage goes to infinity. In this setting, the three regimes are marked by the thresholds $\delta_1 \mu_0$, $\delta_2 \mu_0$ and $\mu_0$. 

\begin{figure}[htb]
	\begin{subfigure}[b]{0.33\textwidth}
		\includegraphics[width=\textwidth]{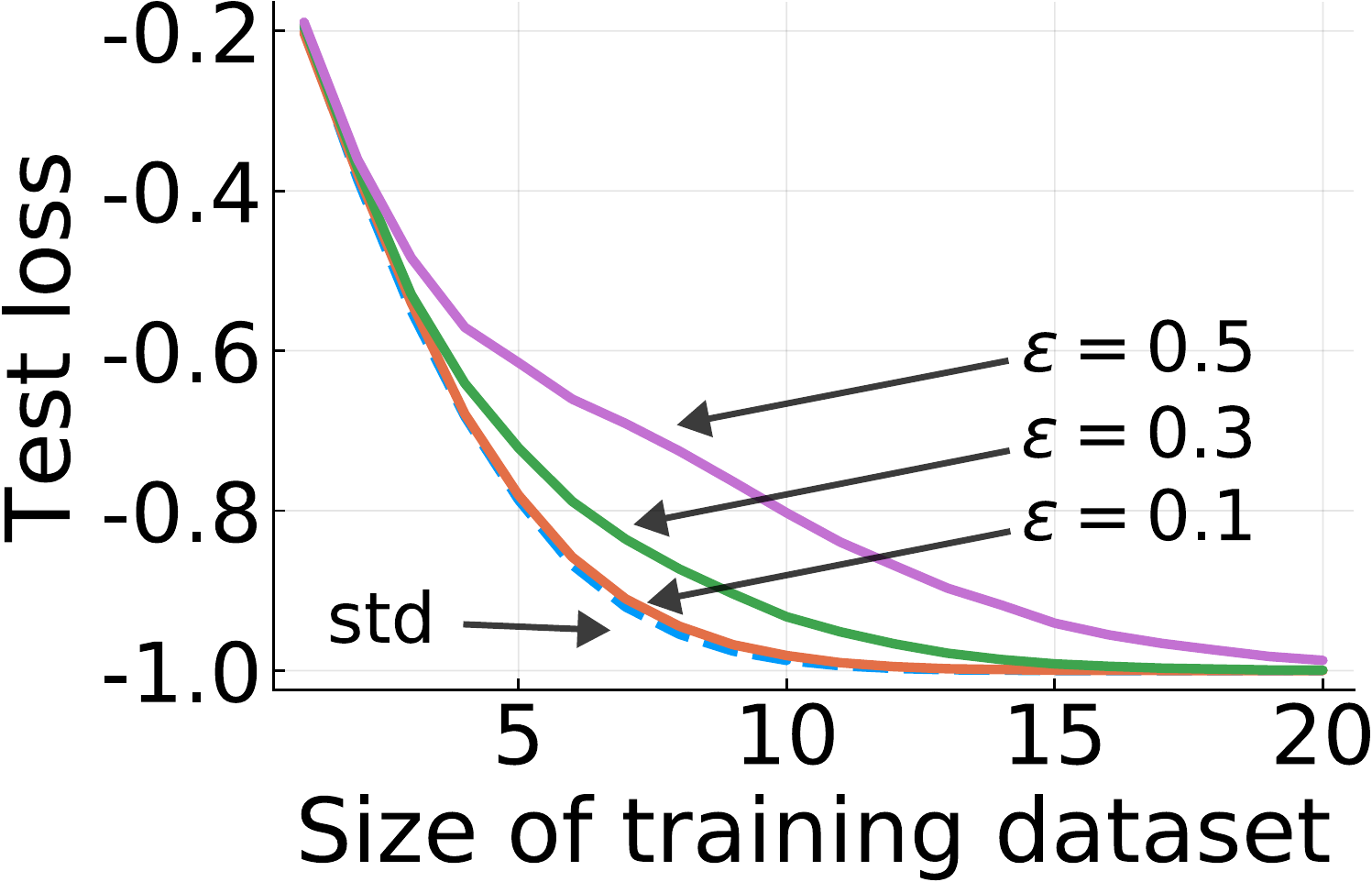}
		\caption{Weak adversary}
		\label{fig:weak_linear_loss}
	\end{subfigure}
	\begin{subfigure}[b]{0.33\textwidth}
		\includegraphics[width=\textwidth]{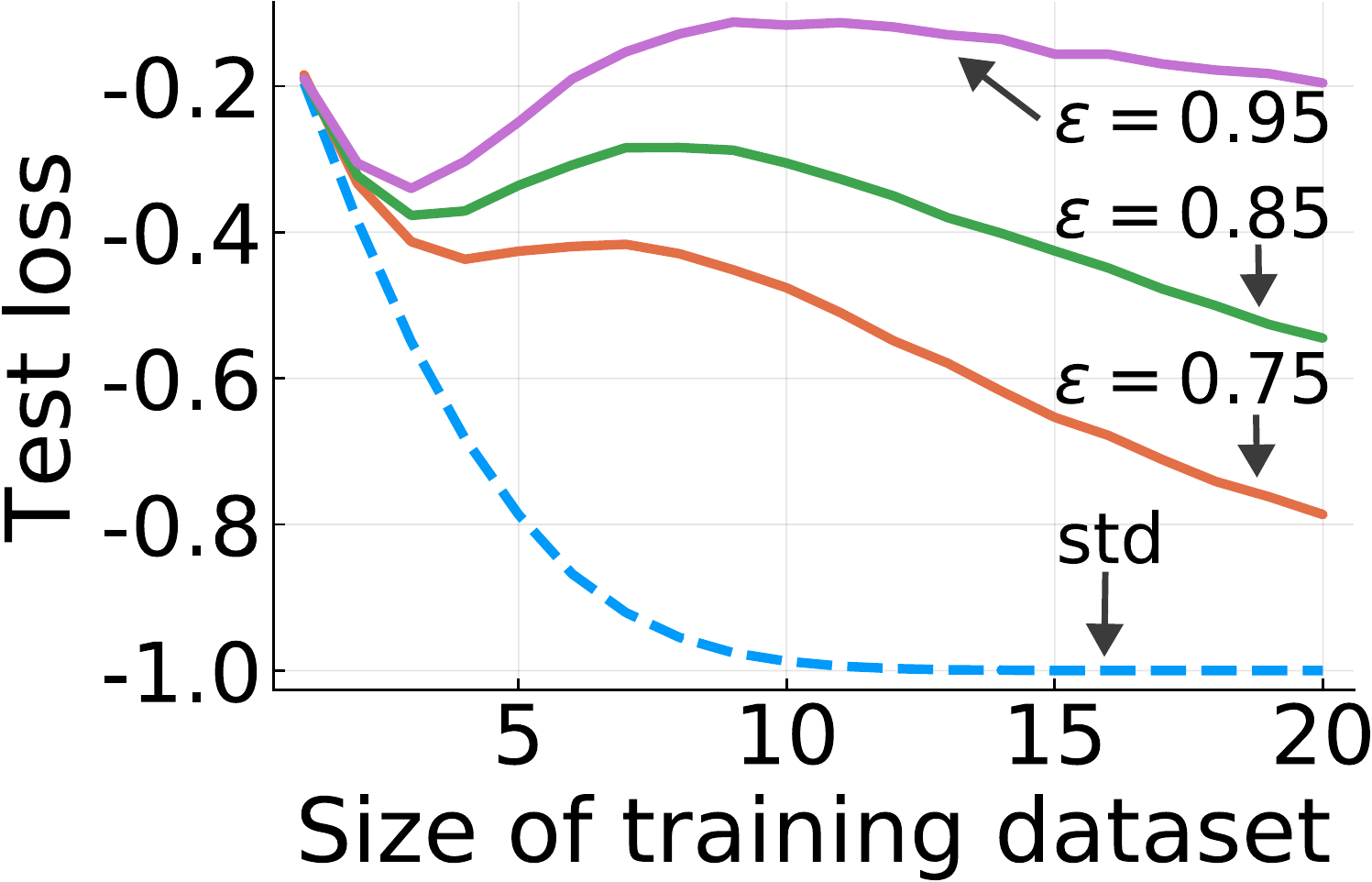}
		\caption{Medium adversary}
		\label{fig:medium_linear_loss}
	\end{subfigure}
	\begin{subfigure}[b]{0.33\textwidth}
		\includegraphics[width=\textwidth]{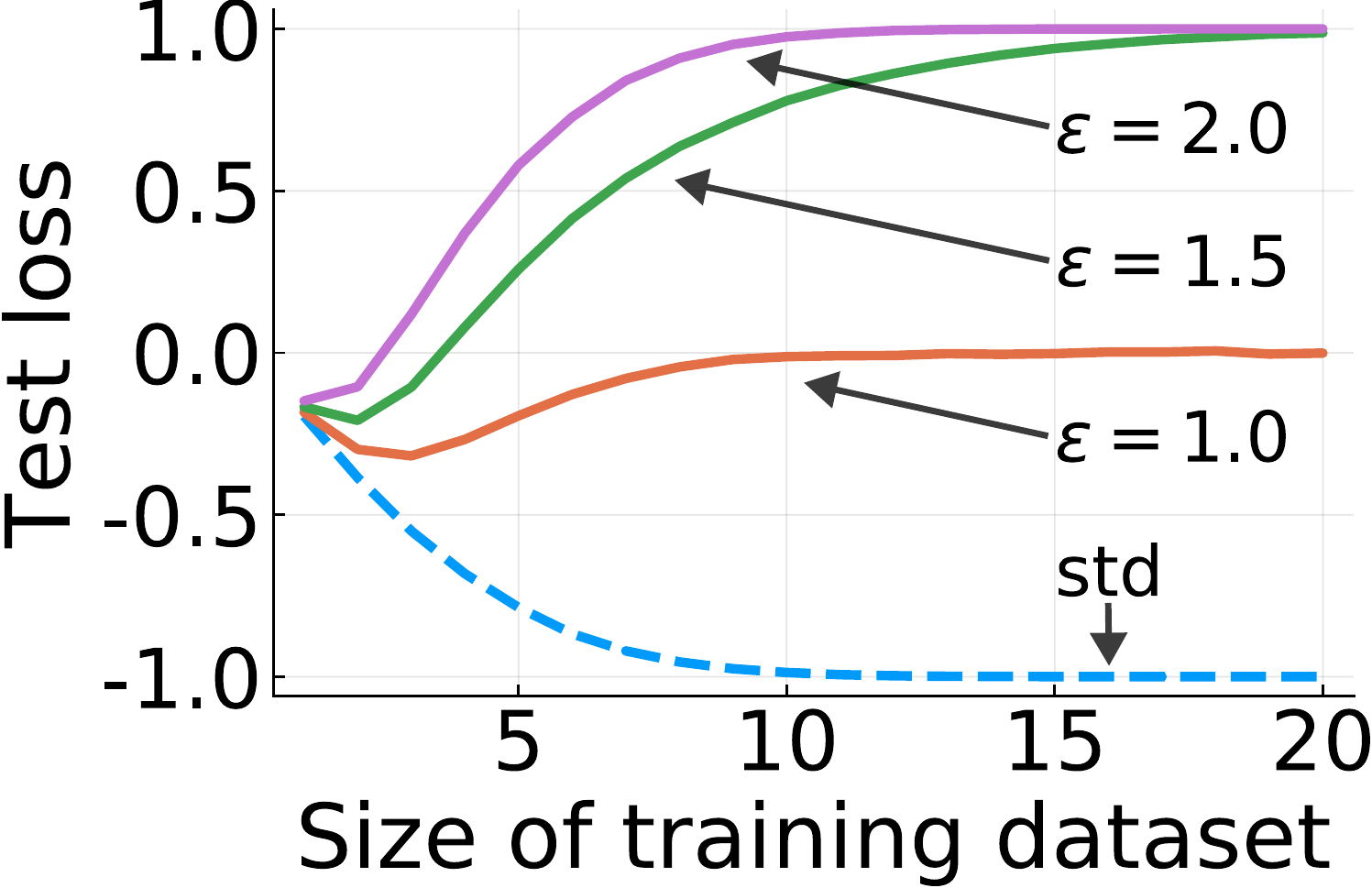}
		\caption{Strong adversary}
		\label{fig:strong_linear_loss}
	\end{subfigure}
	\caption{The test loss versus the size of the training dataset under the linear loss and the one-dimensional ($d=1$) Gaussian data generation model described in \cref{sec:linear_loss}. The parameters of the Gaussian data model are set as follows: $\mu_0=1$ and $\sigma_0=2$. In each plot, the solid curves correspond to robust models and the dashed curve corresponds to the standard model.}
	\label{fig:linear_loss}
\end{figure}

\cref{fig:linear_loss} illustrates the behavior of the generalization error in this simplified setting. In the simulation  we set the parameters as $d=1$, $ \mu_0=1 $ and $ \sigma_0=2 $ (for all three plots). \cref{fig:weak_linear_loss} shows the weak adversary regime. We see that the generalization error maintains a decreasing trend when $\eps$ is as large as half the signal strength. 
In \cref{fig:medium_linear_loss}, it is clear that the generalization error has a double descent curve. At first there is a decreasing stage, which is followed by an increasing stage. Also observe that as $\eps$ becomes larger, the error increases faster during the increasing stage. The error will finally start decreasing as the size of training dataset reaches the second decreasing stage. On the contrary, in the strong adversary regime, the increasing stage lasts forever and the error keeps increasing no matter how much data is provided, as illustrated by \cref{fig:strong_linear_loss}.

\subsection{Manhattan Model}\label{sec:manhattan}

In general, the 0-1 loss is mathematically intractable for most data models and computationally prohibitive to optimize in practice. With this in mind, we introduce a conceptual classification model that we call the Manhattan model. Note that this model is highly simplified and thus unlikely to be suitable for modeling real-world problems. Instead, the purpose of the Manhattan model is to allow a mathematical study of the 0-1 loss, and thus provide a springboard for the study of 0-1 loss in more complicated models.

\begin{figure}[htb]
    \begin{center}
        \includegraphics[width=0.5\linewidth]{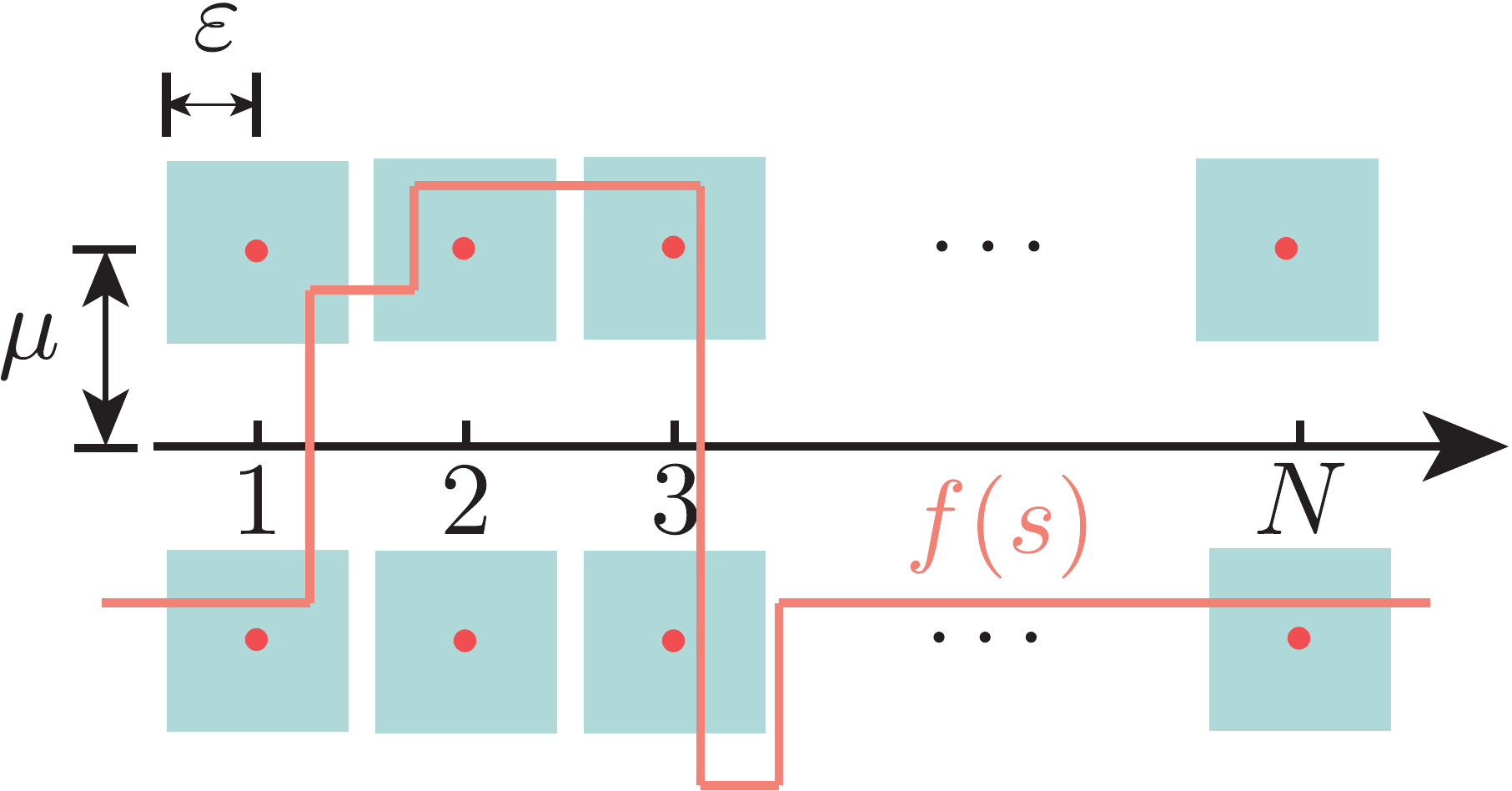}
    \end{center}
    \caption{An illustration of the Manhattan model. The data is in the $\bR^2$ plane and the support is 2$N$ points on opposite sides of the axis. The distance between any point and the axis is $\mu$, and the shaded square denotes the $\eps$ perturbation. The red curve shows a possible classifier.}
    \label{fig:tiling}
\end{figure}

We start by describing the data distribution. Assume we have data points $(x,y) \in \bR^2 \times \{\pm 1\}$, where the support of $x$ is given as $x=(s,t) \in \{ (i,y\mu)$ \ : \  $i\in [N], \ y \in \{ \pm 1 \} \}$, where $0 < \mu < 1/4$. In other words, every data point $(x,y)$ consists of a positive or negative label $y$ and a point on the 2-D plane $x = (s,t)$ where $s$ is an integer between 1 and $N$ and $t$ is either $\mu$ or $-\mu$ depending on whether the label $y$ is $+1$ or $-1$. Thus, the support consists of exactly $2N$ points with half in the positive class and half in the negative class. The data is uniformly sampled from these $2N$ points and this distribution is denoted by $\cD_{2N}$.

Next, we consider a conceptual classifier of the form of a step function over the 2-D $(s,t)$-plane. That is, a classifier is defined by a function $t = f(s)$ such that $f\in F$ where
\begin{equation}
    F = \left\{ f : \   f(s) = \sum_{j=1}^M \alpha_j \ind[s \in I_j] ,\ M \in \mathbb{N},\ \alpha_j \in \bR,\  I_j \subseteq \bR \ \textnormal{are intervals} \right\}.
\end{equation} A point $x=(s,t)$ is classified $+1$ if $t > f(s)$ and $-1$ if $t < f(s)$. If $t = f(s)$, then $x$ is classified as either $+1$ or $-1$ uniformly at random. \cref{fig:tiling} illustrates the support of the data distribution, as well as a possible classifier $f(s)$. 

Since we consider the 0-1 loss, one can note that for this data model, there can be infinitely many classifiers. For example, $f(s) = c$ can attain 100\% standard accuracy for all $c \in (-\mu, \mu)$. Therefore, we add an infinitesimal $\ell_1$ penalty to the 0-1 loss for the purpose of tie-break, i.e., making the minimizer unique. This penalized 0-1 loss of a classifier $f(\cdot)$ can then be written as $H\left(-y (t - f(s)) \right) + \lambda \|f\|_1$ with $\lambda\to 0$. For a given training set $\{ (x_i,y_i):i \in [n]\}$, We define the robust classifier over this training set as 
\begin{equation}\label{eq:frob_brick_wall}
    \frob_n \in \lim_{\lambda \to 0^+}\left[ S(\lambda)\triangleq \argmin_{f \in F} \sum_{i=1}^n \max_{\|\tilde{x}_i-x_i\|_\infty < \eps} H\left(-y_i (\tilde{t}_{i} - f(\tilde{s}_{i})) \right)  + \lambda \|f\|_1\right]\,,
\end{equation}
where $H(s) = \ind[s>0]+\frac{1}{2}\ind[s=0]$ is the Heaviside step function.
Note that the RHS of \cref{eq:frob_brick_wall} is the limit of a sequence of sets. This slight abuse of notation is justified by the following \cref{lem:frob_well_defined}, which shows for all sufficiently small $\lambda$, the set $S(\lambda)$ remains fixed. We define the set of candidate classifiers without the penalty as
\begin{equation*}
    S = \argmin_{f \in F} \sum_{i=1}^n \max_{\|\tilde{x}_i-x_i\|_\infty < \eps} H\left(-y_i (\tilde{t}_{i} - f(\tilde{s}_{i})) \right),
\end{equation*} and we have the following lemma. 
\begin{lemma}[Proof in \cref{sec:proof_frob_equivalent_definition}]\label{lem:frob_well_defined}
For all sufficiently small $\lambda>0$ and for any $\eps < 1/2$, the set $S(\lambda)$ defined by \cref{eq:frob_brick_wall} is equivalent to the following set which is nonempty
\begin{equation}
    \begin{split}
       S_2 &  =  S \cap \argmin_{f\in S} \|f\|_1.
    \end{split}
\end{equation}
\end{lemma}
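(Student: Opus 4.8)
The plan is to split the objective in \cref{eq:frob_brick_wall} into the unpenalized robust 0-1 loss and the $\ell_1$ penalty, exploiting that the former takes only finitely many values and depends on $f$ only through finitely many \emph{local} quantities. Write $R(f)=\sum_{i=1}^{n}\max_{\|\tilde x_i-x_i\|_\infty<\eps}H\bigl(-y_i(\tilde t_i-f(\tilde s_i))\bigr)$, so that $S=\argmin_{f\in F}R(f)$. First I would record three elementary facts: (i) each summand of $R(f)$ lies in $\{0,\tfrac12,1\}$, so $R(F)\subseteq\{0,\tfrac12,1,\dots,n\}$ is finite, the minimum $R^\ast:=\min_{f\in F}R(f)$ is attained, and $S\neq\emptyset$; (ii) the $i$-th summand depends on $f$ only through its restriction to the window $W_i:=(s_i-\eps,s_i+\eps)$, and since $t_i=y_i\mu$ and the perturbation box is open, it equals $1$ precisely when $\sup_{W_i}f>\mu-\eps$ (if $y_i=+1$) or $\inf_{W_i}f<\eps-\mu$ (if $y_i=-1$), and equals $0$ otherwise; (iii) because the $s_i$ are distinct integers and $\eps<1/2$, the windows $W_c$ for the distinct columns $c\in\{s_1,\dots,s_n\}$ are pairwise disjoint, so $R(f)=\sum_{c}R_c(f|_{W_c})$ where the summands depend on disjoint pieces of $f$ and each $R_c$ is determined by the pair $(\sup_{W_c}f,\inf_{W_c}f)$.

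The crux is to show that $S_2=S\cap\argmin_{f\in S}\|f\|_1$ is nonempty, i.e.\ that $m:=\inf_{f\in S}\|f\|_1$ is attained; this is the step I expect to be the main obstacle, because on $S$ the functional $\|\cdot\|_1$ has no gap above $m$ (one may add an arbitrarily small bump to any $f\in S$ on the complement of $\bigcup_c W_c$ without leaving $S$), and $F$ is not compact, so a naive minimizing-sequence argument fails. Instead I would use the decoupling from (iii). Replacing $f$ by $f\cdot\ind_{\,[1-\eps,\,N+\eps]}$ and then zeroing it on $[1-\eps,N+\eps]\setminus\bigcup_c W_c$ leaves $R(f)$ (hence membership in $S$) unchanged and does not increase $\|f\|_1$; in particular $m<\infty$, and minimizing $\|f\|_1=\sum_c\int_{W_c}|f|$ over $S$ reduces to independently solving, for each column $c$, the problem $\min\int_{W_c}|g|$ subject to $R_c(g)=\min_g R_c(g)$. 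By (ii) this constraint is a conjunction of bounds of the form ``$\sup_{W_c}g\le\mu-\eps$'' and/or ``$\inf_{W_c}g\ge\eps-\mu$'' (whenever the minimum of $R_c$ forces an error on one class the corresponding strict inequality is automatically implied by the bound coming from the other class, so it may be dropped), and under such closed bounds $\int_{W_c}|g|$ is minimized by the constant function equal to the metric projection of $0$ onto the feasible interval --- explicitly $g\equiv0$ when $\eps\le\mu$, and $g\equiv\mu-\eps$ or $g\equiv\eps-\mu$ (according to which class is over-represented at $c$, a tie giving two solutions) when $\eps>\mu$. Assembling these per-window constants and $0$ elsewhere produces an explicit element of $S$ with $\ell_1$-norm $m$, so $S_2\neq\emptyset$; in fact $S_2$ is a finite set.

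Finally I would deduce $S(\lambda)=S_2$ for all small $\lambda>0$ by direct comparison. Pick $f^\ast\in S_2$; then $R(f^\ast)+\lambda\|f^\ast\|_1=R^\ast+\lambda m$. For an arbitrary $f\in F$: if $R(f)=R^\ast$ then $f\in S$, so $R(f)+\lambda\|f\|_1=R^\ast+\lambda\|f\|_1\ge R^\ast+\lambda m$, with equality iff $\|f\|_1=m$, i.e.\ iff $f\in S_2$; if $R(f)>R^\ast$ then, since $R$ is $\tfrac12$-spaced, $R(f)+\lambda\|f\|_1\ge R^\ast+\tfrac12>R^\ast+\lambda m$ as soon as $\lambda m<\tfrac12$ (the inequality being vacuous when $m=0$). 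Hence for every $\lambda\in(0,\tfrac1{2m})$ the minimum of $R(\cdot)+\lambda\|\cdot\|_1$ equals $R^\ast+\lambda m$ and is attained exactly on $S_2$; since $m<\infty$ this interval of admissible $\lambda$ is nonempty, which is precisely the assertion of the lemma. The only points left to pin down carefully are the boundary bookkeeping in the $\sup/\inf$ characterization of $R_c$ in (ii) and the harmless fact that $\|\cdot\|_1$ identifies functions agreeing almost everywhere, both of which are routine since modifications on null sets change neither $R$ nor $\|\cdot\|_1$.
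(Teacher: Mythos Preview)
Your argument is correct and follows essentially the same route as the paper: exploit the disjointness of the perturbation windows $(c-\eps,c+\eps)$ (guaranteed by $\eps<1/2$) to reduce the problem to independent per-column optimizations, exhibit the explicit piecewise-constant minimizer, and then use the half-integer spacing of $R$ to conclude $S(\lambda)=S_2$ once $\lambda m<\tfrac12$. The paper carries out the same steps with a more concrete case analysis and bounds $m\le 2N\eps|\mu-\eps|$ explicitly rather than working with $m$ abstractly; your comparison argument at the end is in fact a bit cleaner than the paper's two-inclusion proof. One small slip: the remark that ``$S_2$ is a finite set'' is not quite right, since elements of $S_2$ may differ on null sets (and on columns absent from the training data) without affecting either $R$ or $\|\cdot\|_1$; this is harmless for the argument, as you yourself note at the end.
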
\cref{lem:frob_well_defined} shows that by picking a small enough $\lambda$, the minimizers with $\ell_1$ penalty actually coincide with the minimizers under 0-1 loss with the smallest $\ell_1$ norm. Therefore, $\lim_{\lambda \to 0^+} S(\lambda) = S_2$ and we can write $\frob \in S_2$ as an equivalent definition of the robust classifier to \cref{eq:frob_brick_wall}.

The generalization error of $f_n^{\text{rob}}$ is then given by
\begin{equation}\label{eq:generalization_err_brick}
	\begin{split}
	L_n 
	={} &
	\bE_{\{(x_i,y_i)\}_{i=1}^n\stackrel{\textnormal{i.i.d.}}{\sim} 
		\cD_{2N}} \left[ \bE_{(x,y)\sim \cD_{2N}} H \left(  -y\left( t - \frob_n(s) \right) \right)  \right]\,,
	\end{split}
\end{equation} where we can get rid of the $\ell_1$ term due to \cref{lem:frob_well_defined}. \cref{thm:brick-wall-test-loss} shows that the generalization error can be zero for all $n$ when $\eps<2\mu$ and can increase with $n$ as $\eps > 2\mu$.

\begin{theorem}[Proof in \cref{sec:proof_brick-wall-test-loss}]\label{thm:brick-wall-test-loss}
Assume the training data $(x_i , y_i) \sim \cD_{2N}$ where $i \in [n]$. For the robust 
classifier defined by \eqref{eq:frob_brick_wall} and its generalization 
error defined by \eqref{eq:generalization_err_brick}, we have
\begin{enumerate}[label=(\alph*),nosep]
    \item If  $ \ 0 < \eps < 2\mu $, then $L_n =0\ $ for all $n$. 
    \label{it:weak_regime_brick_wall}
    \item If $ \ 2\mu < \eps\leq 1/2$, then $L_{n+1} > L_n\ $ for all $n \geq 1$. 
    \label{it:strong_regime_brick_wall}
\end{enumerate}
\end{theorem}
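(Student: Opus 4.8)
The plan is to reduce \cref{thm:brick-wall-test-loss} to an exact per-coordinate description of $\frob_n$, followed by a standard occupancy (coupon-collector) computation. \textbf{Step 1 (per-point robust loss).} Because the perturbation ball in \eqref{eq:frob_brick_wall} is \emph{open}, the inner maximization over $\tilde x_i$ is easy to evaluate for a training point $(x_i,y_i)$ with $x_i=(s_i,y_i\mu)$: one can always push $\tilde t_i$ strictly past any fixed level, so the robust loss of a positive training point equals $\ind[\sup_{s\in(s_i-\eps,s_i+\eps)}f(s)>\mu-\eps]$, that of a negative point equals $\ind[\inf_{s\in(s_i-\eps,s_i+\eps)}f(s)<\eps-\mu]$, and the Heaviside value $\frac{1}{2}$ never arises. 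I would also record that, since $\eps\le\frac{1}{2}$, the intervals $(j-\eps,j+\eps)$ over distinct integers $j$ are pairwise disjoint and each contains no integer besides $j$, so the robust-loss constraints coming from training points at different $s$-coordinates decouple.

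\textbf{Step 2 (shape of $\frob_n$).} By \cref{lem:frob_well_defined}, $\frob_n$ minimizes the robust training loss and, among such minimizers, the $\ell_1$ norm. Call a coordinate \emph{conflicted} if the training set contains both a positive and a negative point there. Since $\eps>\mu$ forces $\mu-\eps<\eps-\mu$, no choice of $f$ near a conflicted coordinate protects both labels, whereas at a coordinate carrying only one label the loss can be driven to $0$ by setting $f\equiv\mu-\eps$ (resp.\ $\equiv\eps-\mu$) on its $\eps$-neighborhood; by the decoupling from Step 1 these local choices do not interfere. It follows that in every robust-loss minimizer, $f$ lies entirely $\le\mu-\eps$ or entirely $\ge\eps-\mu$ on the $\eps$-neighborhood of each drawn coordinate while being unconstrained near undrawn ones, and the $\ell_1$ tie-break pins the value down: every $f\in S_2$ has $\frob_n(j)\in\{\mu-\eps,\eps-\mu\}$ at each drawn $j\in[N]$ and $\frob_n(j)=0$ at each undrawn $j$. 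I expect this step to be the main obstacle — one must argue carefully that minimizing the primary (robust) objective genuinely forces ``protection'' of every drawn coordinate (this is exactly where the decoupling is needed, so that protecting one coordinate never creates a conflict at another), and that the leftover ambiguity (which of the two labels is protected at a conflicted coordinate, plus null-set ambiguity in $f$) does not affect the clean test error, so that $L_n$ is well defined.

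\textbf{Step 3 (conclusion).} For part \ref{it:weak_regime_brick_wall}, the hypothesis $\eps<2\mu$ gives $\mu-\eps,\eps-\mu\in(-\mu,\mu)$ (and the sub-case $\eps\le\mu$ is even easier, since $f\equiv0$ is already optimal), so $\frob_n(j)\in(-\mu,\mu)$ for all $j$; hence $\frob_n$ classifies the entire support correctly on clean data and $L_n=0$. For part \ref{it:strong_regime_brick_wall}, $2\mu<\eps$ flips the signs: $\mu-\eps<-\mu$ and $\eps-\mu>\mu$, so at each drawn coordinate exactly one of its two clean test points is misclassified, while at each undrawn coordinate both are correct. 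Conditioning on the training set, the clean test error is therefore $D_n/(2N)$, where $D_n$ is the number of distinct $s$-coordinates that appear. Since the $s$-coordinate of each of the $n$ i.i.d.\ draws is uniform on $[N]$, we get $\bE[D_n]=N(1-(1-1/N)^n)$, so $L_n=\frac{1}{2}(1-(1-1/N)^n)$, which is strictly increasing in $n$ whenever $N\ge2$; in particular $L_{n+1}>L_n$ for all $n\ge1$.
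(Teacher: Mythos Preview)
Your proposal is correct and follows essentially the same route as the paper: invoke \cref{lem:frob_well_defined} to pin down the per-coordinate form of $\frob_n$, then read off the clean test error as an occupancy count over the $s$-coordinates. Your Step~3 is in fact slightly sharper than the paper's, since you compute the closed form $L_n=\tfrac{1}{2}\bigl(1-(1-1/N)^n\bigr)$ (and correctly note the $N\ge 2$ caveat), whereas the paper only argues qualitatively that $\bE Z$ is decreasing.
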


Again, the purpose of the Manhattan model is not to model any real-world problems, but instead to show that adversarial training under a 0-1 loss can also be characterized with weak/strong regimes. More generally, we have now shown that the existence of weak/strong regimes is not solely an artifact of the linear loss used in \cref{sec:linear_loss}, and thus that it may not be surprising to see analogous results for a much broader class of loss functions.

\section{Empirical Results}\label{sec:experiment}

In this section, we empirically study the generalization error of robust models in three settings. 

\subsection{Gaussian Mixture with 0-1 Loss}\label{sec:experiment_gaussian-0-1}

We consider the 1-dimensional Gaussian mixture model with 0-1 loss. The data generation is the same as in \cref{sec:linear_loss} with $d=1$. Here we set $\Theta=\bR$ and the classifier is represented by a real number $ w\in \bR $. That is, a point is classified as positive or negative depending on whether $x$ is greater than or less than $w$. If $x=w$, it is uniformly randomly classified as positive or negative. Given a data point $ (x,y) $, the 0-1 loss of classifier $ w $ is given by $ \ell(x,y;w) = \ind[y(x-w)<0] $. 

We remark that under this setting, the robust classifier is not unique and the set of classifiers is an interval (details in \cref{sec:minimizer_is_interval}). Thus to select a classifier, we consider two tiebreaking methods. One is the agnostic tiebreak, which means the classifier is chosen uniformly at random from the interval. The other is the optimal tiebreak in hindsight, referring to picking the classifier from the interval with the smallest expected test loss. The test loss of a classifier $w$ is given by 
\begin{equation}\label{eq:test-loss-under-0-1-loss}
	\bE_{(x,y)\sim \dgau}[\ind[y(x-w)<0]] = \frac{1}{2} + \frac{1}{2}\left( 
	\Phi\left(\frac{w-\mu}{\sigma}\right) - \Phi\left( \frac{w+\mu}{\sigma} 
	\right) \right) \,,
\end{equation}where $\Phi$ is the CDF of the standard normal distribution. In \cref{sec:0-1-test loss}, we explain that the optimal classifier in hindsight is the one that is closest to $0$ among the interval of classifiers.

\begin{figure*}
	\centering
	\begin{subfigure}[b]{0.23\textwidth}
		\includegraphics[width=\textwidth]{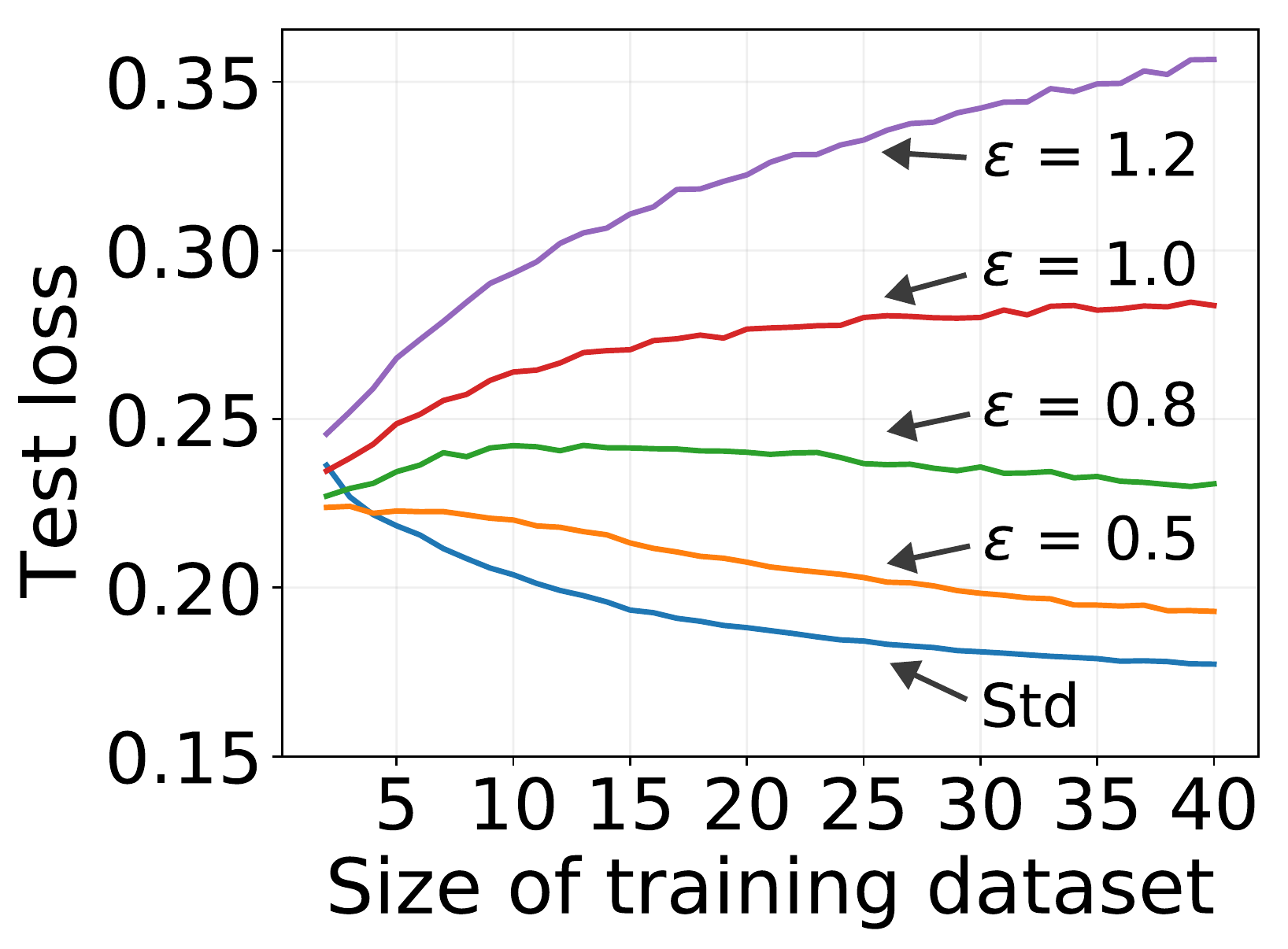}
		\caption{Agnostic tiebreak}\label{fig:bin_agnostic}
	\end{subfigure}
	\begin{subfigure}[b]{0.23\textwidth}
		\includegraphics[width=\textwidth]{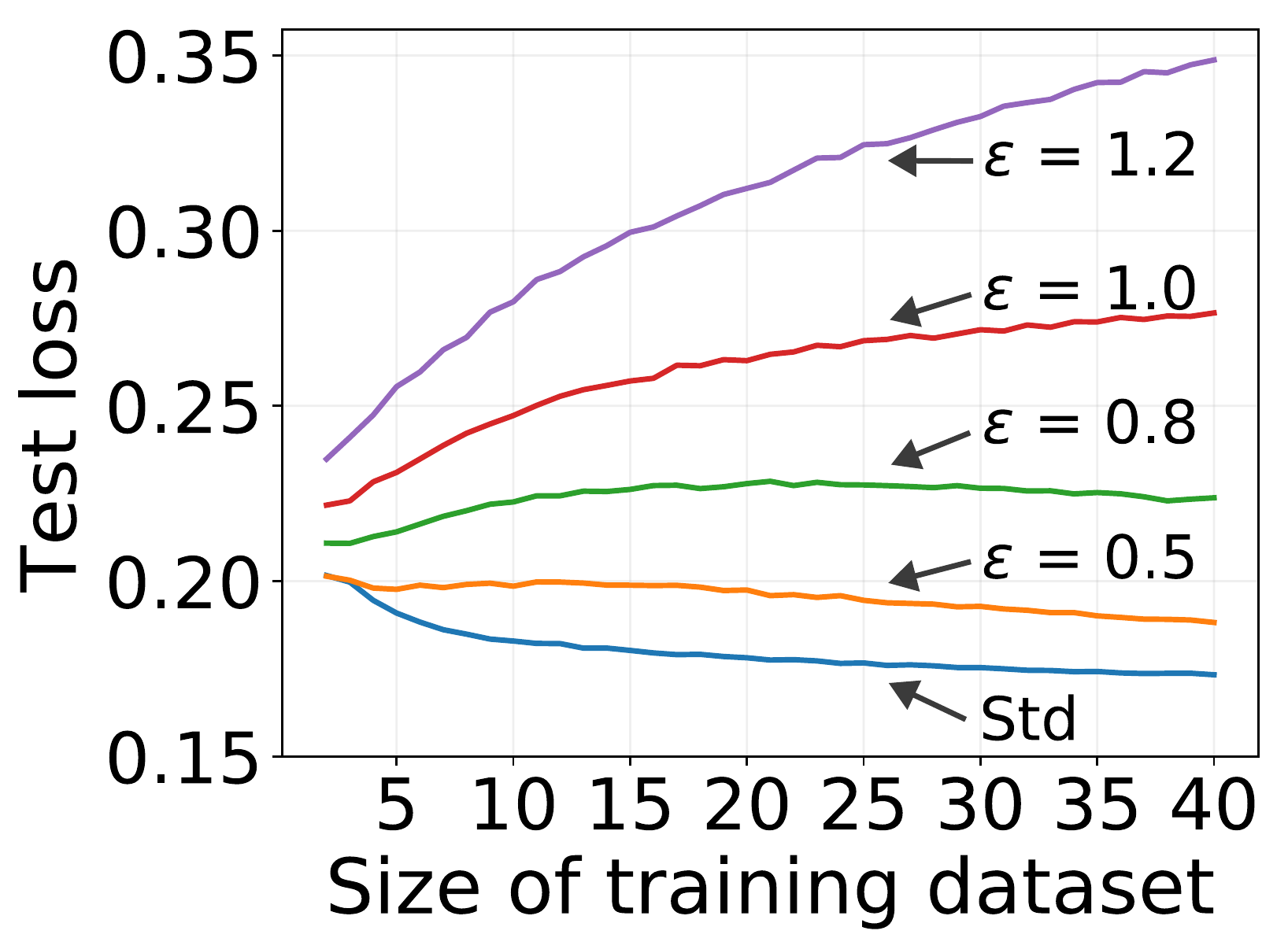}
		\caption{Optimal tiebreak}\label{fig:bin_hindsight}
	\end{subfigure}
	\begin{subfigure}[b]{0.247\textwidth}
		\includegraphics[width=\textwidth]{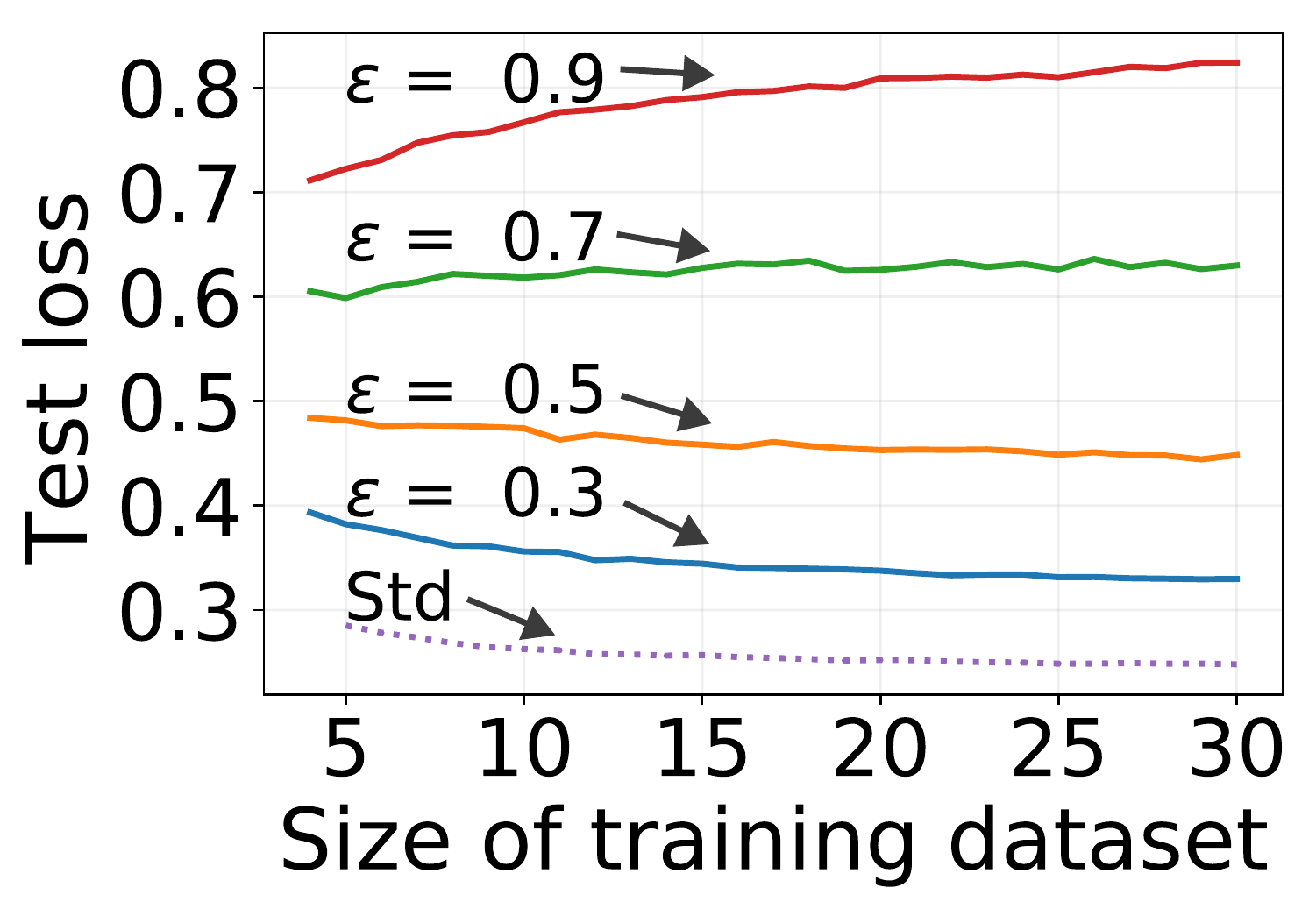}
		\caption{SVM (small $\eps$)}
		\label{fig:weak_svm_hinge_loss}
	\end{subfigure}
	\begin{subfigure}[b]{0.254\textwidth}
		\includegraphics[width=\textwidth]{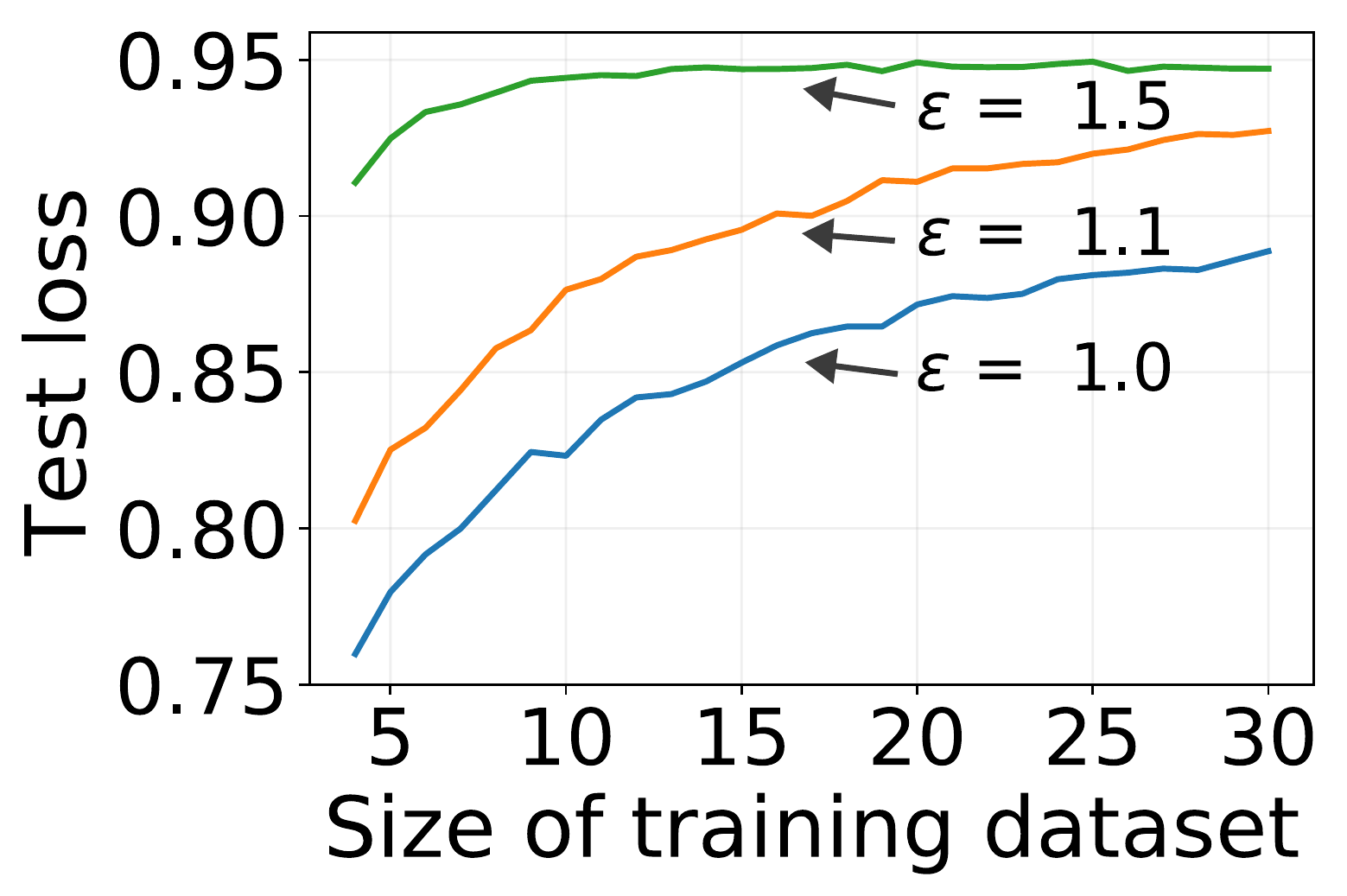}
		\caption{SVM (large $\eps$)}
		\label{fig:strong_svm_hinge_loss}
	\end{subfigure}
	\caption{\cref{fig:bin_agnostic,fig:bin_hindsight} present the test loss vs.\ the size of the training dataset for Gaussian mixture in the 0-1
		loss setting described in \cref{sec:experiment_gaussian-0-1}.  \cref{fig:weak_svm_hinge_loss,fig:strong_svm_hinge_loss} illustrate the test loss vs.\ the size of the training dataset for the support vector machine model described in \cref{sec:experiment_svm}.}
\end{figure*}

\cref{fig:bin_agnostic} and \cref{fig:bin_hindsight} illustrate the test loss versus the size of the training dataset under the agnostic tiebreak and the optimal tiebreak in hindsight. We set $ \mu=\sigma=1 $ and use the same set of values for $\eps$ for both tiebreaking methods. We have three observations. First, the generalization error is increasing in $n$ when $\eps$ is larger than the signal strength. This confirms the existence of the strong adversary regime under the 0-1 loss. Second, for small enough $\eps$ (e.g.\, $\eps \leq 0.5$), the generalization error is decreasing in $n$ (more precisely after $n=3$), thus also confirming a weak adversary regime. For the medium adversary where $\eps$ is in between $0.7$ and $1.0$, the curve has an increasing stage followed by a decreasing stage, which is very similar to what we see in \cref{fig:medium_linear_loss}.

\subsection{Support Vector Machine}\label{sec:experiment_svm}

We study the soft-margin support vector machine with hinge loss (details in \cref{sec:svm-detail}). The dimension $d$ equals 2 and the data is generated as $y \sim \unif(\{\pm 1\})$ and $X \sim \cN(y \mu , I)$ where $\mu = (1,1)^\top$. The results are shown in \cref{fig:weak_svm_hinge_loss} and \cref{fig:strong_svm_hinge_loss}. We find that for small $\eps$ the standard test loss keeps decreasing, while for large $\eps$ it keeps increasing. The curves reveal a transition from the weak to the strong regime as $\eps$ grows, and such transition occurs when $\eps$ is in between 0.5 and 0.7. Note that at $\eps = 0.7$, the test loss increases even though the strength of the adversary is still weaker than the signal level. This may indicate that for more complicated models (such as SVMs), even relatively weaker adversaries can result in situations where more data always increases the test loss.

\subsection{Linear Regression}\label{sec:experiment_lr}

\begin{figure}[thb]
    \centering
	\begin{subfigure}[b]{0.239\textwidth}
		\includegraphics[width=\textwidth]{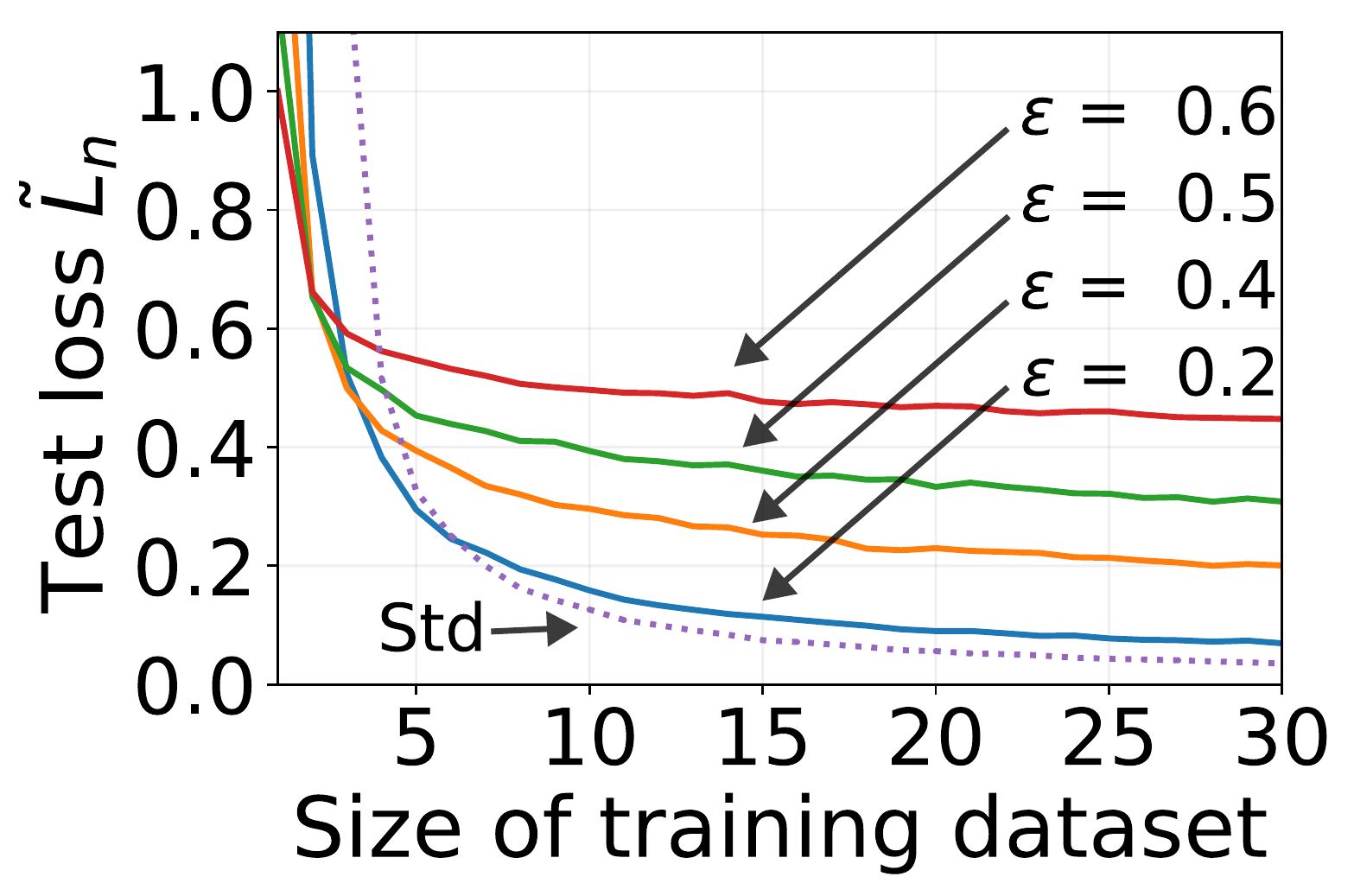}
		\caption{Gaussian, small $\eps$}
		\label{fig:lr_gaussian_weak}
	\end{subfigure}
	\begin{subfigure}[b]{0.239\textwidth}
		\includegraphics[width=\textwidth]{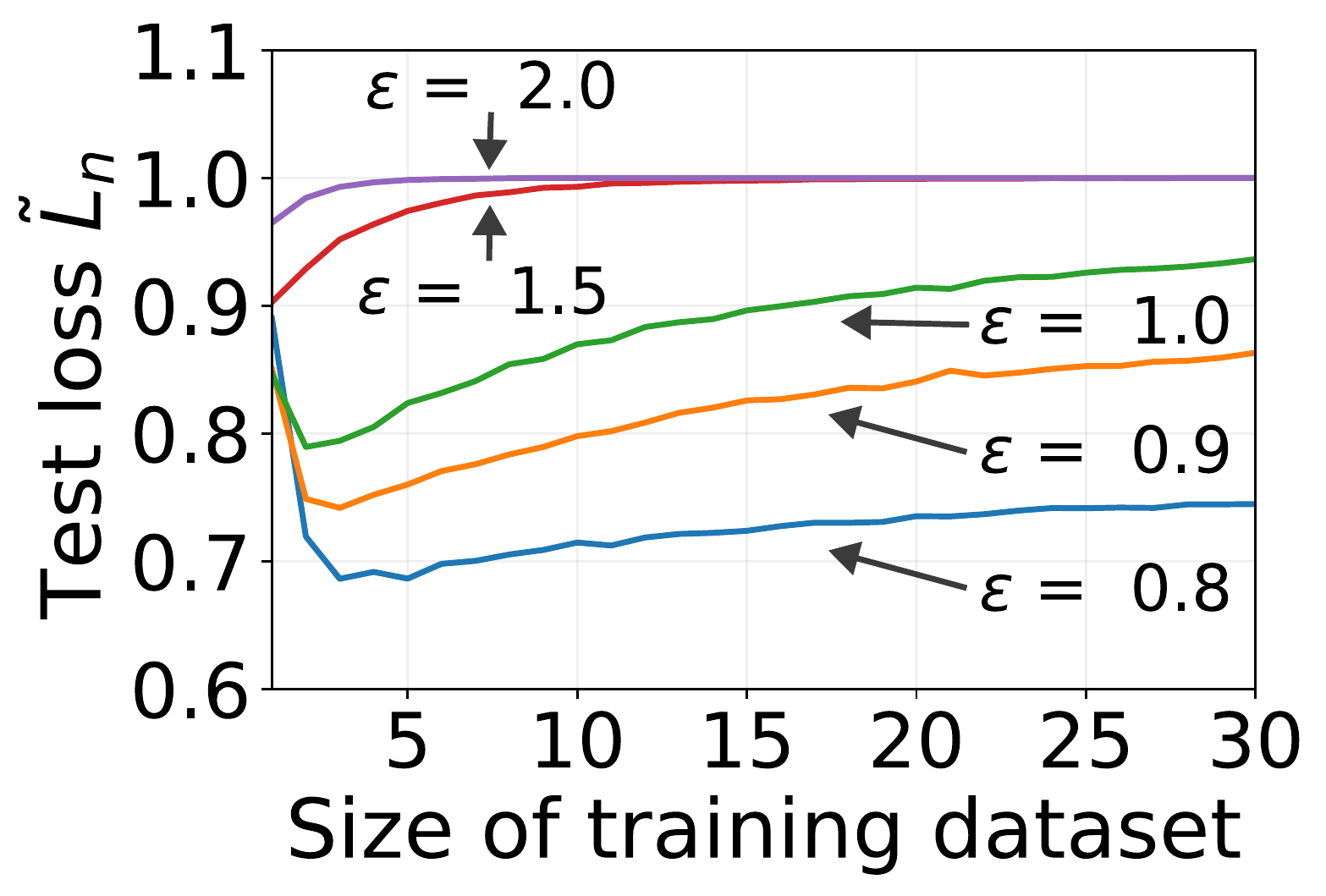}
		\caption{Gaussian, large $\eps$}
		\label{fig:lr_gaussian_strong}
	\end{subfigure}
	\begin{subfigure}[b]{0.248\textwidth}
		\includegraphics[width=\textwidth]{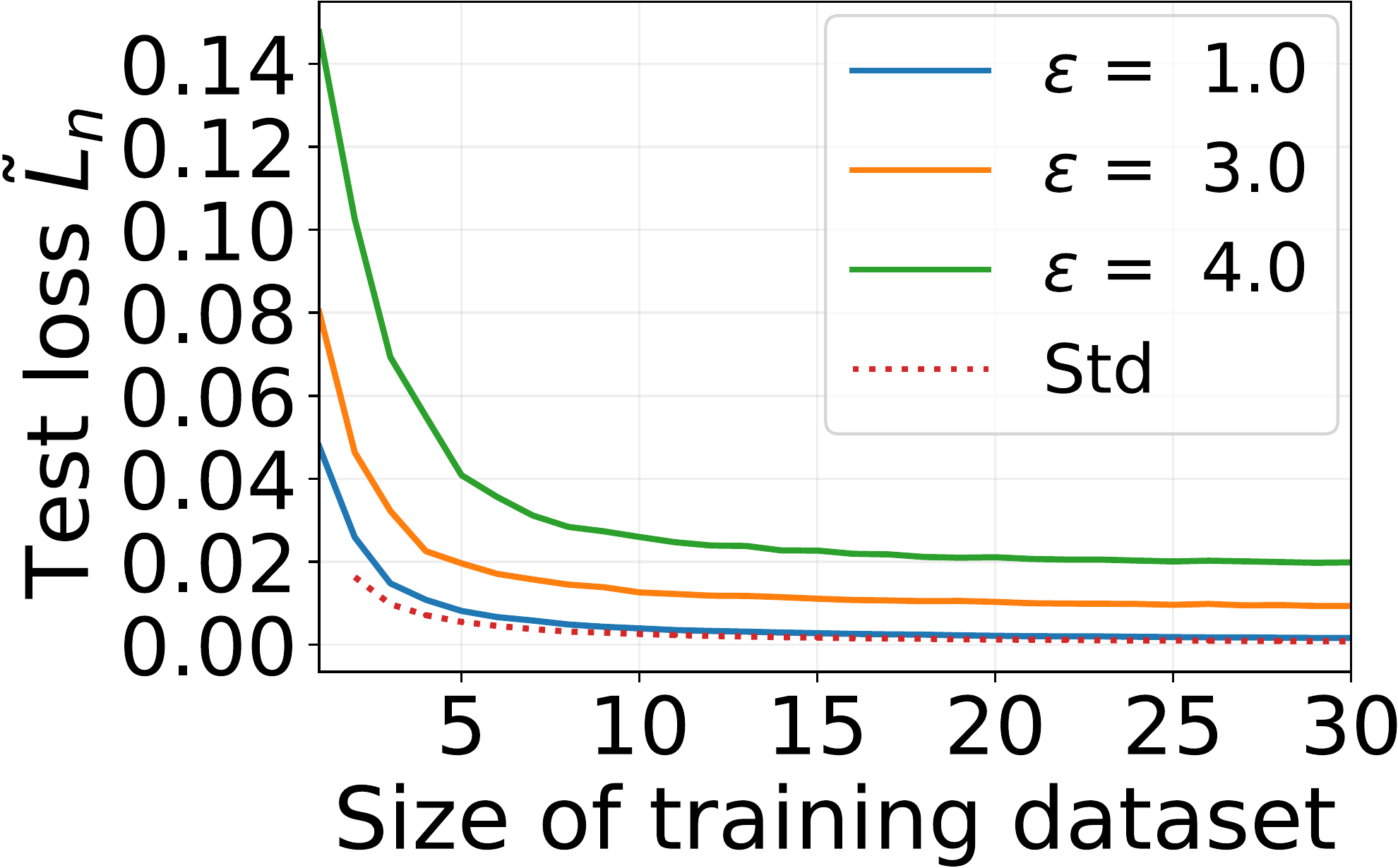}
		\caption{Poisson, small $\eps$}
		\label{fig:lr_poisson_weak}
	\end{subfigure}
	\begin{subfigure}[b]{0.252\textwidth}
		\includegraphics[width=\textwidth]{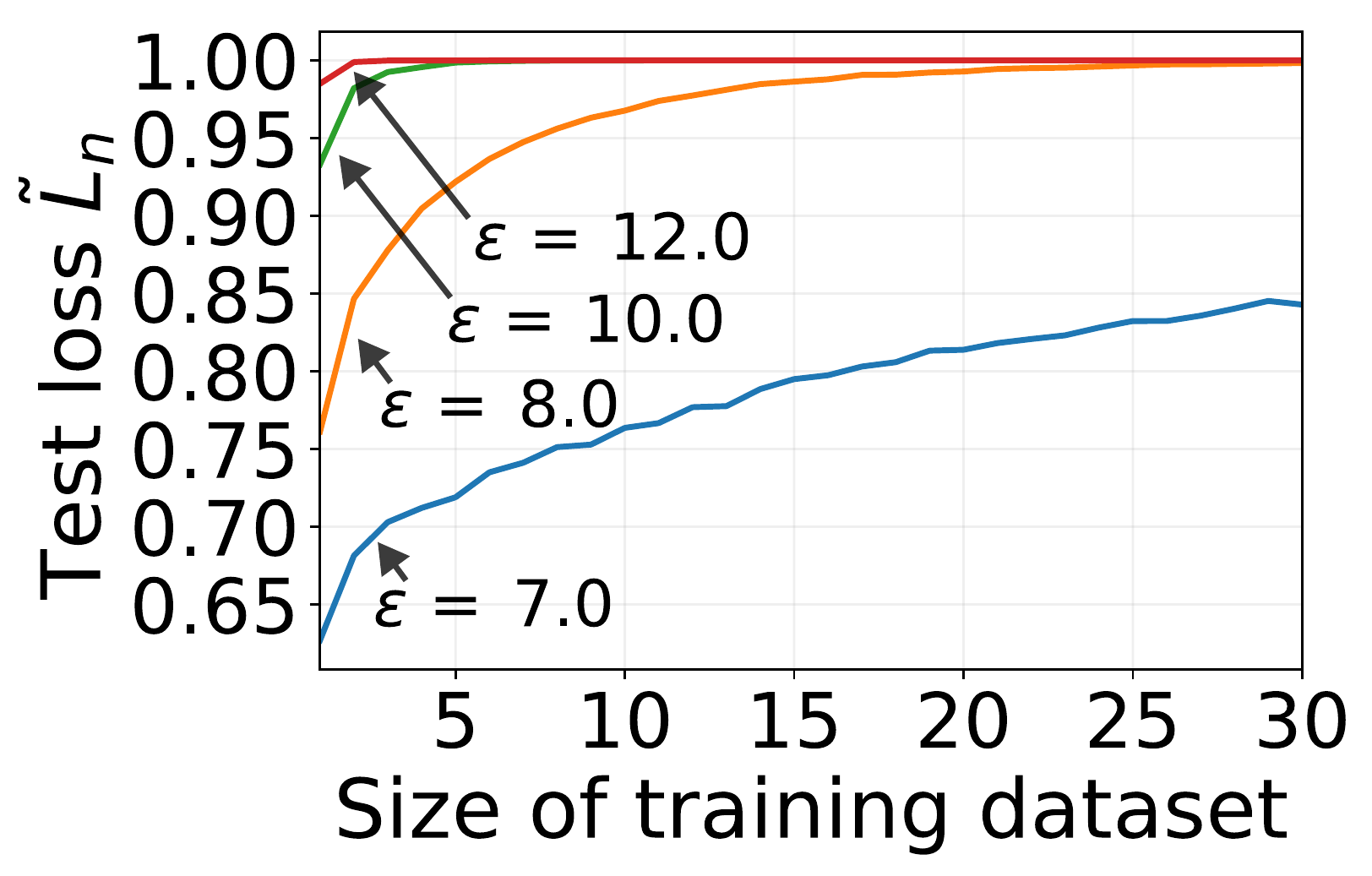}
		\caption{Poisson, large $\eps$}
		\label{fig:lr_poisson_strong}
	\end{subfigure}
	\caption{The test loss versus the size of the training dataset under 1-dim linear regression model with the squared loss. The data generation follows either a Gaussian or Poisson distribution: in \cref{fig:lr_gaussian_weak} and \cref{fig:lr_gaussian_strong}, $x \sim \cN(0,1)$; in \cref{fig:lr_poisson_weak} and \cref{fig:lr_poisson_strong}, $x \sim \text{Poisson}(5)+1$. The solid curves correspond to robust models and the dashed curve corresponds to the standard model. Here $\Tilde{L}_n = (L_n - \bE e^2)/ \bE x^2$ is the scaled test loss.} 
	\label{fig:lr}
\end{figure}

Besides classification problems, we also identify similar phenomenon in one-dimensional linear regression $y =  w^* x + e$ where $e \sim \cN(0,1)$ with squared loss $ \ell(x,y;w)  = (y- w x )^2 $. \cref{fig:lr} shows experimental results for linear regression. Given the coefficient $w$ trained on $n$ data points, the test loss is $
    L_n = \bE_{x,y}[(y-wx)^2] = \bE_{x,e}[(w^*x+e-wx)^2]
    = \bE((w^*-w)x)^2+ \bE e^2
    = (w-w^*)^2\bE x^2 + \bE e^2$.
Therefore, we report the scaled test loss $\Tilde{L}_n = (L_n - \bE e^2)/ \bE x^2 = (w-w^*)^2$ in \cref{fig:lr}. We use two different distributions for $x$: the Gaussian distribution $\cN(0,1)$ and the shifted Poisson distribution $\pois(5)+1$. We add $1$ to the outcome of $\pois(5)$ in order to guarantee a nonzero $x$. In both Gaussian and Poisson cases, we observe weak and strong regimes. When the perturbation strength $\eps$ is less than a threshold, it falls into the weak regime where the (scaled) test loss is reduced with more training data. When $\eps$ exceeds the threshold, it exhibits the strong regime where more data hurts the (scaled) test loss. However, the threshold is remarkably different for these two distributions. The threshold for $\cN(0,1)$ is between $0.6$ and $0.8$, while it resides between $4.0$ and $7.0$ for $\pois(5)+1$. This observation suggests that the Poisson data distribution appears to be more robust to adversarial perturbation. A possible explanation could be that the distribution $\pois(5)+1$ is supported on positive integers so the minimum distance between data points is $1$, while there is no such minimum distance for data points following $\cN(0,1)$ and it becomes increasingly crowded as we have more data. As a result, adversarial perturbation has a stronger influence on Gaussian data.  

\section{Conclusion}\label{sec:conclusion}

The goal of adversarial training is to produce robust models that provide protection against attacks that make perturbations to the data at test time. While protection against such attacks is undoubtedly important, we still want our robust models to perform well on unperturbed data. However, our results indicate that there are scenarios in which it is impossible for current approaches to achieve low generalization error on both datasets simultaneously. This is in direct contradiction to one of the primary tenets of machine learning, which is that more data should help us learn better. Our findings suggest that the current adversarial training framework may not be ideal and that fundamentally new ideas may be required to develop models that can reliably perform well on both perturbed and unperturbed test sets.

\section*{Acknowledgements}
We would like to thank Peter~Bartlett and Yiping~Lu for helpful comments and thank Marko~Mitrovic for his help in preparation of the paper.

\bibliography{reference}
\bibliographystyle{plainnat}

\newpage

\appendix

\section{Proof of \cref{thm:gaussian} and \cref{cor:gaussian}}\label{sec:proof_linear_loss}
Before proving \cref{thm:gaussian}, we need to establish several lemmas. First we restate the result by \citet{chen2020more} that gives the closed form solution for the robust classifier. 
\begin{prop}[Lemma~10 in \citet{chen2020more}]\label{prop:wrob}
Given $n$ training data points $\{ (x_i, y_i) \}_{i=1}^n \subset \mathbb{R}^d 
\times \{ \pm 1 \}$ and $\eps >0$, if the robust classifier is defined as 
\eqref{eq:wrob_linear_loss}, then we have $\wrob_n = W \sign (u - \eps 
\sign(u))$, where $u = \frac{1}{n} \sum_{i=1}^n y_i x_i$. 
\end{prop}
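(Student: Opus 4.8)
The plan is to collapse the bilevel min--max in \eqref{eq:wrob_linear_loss} into a separable one-dimensional optimization. First I would fix $w$ and solve the inner adversary problem. Writing $\tilde x_i = x_i + \delta_i$ with $\|\delta_i\|_\infty \le \eps$, the inner objective $y_i\langle w,\tilde x_i\rangle = y_i\langle w,x_i\rangle + \sum_{j=1}^d y_i\,w(j)\,\delta_i(j)$ is linear in $\delta_i$ over the $\ell^\infty$ ball $\|\delta_i\|_\infty\le\eps$, so its minimum is attained coordinatewise at $\delta_i(j) = -\eps\,\sign(y_i\,w(j))$; using $y_i\in\{\pm1\}$ this gives
\begin{equation*}
\min_{\tilde x_i\in B^\infty_{x_i}(\eps)} y_i\langle w,\tilde x_i\rangle \;=\; y_i\langle w,x_i\rangle - \eps\|w\|_1 ,
\end{equation*}
a value independent of $i$ and of $y_i$. (The equality of the two forms in \eqref{eq:wrob_linear_loss} needs no minimax theorem, only $\max_{\tilde x_i}\big(-y_i\langle w,\tilde x_i\rangle\big) = -\min_{\tilde x_i} y_i\langle w,\tilde x_i\rangle$.)

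Summing over the $n$ points and writing $u=\tfrac1n\sum_{i=1}^n y_i x_i$, the outer problem becomes
\begin{equation*}
\wrob_n \in \argmax_{\|w\|_\infty\le W}\Big(\langle w,u\rangle - \eps\|w\|_1\Big) \;=\; \argmax_{\|w\|_\infty\le W}\sum_{j=1}^d\big(w(j)\,u(j) - \eps\,|w(j)|\big),
\end{equation*}
up to the positive factor $n$. The key point is that both the objective and the constraint $\|w\|_\infty\le W$ separate across coordinates, so it suffices to solve, for each $j\in[d]$, the scalar problem $\max_{|a|\le W}\big(a\,u(j) - \eps\,|a|\big)$. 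This objective is piecewise linear with a kink at the origin (right slope $u(j)-\eps$, left slope $u(j)+\eps$), so a short case split on $\sign(u(j))$ and on $|u(j)|$ versus $\eps$ gives the maximizer $a^\star = W\,\sign(u(j))$ when $|u(j)|>\eps$, $a^\star = 0$ when $|u(j)|<\eps$, and an arbitrary point of the appropriate closed segment when $|u(j)|=\eps$.

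Finally I would reassemble the coordinatewise maximizers, obtaining $\wrob_n(j) = W\,\sign\!\big(u(j) - \eps\,\sign(u(j))\big)$ once the inner shrinkage $v\mapsto v - \eps\,\sign(v)$ is read as soft-thresholding (shrink toward $0$ without overshooting), so that $\sign\!\big(u(j)-\eps\,\sign(u(j))\big) = \sign(u(j))\,\ind[\,|u(j)|>\eps\,]$; this is exactly the stated closed form $\wrob_n = W\sign(u-\eps\sign(u))$. The derivation is essentially mechanical, so there is no single hard step; the only places needing care are (i) verifying that the inner minimum is the same for every training point, so that the sum collapses to $n\big(\langle w,u\rangle - \eps\|w\|_1\big)$, and (ii) the boundary case $|u(j)|=\eps$, where $\wrob_n$ fails to be unique and the statement must be read through the $\sign$ convention rather than as a strict equality. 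I expect (ii) to be the subtlest point, though it remains routine.
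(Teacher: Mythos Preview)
The paper does not supply its own proof of this proposition; it is quoted as Lemma~10 of \citet{chen2020more} and used as a black box. Your derivation---solve the inner $\ell^\infty$ adversary to get $y_i\langle w,x_i\rangle - \eps\|w\|_1$, sum to $n(\langle w,u\rangle - \eps\|w\|_1)$, then separate across coordinates and solve the scalar problem $\max_{|a|\le W}(a\,u(j)-\eps|a|)$---is correct and is the standard route.

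The one step that deserves a flag is your final identification. Your coordinate analysis correctly gives $a^\star = W\sign(u(j))$ when $|u(j)|>\eps$ and $a^\star = 0$ when $|u(j)|<\eps$. You then match this to the stated formula by \emph{reinterpreting} $v\mapsto v-\eps\sign(v)$ as soft-thresholding. But read literally, for $0<u(j)<\eps$ one has $u(j)-\eps\sign(u(j))=u(j)-\eps<0$, so $W\sign(\cdot)=-W$, not $0$; the literal formula and your (correct) optimizer disagree on the whole region $0<|u(j)|<\eps$. The paper itself adopts the \emph{literal} reading downstream: the four-term probability decomposition in the proof of \cref{lem:rob_loss} assigns $\sign=\mp 1$ (not $0$) on $(0,\eps)$ and $(-\eps,0)$. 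So your argument establishes the true argmax, but it does not actually prove the formula as the paper states and subsequently uses it; the mismatch lives in the statement rather than in your reasoning, and your soft-thresholding gloss hides rather than resolves it.
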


First, 
we define the error function~\citep{andrews1998special} $\erf(\cdot):  \bR \to \bR$ by 
\begin{equation}\label{eq:erf}
    \erf(x) = \frac{2}{\sqrt{\pi}} \int^x_0 e^{-t^2} \ dt \,,
\end{equation}
and it has the following property.

\begin{lemma}\label{lem:erf}
If $z \sim \cN (0,1)$, we have 
\begin{equation*}
    \mathbb{P}(z < x) = \frac{1}{2} \left[  1 + \erf \left( \frac{x}{\sqrt{2}} \right) \right] \,.
\end{equation*}
\end{lemma}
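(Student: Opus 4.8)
The plan is to start directly from the definition of the standard normal CDF and reduce it to the defining integral of $\erf$ by an elementary change of variables. Write
\begin{equation*}
    \mathbb{P}(z < x) = \frac{1}{\sqrt{2\pi}} \int_{-\infty}^x e^{-u^2/2} \, du = \frac{1}{\sqrt{2\pi}} \int_{-\infty}^0 e^{-u^2/2} \, du + \frac{1}{\sqrt{2\pi}} \int_0^x e^{-u^2/2} \, du \,.
\end{equation*}
The first summand equals $1/2$: by the symmetry $u \mapsto -u$ it equals $\frac{1}{\sqrt{2\pi}} \int_0^\infty e^{-u^2/2}\,du$, which is half of the total mass $\frac{1}{\sqrt{2\pi}}\int_{-\infty}^\infty e^{-u^2/2}\,du = 1$ (equivalently, $\mathbb{P}(z<0)=1/2$ by symmetry of the density).

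For the second summand, substitute $t = u/\sqrt{2}$, so that $du = \sqrt{2}\,dt$ and the endpoints $0$ and $x$ become $0$ and $x/\sqrt{2}$. This gives
\begin{equation*}
    \frac{1}{\sqrt{2\pi}} \int_0^x e^{-u^2/2} \, du = \frac{\sqrt{2}}{\sqrt{2\pi}} \int_0^{x/\sqrt{2}} e^{-t^2} \, dt = \frac{1}{\sqrt{\pi}} \int_0^{x/\sqrt{2}} e^{-t^2} \, dt = \frac{1}{2} \erf\!\left( \frac{x}{\sqrt{2}} \right) \,,
\end{equation*}
where the last equality is precisely the definition \eqref{eq:erf}. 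Adding the two pieces yields $\mathbb{P}(z<x) = \frac12 + \frac12 \erf(x/\sqrt{2})$, as claimed. The same computation is valid for $x<0$ without change, since the substitution does not depend on the sign of $x$ and $\erf$ is odd.

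There is essentially no obstacle here: the argument uses only one-dimensional calculus (a linear change of variables), the symmetry of the Gaussian density, and the fact that $\mathcal{N}(0,1)$ is a probability distribution. If one prefers, the value of the Gaussian integral can be avoided entirely by taking $\mathbb{P}(z<0)=1/2$ as the base point, which makes the proof fully self-contained.
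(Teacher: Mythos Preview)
Your proof is correct and follows essentially the same approach as the paper: split the Gaussian CDF integral at $0$, identify the lower half as $1/2$, and reduce the remaining piece to the defining integral of $\erf$ via the substitution $t=u/\sqrt{2}$. The only difference is that you spell out the symmetry argument for $\mathbb{P}(z<0)=1/2$ and remark on the validity for $x<0$, whereas the paper leaves these implicit.
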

\begin{proof}[Proof of \cref{lem:erf}]
In light of the density of the standard normal distribution and by a change of 
variable, we have 
\begin{equation*}
    \mathbb{P} (z < x) ={} \frac{1}{2} + \frac{1}{\sqrt{2 \pi}} \int_0^x e^{- \frac{t^2}{2}} \ dt = \frac{1}{2} + \frac{\sqrt{2}}{\sqrt{2 \pi}} \int_0^{x/\sqrt{2}} e^{-s^2}  \ ds = \frac{1}{2} \left[  1 + \erf \left( \frac{x}{\sqrt{2}} \right) \right] \,.
\end{equation*}
\end{proof}
In addition, 
we define the function $L(\cdot , \cdot): \ \bR^2 \to \bR$ by
\begin{equation}\label{eq:L}
\begin{split}
    L(v , \epsp) ={}& \erf\left(v\right) + \erf\left(v\left( \epsp-1\right)\right) -\erf\left( v\left(\epsp+1\right)\right) \,.
\end{split}
\end{equation}
For all $j \in [d]$, we define
\begin{equation}\label{eq:v_epsp}
    v_j = \frac{\sqrt{n}\mu(j)}{\sqrt{2} \sigma(j)}\,, \quad  \epsp_j  = 
    \frac{\eps}{\mu(j)} \,,
\end{equation}
where $ \mu(j) $ and $ \sigma(j) $ are defined in the data generation process 
described at the beginning of \cref{sec:main}.

\cref{lem:rob_loss} gives the expression for the generalization error. 

\begin{lemma}\label{lem:rob_loss}
Suppose that the generalization error is defined as in 
\eqref{eq:generalization_err_linear_loss}. Then we have \[
L_n = W \sum_{j \in [d]} \mu(j) L\left( v_j , \epsp_j\right) \,,
\]
where $ v_j $ and $ \eps'_j $ are defined in \eqref{eq:v_epsp}.
\end{lemma}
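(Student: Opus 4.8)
The plan is to reduce the whole computation to a coordinatewise Gaussian calculation, exploiting that $\Sigma$ is diagonal, that the linear loss $-y\langle w,x\rangle$ is coordinatewise, and that \cref{prop:wrob} gives the robust solution in closed form.

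First I would handle the inner test expectation. For a fixed parameter $w$, conditioning on $y$ and using $y^2=1$ gives $\bE_{(x,y)\sim\dgau}[yx]=\mu$, hence $\bE_{(x,y)}[-y\langle w,x\rangle]=-\langle w,\mu\rangle$. Substituting into \eqref{eq:generalization_err_linear_loss} yields $L_n=-\sum_{j\in[d]}\mu(j)\,\bE\big[\wrob_n(j)\big]$, where the remaining expectation is over the training set. By \cref{prop:wrob}, $\wrob_n(j)=W\sign\!\big(u(j)-\eps\sign(u(j))\big)$ with $u(j)=\frac1n\sum_{i=1}^n y_i x_i(j)$; conditioning on each $y_i$ and again using $y_i^2=1$ shows $y_i x_i(j)\sim\cN(\mu(j),\sigma^2(j))$ i.i.d., so $u(j)\sim\cN(\mu(j),\sigma^2(j)/n)$.

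The core step is to evaluate $E_j:=\bE\big[\sign(u(j)-\eps\sign(u(j)))\big]$. A case split on the sign of $u(j)$ shows $\sign(u(j)-\eps\sign(u(j)))$ equals $+1$ exactly on $\{u(j)>\eps\}\cup\{-\eps<u(j)<0\}$ and $-1$ exactly on $\{0<u(j)<\eps\}\cup\{u(j)<-\eps\}$, the complement having probability zero. Writing each of these four Gaussian probabilities through the CDF and applying \cref{lem:erf} with the substitution $u(j)=\mu(j)+\frac{\sigma(j)}{\sqrt n}z$, $z\sim\cN(0,1)$, the ones and several $\erf$ terms cancel, leaving $E_j=g(0)-g(\eps)-g(-\eps)$ where $g(c)=\erf\!\big(\tfrac{\sqrt n(c-\mu(j))}{\sqrt2\,\sigma(j)}\big)$. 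Using that $\erf$ is odd and the identities $\tfrac{\sqrt n(c-\mu(j))}{\sqrt2\,\sigma(j)}=v_j\cdot\tfrac{c-\mu(j)}{\mu(j)}$ together with the definitions $v_j=\tfrac{\sqrt n\,\mu(j)}{\sqrt2\,\sigma(j)}$, $\epsp_j=\tfrac{\eps}{\mu(j)}$, this becomes $E_j=-\erf(v_j)-\erf(v_j(\epsp_j-1))+\erf(v_j(\epsp_j+1))=-L(v_j,\epsp_j)$. Therefore $\bE[\wrob_n(j)]=-W\,L(v_j,\epsp_j)$, and plugging back gives $L_n=W\sum_{j\in[d]}\mu(j)L(v_j,\epsp_j)$, as claimed.

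The only delicate part is the sign case analysis and the bookkeeping of the four tail probabilities; everything else is a direct substitution. I would also note that this formula holds for all $n$ and all $\eps>0$ with no restriction—the weak/medium/strong regime distinctions of \cref{thm:gaussian} arise only later, from analyzing the behavior of $L(v,\epsp')$ in $v$.
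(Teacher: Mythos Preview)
Your proposal is correct and follows essentially the same approach as the paper: both reduce the inner expectation to $-\langle w,\mu\rangle$, invoke \cref{prop:wrob} to write $\wrob_n(j)$ in terms of $u(j)\sim\cN(\mu(j),\sigma^2(j)/n)$, perform the same four-region sign case split on $u(j)$, and then convert the resulting Gaussian probabilities to $\erf$ terms via \cref{lem:erf}. The only cosmetic difference is that the paper tracks $I_j=-E_j$ rather than your $E_j$, but the computation and bookkeeping are identical.
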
 

\begin{proof}[Proof of \cref{lem:rob_loss}]
By \eqref{eq:generalization_err_linear_loss}, \cref{prop:wrob} and the 
independence between test and training data, we have 
\begin{equation*}
    \begin{split}
        L_n ={}& - \bE_{\{(x_i,y_i)\}_{i=1}^n\stackrel{\textnormal{i.i.d.}}{\sim} 
		\cD_{\cN}}\left[  \bE_{(x,y)\sim \cD_{\cN}} \left[ 
		y\langle\wrob_n,x\rangle  \right] \right] = - 
		\bE_{\{(x_i,y_i)\}_{i=1}^n\stackrel{\textnormal{i.i.d.}}{\sim} 
		\cD_{\cN}}\left[  \langle \wrob_n , \mu \rangle  \right]
        \\={}& - W \cdot \sum_{j \in [d]} \mu(j)\bE_{\{(x_i,y_i)\}_{i=1}^n\stackrel{\textnormal{i.i.d.}}{\sim} 
		\cD_{\cN}}\left[  \sign \left( u(j) - \eps \sign \left( u(j)  \right)   
		\right) \right]
    \end{split}
\end{equation*} Since $y_i x_i \sim \cN(\mu, \Sigma)$, we have $u \sim \cN ( \mu , \frac{\Sigma}{n} )$, and it follows that 
\begin{equation*}
    L_n ={} - W \cdot \sum_{j \in [d]} \mu(j) \E_{u(j) \sim \cN ( \mu(j), \frac{\sigma^2(j)}{n} )} \left[ \sign\left( u(j) - \eps \sign \left( u(j) \right) \right)\right] \,. 
\end{equation*}Denote $I_j = -\E_{u(j) \sim \cN \left( \mu(j), \frac{\sigma^2(j)}{n} \right)} \left[ \sign\left( u(j) - \eps \sign \left( u(j) \right) \right)\right]$. Then we have 
\begin{equation*}
    \begin{split}
        I_j ={}& \mathbb{P} \left(  u(j) < - \eps \right) - \mathbb{P} \left(  -\eps <u(j) < 0 \right) + \mathbb{P} \left(  0 < u(j) <  \eps \right) - \mathbb{P} \left(  \eps <u(j)\right)  
        \\ ={}& 1 - 2 \mathbb{P} \left(  -\eps <u(j) < 0 \right) - 2 \mathbb{P} \left(  \eps <u(j)\right) 
        \\ ={}& 1 - 2 \mathbb{P} \left(  \frac{(-\eps - \mu(j)) \sqrt{n}}{\sigma(j)} < z < \frac{- \mu(j) \sqrt{n}}{\sigma(j)}  \right)- 2 \mathbb{P} \left(  \frac{( \eps - \mu(j)) \sqrt{n}}{\sigma(j)} < z     \right) 
        \\ = {}& 1 - 2 \left[ \mathbb{P} \left( z < \frac{( \eps + \mu(j)) \sqrt{n}}{\sigma(j)}      \right)  -  \mathbb{P} \left( z < \frac{\mu(j) \sqrt{n}}{\sigma(j)}      \right) \right] - 2 \left[  1- \mathbb{P} \left( z < \frac{( \eps - \mu(j)) \sqrt{n}}{\sigma(j)}      \right)  \right] \,,
    \end{split} 
\end{equation*}
where $ z $ is a standard normal random variable. 
 By \cref{lem:erf} we have
\begin{equation*}
    \begin{split}
        I_j ={}& \erf\left(  \frac{ \mu(j) \sqrt{n}}{\sqrt{2} \sigma(j)}  \right)  + \erf\left(  \frac{(\eps - \mu(j)) \sqrt{n}}{\sqrt{2} \sigma(j)}  \right) - \erf\left(  \frac{(\eps + \mu(j)) \sqrt{n}}{\sqrt{2} \sigma(j)}  \right) 
        \\ ={}& \erf(v_j) + \erf(v_j(\epsp_j - 1) )  - \erf(v_j(\epsp_j + 1) ) = L(v_j , \epsp_j)\,,
    \end{split}
\end{equation*} which implies that $L_n = W \sum_{j \in [d]} \mu(j) L(v_j , 
\epsp_j)$.

\end{proof}

Note that $L(v, \epsp)$ is differentiable in $v$, and by our definition each $v_j$ is smooth and monotonic in $n$. Together with \cref{lem:rob_loss} we know that $L_n$ is differentiable w.r.t. $n$. Therefore, to study the dynamic of $L_n$ in $n$, it is equivalent to studying the derivative $\frac{d L_n}{dn}$. 
We define the function $f(\cdot , \cdot):  \bR^2 \to \bR$ by
\begin{equation*}\label{eq:f_definition}
        f\left( t, \epsp\right) ={} t - (1 + \epsp) t^{(1+\epsp)^2} - 
        (1-\epsp)t^{(1-\epsp)^2} \,.
\end{equation*}

In \cref{lem:rob_loss_deri}, we compute the partial derivative of $L$.

\begin{lemma}\label{lem:rob_loss_deri} 
Let $t = e^{-v^2}$ and $ f $ be defined as in \eqref{eq:f_definition}. The 
partial derivative of $L(v,\epsp)$ w.r.t.\ $v$ is given by
\[
\frac{\partial L(v, \epsp)}{\partial v} = \frac{2}{\sqrt{\pi}} f(t , \epsp) \,.
\]
\end{lemma}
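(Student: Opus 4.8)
The plan is to differentiate $L(v,\epsp)$ term by term via the chain rule, using that $\erf$ is smooth with $\erf'(x)=\frac{2}{\sqrt\pi}e^{-x^2}$ (immediate from the definition \eqref{eq:erf}), and then to rewrite each resulting Gaussian factor as a power of $t=e^{-v^2}$ so as to recognize $f(t,\epsp)$ from \eqref{eq:f_definition}.

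Concretely, I would start from the definition \eqref{eq:L}, namely $L(v,\epsp)=\erf(v)+\erf\bigl(v(\epsp-1)\bigr)-\erf\bigl(v(\epsp+1)\bigr)$, and apply $\partial/\partial v$ to each of the three summands. The inner functions are linear in $v$ with slopes $1$, $\epsp-1$, and $\epsp+1$ respectively, so the chain rule gives
\[
\frac{\partial L(v,\epsp)}{\partial v}=\frac{2}{\sqrt\pi}\Bigl(e^{-v^2}+(\epsp-1)\,e^{-v^2(\epsp-1)^2}-(\epsp+1)\,e^{-v^2(\epsp+1)^2}\Bigr).
\]
Then I would substitute $t=e^{-v^2}$, using the exponent identity $e^{-v^2(\epsp\pm 1)^2}=\bigl(e^{-v^2}\bigr)^{(\epsp\pm 1)^2}=t^{(1\pm\epsp)^2}$ (valid since $(\epsp-1)^2=(1-\epsp)^2$), and rearrange signs: $(\epsp-1)t^{(1-\epsp)^2}=-(1-\epsp)t^{(1-\epsp)^2}$ and $-(\epsp+1)t^{(1+\epsp)^2}=-(1+\epsp)t^{(1+\epsp)^2}$. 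This yields exactly $\frac{2}{\sqrt\pi}\bigl(t-(1+\epsp)t^{(1+\epsp)^2}-(1-\epsp)t^{(1-\epsp)^2}\bigr)=\frac{2}{\sqrt\pi}f(t,\epsp)$, as claimed.

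There is no genuine obstacle here: the lemma is a one-line chain-rule computation together with the substitution $t=e^{-v^2}$. The only points requiring mild care are the sign bookkeeping — in particular that the slope factor $\epsp-1$ of the second term absorbs into the $-(1-\epsp)$ coefficient in $f$ — and noting that the identity holds for every $v\in\bR$ (at $v=0$ both sides vanish after substituting $t=1$), which is the regularity that the subsequent analysis of $\tfrac{dL_n}{dn}$ relies on.
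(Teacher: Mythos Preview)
Your proposal is correct and essentially identical to the paper's proof: both differentiate $L(v,\epsp)$ term by term via $\erf'(x)=\tfrac{2}{\sqrt\pi}e^{-x^2}$ and the chain rule, then substitute $t=e^{-v^2}$ to recognize $f(t,\epsp)$. One minor slip in your closing parenthetical: at $v=0$ (i.e., $t=1$) both sides equal $-\tfrac{2}{\sqrt\pi}$, not zero, since $f(1,\epsp)=1-(1+\epsp)-(1-\epsp)=-1$; this does not affect the argument.
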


\begin{proof}[Proof of \cref{lem:rob_loss_deri}]
By \eqref{eq:erf} we have
\begin{equation*}
    \frac{d}{dx} \erf(x) = \frac{2}{\sqrt{\pi}} e^{- x^2} \,,
\end{equation*} and it follows by \eqref{eq:L} that 
\begin{equation*}
    \frac{\partial L(v , \epsp)}{\partial v} ={} \frac{2}{\sqrt{\pi}} e^{- v^2} + (\epsp - 1 )\frac{2}{\sqrt{\pi}} e^{- v^2 \cdot (\epsp -1)^2} -  (\epsp + 1 )\frac{2}{\sqrt{\pi}} e^{- v^2 \cdot (\epsp+1)^2} = \frac{2}{\sqrt{\pi}} f (t,\epsp) \,.
\end{equation*}
\end{proof}

The proof of \cref{thm:gaussian} follows from studying the derivative $\frac{d 
L_n}{d n}$. \cref{lem:rob_loss_deri}  implies that the derivative depends on 
the sign of the function $f$. We investigate the sign of $f$ in \cref{lem:f}. 

\begin{lemma}\label{lem:f}
There exist $0 < \delta_1 \leq \delta_2 < 1$ such that the following statements hold.
\begin{enumerate}[label=(\alph*),nosep]
    \item When $0 < \epsp < \delta_1$, $f(t , \epsp) < 0$ for $\forall \ t \in (0,1)$. \label{it:f_delta1}
    \item When $\delta_2 < \epsp < 1$, there exist $0 < \tau_1 < \tau_2 < 1$ depending on $\epsp$ such that  \[
    f(t , \epsp)  \begin{cases} <0 {}& \quad \forall  t \in (0,\tau_1) \,,\\
    >0 {}& \quad \forall  t \in (\tau_1,\tau_2) \,,\\
    <0 {}& \quad \forall  t \in (\tau_2,1) \,,
    \end{cases}
    \] and 
    \begin{align*}
     \lim_{\epsp \to 1^-} \tau_1(\epsp) ={}& 0 \,,\\
     \tau_2(\epsp) \geq{}& \frac{1}{3} \,.
    \end{align*}
     \label{it:f_delta2}
    \item When $1 \leq \epsp$, $f(t,\epsp)$, there exists $\tau_2 < 1$ such that \[
    f(t , \epsp)  \begin{cases} 
    >0 {}& \quad \forall  t \in (0,\tau_2) \,,\\
    <0 {}& \quad \forall  t \in (\tau_2,1) \,.
    \end{cases}
    \]\label{it:f_big_epsp}
\end{enumerate}
\end{lemma}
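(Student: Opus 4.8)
The plan is to reduce everything to the sign and the number of zeros on $t\in(0,1)$ of the rescaled function $g(t)\triangleq f(t,\epsp)/t = 1-(1+\epsp)\,t^{\epsp(\epsp+2)}-(1-\epsp)\,t^{-\epsp(2-\epsp)}$, which has the same sign as $f(t,\epsp)$ since $t>0$. The point of dividing by $t$ is that $g'(t)=-(1+\epsp)\epsp(\epsp+2)\,t^{\epsp(\epsp+2)-1}+(1-\epsp)\epsp(2-\epsp)\,t^{-\epsp(2-\epsp)-1}$ is a sum of only two power terms whose exponents differ by $4\epsp>0$, so $g'(t)=t^{\gamma}\big(c_1+c_2 t^{4\epsp}\big)$ for constants $c_1,c_2$ (with $c_2<0$ always); hence $g'$ changes sign at most once on $(0,1)$, and the shape of $g$ --- strictly decreasing, or unimodal (up then down) --- is dictated entirely by the signs of $c_1,c_2$, that is, by whether $\epsp<1$, $\epsp=1$, $1<\epsp<2$, $\epsp=2$, or $\epsp>2$. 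Combining this with the boundary value $g(1)=-1$ and the limit $g(0^+)$, which equals $-\infty$ when $\epsp<1$ and is positive (possibly $+\infty$) when $\epsp\ge 1$, one reads off the complete sign pattern of $g$, hence of $f$, on $(0,1)$. The case-by-case bookkeeping of the coefficient signs is routine.

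For part \ref{it:f_big_epsp} ($\epsp\ge 1$): the coefficients make $g$ strictly decreasing when $1\le\epsp\le 2$ and unimodal up-then-down when $\epsp>2$; in every case $g(0^+)>0$ and $g(1)=-1<0$, so $g$ --- monotone down, or up-then-down starting positive and ending negative --- has exactly one zero $\tau_2\in(0,1)$, with $f>0$ on $(0,\tau_2)$ and $f<0$ on $(\tau_2,1)$. This is the easiest of the three parts.

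For part \ref{it:f_delta2} ($\epsp$ just below $1$): here $g$ is unimodal up-then-down with $g(0^+)=-\infty$ and $g(1)=-1$, so $g$ has at most two zeros, with exactly two iff its unique maximum is positive. I would pin this down at the fixed abscissa $t=\tfrac12$: since $f(\tfrac12,1)=\tfrac12-2\cdot 2^{-4}=\tfrac38>0$, continuity in $\epsp$ produces $\delta_2<1$ with $f(\tfrac12,\epsp)>0$ for all $\epsp\in(\delta_2,1)$; then $g<0$ near $0$ and at $1$ while $g(\tfrac12)>0$ forces two zeros $\tau_1<\tfrac12<\tau_2$ with the asserted $-,+,-$ pattern, and $\tau_2\ge\tfrac13$ is immediate from $\tau_2>\tfrac12$. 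For $\tau_1(\epsp)\to 0$ as $\epsp\to 1^-$: for any fixed $\delta<2^{-1/3}$ we have $f(\delta,1)=\delta-2\delta^4>0$, so $f(\delta,\epsp)>0$ for $\epsp$ close enough to $1$, which places $\delta$ strictly between $\tau_1$ and $\tau_2$ and hence forces $\tau_1(\epsp)<\delta$; since $\delta$ is arbitrary, $\tau_1(\epsp)\to 0$.

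For part \ref{it:f_delta1} ($\epsp$ near $0$): I want $g(t)<0$ for every $t\in(0,1)$, equivalently $h(t)\triangleq 1-g(t)=(1+\epsp)t^{\epsp(\epsp+2)}+(1-\epsp)t^{-\epsp(2-\epsp)}>1$. The weighted AM--GM inequality with weights $\tfrac{1+\epsp}{2},\tfrac{1-\epsp}{2}$ (both positive for $\epsp<1$, and strict here because the two terms are never equal on $(0,1)$) gives $h(t)>2t^{3{\epsp}^{2}}$, which already yields $h(t)>1$ on the range $t\ge 2^{-1/(3{\epsp}^{2})}$; on the complementary range $t<2^{-1/(3{\epsp}^{2})}$ the second (decreasing) term of $h$ alone exceeds $(1-\epsp)\,2^{(2-\epsp)/(3\epsp)}$, which tends to $+\infty$ as $\epsp\to 0^+$ and so is $>1$ once $\epsp$ is below a threshold $\delta_1$ (shrunk if necessary so $\delta_1\le\delta_2$). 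The main obstacle is precisely this last part: because $f\to 0$ as $t\to 0$, no single crude estimate can deliver ``$f<0$'' uniformly --- the AM--GM bound degrades exactly where $t$ is small, so it must be patched with the separate small-$t$ estimate, and one has to arrange the two $t$-ranges together with the two smallness requirements on $\epsp$ so that they overlap.
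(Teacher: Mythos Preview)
Your proposal is correct and takes a genuinely different route from the paper.

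The paper works one derivative level deeper: it analyzes $f'(t)=1-(1+\epsp)^3t^{\epsp(\epsp+2)}-(1-\epsp)^3t^{-\epsp(2-\epsp)}$ directly, first establishing that $f'$ is unimodal (via the explicit stationary point $t_0$ of $f'$, found from $f''=0$), then translating information about the zeros of $f'$ into the sign pattern of $f$. For part \ref{it:f_delta1} the paper shows the maximum $f'(t_0)<0$ by plugging in the closed form of $t_0$ and reducing to an elementary inequality in $\epsp$; for part \ref{it:f_delta2} it shows $f(t_2)>0$ (where $t_2$ is the larger zero of $f'$) by combining $f(t_2)$ with $t_2f'(t_2)=0$, and handles $\tau_1\to 0$ by the explicit test point $(1-\epsp)^{0.9}$. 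Your division by $t$ collapses the analysis by one level: because $g'(t)$ is a sum of only two monomials, its single sign change is immediate, so the unimodality of $g$ (hence the at-most-two-zeros structure of $f$) comes for free without ever touching a second derivative. Your part-(a) AM--GM patch and your part-(b) continuity argument at the fixed abscissa $t=\tfrac12$ (and at arbitrary $\delta<2^{-1/3}$ for $\tau_1\to0$) are cleaner than the paper's explicit computations with $t_0$ and $t_2$, and they yield the slightly stronger bound $\tau_2>\tfrac12$ along the way. The paper's approach, in exchange, gives more quantitative control (explicit formulas for $t_0$, an explicit rate $\tau_1\le(1-\epsp)^{0.9}$), which you trade away for brevity.
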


We compute the partial derivative of $f$ w.r.t.\ $t$  \[
f'(t, \epsp) =  \frac{\partial f (t , \epsp)}{\partial t} = 1 - \left( 1 + \epsp \right)^3 t^{(1+\epsp)^2 -1} - \left( 1-\epsp\right)^3 t^{(1-\epsp)^2 - 1} \,.
\] The proof of \cref{lem:f} uses the following \cref{lem:fprime_lim} and 
\cref{lem:fprime_trd}. To make it concise, whenever we fix $\epsp$ in the 
context, we omit $\epsp$ and write $f(t) = f(t,\epsp)$ and $f'(t) = 
f'(t,\epsp)$. 

\begin{lemma}\label{lem:fprime_lim}
	The right-sided limit of $ f' $ at $ 0 $ is given by
\begin{equation*}
    \begin{split}
        \lim_{t \to 0^+} f' (t) = \begin{cases}
        -\infty {}& \quad \textnormal{if} \ 0<\epsp<1 \,,\\ 
        1 {}& \quad \textnormal{if} \ \epsp=1 \,,\\
        +\infty {}& \quad \textnormal{if} \ 1<\epsp<2 \,.\\
        \end{cases}
    \end{split}
\end{equation*}
In addition, we have 
\begin{equation*}
    \lim_{t \to 1^{-}} f' (t) < 0 \  , \quad  \ \forall \ 0 < \epsp \,.
\end{equation*}
\end{lemma}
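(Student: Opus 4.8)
The plan is to analyze the two power terms in
\[
f'(t,\epsp) = 1 - (1+\epsp)^3 t^{(1+\epsp)^2-1} - (1-\epsp)^3 t^{(1-\epsp)^2-1}
\]
separately according to the sign of the exponents as $t\to 0^+$. Write $a = (1+\epsp)^2 - 1 = \epsp^2 + 2\epsp$ and $b = (1-\epsp)^2 - 1 = \epsp^2 - 2\epsp$. Note $a > 0$ for all $\epsp>0$, so the term $(1+\epsp)^3 t^{a} \to 0$ as $t\to 0^+$ and contributes nothing to the limit. Everything therefore hinges on the term $(1-\epsp)^3 t^{b}$, whose behavior is governed by the sign of $b = \epsp(\epsp-2)$ and of the coefficient $(1-\epsp)^3$.

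First I would handle $0 < \epsp < 1$: here $b = \epsp^2 - 2\epsp < 0$, so $t^{b} \to +\infty$, while the coefficient $(1-\epsp)^3 > 0$, so $-(1-\epsp)^3 t^{b} \to -\infty$; combined with the bounded term this gives $\lim_{t\to 0^+} f'(t) = -\infty$. Next, for $\epsp = 1$: the coefficient $(1-\epsp)^3 = 0$ kills that term entirely, and the remaining term $(1+\epsp)^3 t^{a} = 8 t^{3} \to 0$, leaving $\lim_{t\to 0^+} f'(t) = 1$. Then for $1 < \epsp < 2$: still $b = \epsp(\epsp-2) < 0$ so $t^{b} \to +\infty$, but now $(1-\epsp)^3 < 0$, hence $-(1-\epsp)^3 t^{b} \to +\infty$, giving $\lim_{t\to 0^+} f'(t) = +\infty$. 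These three sub-cases are each a one-line limit computation.

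For the limit at $t \to 1^-$, I would simply evaluate $f'$ continuously at $t=1$: since $t^c \to 1$ for every fixed exponent $c$, we get
\[
\lim_{t\to 1^-} f'(t) = 1 - (1+\epsp)^3 - (1-\epsp)^3 = 1 - \bigl[(1+\epsp)^3 + (1-\epsp)^3\bigr] = 1 - (2 + 6\epsp^2) = -1 - 6\epsp^2 < 0,
\]
using the algebraic identity $(1+\epsp)^3 + (1-\epsp)^3 = 2 + 6\epsp^2$, which holds for all real $\epsp$ (not just $0<\epsp<1$), so the stated conclusion $\lim_{t\to 1^-} f'(t) < 0$ for all $\epsp > 0$ follows immediately.

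There is essentially no hard part here: the only mild subtlety is bookkeeping the sign of $(1-\epsp)^3$ against the sign of the divergent factor $t^{b}$ in the $\epsp\neq 1$ cases, and noticing that the restriction $\epsp < 2$ in the third case is exactly what keeps $b$ negative so the term blows up rather than vanishes. The $t\to 1^-$ part is a direct substitution plus one elementary cubic identity.
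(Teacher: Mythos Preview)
Your proposal is correct and is exactly the direct computation the paper alludes to (the paper itself omits the details, stating only that the lemma ``follows from direct computation''). Your case split on the sign of $(1-\epsp)^3$ and of the exponent $b=(1-\epsp)^2-1$ is the natural way to organize that computation, and your evaluation $\lim_{t\to 1^-}f'(t)=-1-6\epsp^2$ matches the paper's later use of this value.
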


The proof of \cref{lem:fprime_lim} follows from direct computation. Using 
\cref{lem:fprime_lim}, we obtain \cref{lem:fprime_trd}.

\begin{lemma}\label{lem:fprime_trd}
For any fixed $0 < \epsp <1$, there exists some $t_0 = t_0(\epsp) \in (0,1)$ 
such that $f'(t)$ is strictly increasing for $t \in (0,t_0)$ and strictly 
decreasing for $t \in (t_0,1)$. For any fixed $1 \leq \epsp \leq 2$, $f' (t)$ 
is strictly decreasing for $t \in (0,1)$.
\end{lemma}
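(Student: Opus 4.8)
The plan is to analyze the second derivative $f''(t,\epsp) = \frac{\partial^2 f}{\partial t^2}$ and show it has exactly one sign change on $(0,1)$ in the regime $0<\epsp<1$, while in the regime $1\le\epsp\le2$ we show $f'$ is monotone directly. Differentiating $f'(t,\epsp) = 1 - (1+\epsp)^3 t^{(1+\epsp)^2-1} - (1-\epsp)^3 t^{(1-\epsp)^2-1}$ once more in $t$ gives
\begin{equation*}
f''(t,\epsp) = -(1+\epsp)^3\bigl((1+\epsp)^2-1\bigr)\, t^{(1+\epsp)^2-2} - (1-\epsp)^3\bigl((1-\epsp)^2-1\bigr)\, t^{(1-\epsp)^2-2}.
\end{equation*}
Write this as $f''(t,\epsp) = t^{(1-\epsp)^2-2}\bigl[-A\, t^{(1+\epsp)^2-(1-\epsp)^2} - B\bigr]$ where $A = (1+\epsp)^3((1+\epsp)^2-1) > 0$ and $B = (1-\epsp)^3((1-\epsp)^2-1)$. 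The exponent gap is $(1+\epsp)^2-(1-\epsp)^2 = 4\epsp > 0$, so the bracketed term $-A\,t^{4\epsp} - B$ is strictly decreasing in $t$ on $(0,1)$; since $t^{(1-\epsp)^2-2}>0$, the sign of $f''$ is governed entirely by this monotone bracket. For $0<\epsp<1$ we have $(1-\epsp)^2-1 < 0$, hence $B < 0$, so at $t\to0^+$ the bracket tends to $-B>0$ and $f''>0$, while as $t$ increases the bracket decreases; combined with \cref{lem:fprime_lim} (which gives $\lim_{t\to1^-}f'(t)<0$ and $\lim_{t\to0^+}f'(t)=-\infty$, forcing $f'$ to rise then fall) this yields a unique $t_0\in(0,1)$ where the bracket — equivalently $f''$ — changes from $+$ to $-$, so $f'$ is strictly increasing on $(0,t_0)$ and strictly decreasing on $(t_0,1)$.

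For the case $1\le\epsp\le2$: when $\epsp=1$ the term $(1-\epsp)^3 t^{(1-\epsp)^2-1}$ vanishes and $f'(t) = 1 - 8t^3$, which is plainly strictly decreasing on $(0,1)$. When $1<\epsp\le2$ we have $(1-\epsp)^3<0$ and $(1-\epsp)^2-1 = \epsp(\epsp-2)\le0$ (with equality only at $\epsp=2$), so $-(1-\epsp)^3 t^{(1-\epsp)^2-1}$ is a positive constant times $t^{\text{(nonpositive exponent)}}$, hence nonincreasing; meanwhile $-(1+\epsp)^3 t^{(1+\epsp)^2-1}$ has a strictly positive exponent and is strictly decreasing. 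Thus $f'(t)$ is the sum of a strictly decreasing function and a nonincreasing function, hence strictly decreasing on $(0,1)$, which is the claim.

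The main obstacle I anticipate is the first case: carefully justifying that the single sign change of $f''$ (from the monotone bracket) together with the boundary behavior of $f'$ from \cref{lem:fprime_lim} genuinely forces the increasing-then-decreasing shape rather than, say, $f'$ staying negative throughout — one must note that $f''>0$ near $0$ means $f'$ is increasing from $-\infty$, and a single sign change of $f''$ means $f'$ is unimodal (increasing then decreasing), so $t_0$ is exactly the location of that $f''$ sign change. A minor bookkeeping point is handling the degenerate exponents at $\epsp=1$ and $\epsp=2$ separately so that no term is raised to a problematic power; I would dispose of these endpoint values by the explicit computations above before treating the open interval $1<\epsp<2$.
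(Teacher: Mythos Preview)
Your approach is essentially the same as the paper's --- analyze the sign of $f''(t,\epsp)$ --- and your factorization $f''(t,\epsp)=t^{(1-\epsp)^2-2}\bigl[-A\,t^{4\epsp}-B\bigr]$ is a clean way to see that $f''$ can change sign at most once. The treatment of the regime $1\le\epsp\le 2$ is correct.

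There is, however, a real gap in the case $0<\epsp<1$. You correctly observe that the bracket $-A\,t^{4\epsp}-B$ is strictly decreasing and starts at $-B>0$, but you never verify that it actually becomes negative on $(0,1)$. Your appeal to \cref{lem:fprime_lim} does not supply this: the information $\lim_{t\to0^+}f'(t)=-\infty$ and $\lim_{t\to1^-}f'(t)<0$ is perfectly compatible with $f'$ being strictly increasing on all of $(0,1)$ (think of $-1/t$ on $(0,1)$). So the boundary behavior of $f'$ does not ``force $f'$ to rise then fall,'' and your closing paragraph is circular --- it assumes the sign change of $f''$ that you are trying to establish.

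What is missing is exactly the computation the paper performs: evaluate the bracket at $t=1$. One finds
\[
-A-B=-\bigl[(1+\epsp)^5-(1+\epsp)^3+(1-\epsp)^5-(1-\epsp)^3\bigr]=-2\epsp^2(5\epsp^2+7)<0,
\]
so the bracket does cross zero on $(0,1)$, giving the unique $t_0$. With this one line inserted, your argument is complete and in fact slightly tidier than the paper's, which solves $f''=0$ explicitly for $t_0$ and then determines the sign of $f''$ on each side separately.
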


\begin{proof}[Proof of \cref{lem:fprime_trd}]
We differentiate $f'$ w.r.t. $t$ to get 
\begin{equation*}
    \frac{\partial f'(t)}{\partial t}  = - \left( 1 + \epsp\right)^3 \left[ \left(1+\epsp\right)^2 -1 \right] t^{(1+\epsp)^2 -2} - \left( 1 -\epsp\right)^3 \left[ \left( 1 -\epsp \right)^2-1\right] t^{(1-\epsp)^2 -2} \,.
\end{equation*}
First we consider the case where $0 < \epsp <1$. The function $f'$ is 
continuously differentiable on $(t,\epsp) \in (0,1)\times(0,1)$. For any fixed 
$\epsp<1$, setting $\frac{\partial f'(t)}{\partial t} = 0$ yields the unique 
solution of $t$ in $(0,1)$ as
\begin{equation}\label{eq:t0}
    t_0 = \left[ \left( \frac{1 +\epsp}{1-\epsp} \right)^3 \left( \frac{2 + 
    \epsp}{2 - \epsp}\right)\right]^{-\frac{1}{4\epsp}} \,.
\end{equation} Since $\lim_{t \to 0^+} f'(t) = -\infty$, $f'(t)$ is strictly 
increasing w.r.t.\ $t \in (0,t_0)$. Also note that \[
\begin{split}
    \lim_{t \to 1^-}\frac{\partial f'(t)}{\partial t}  ={}& \lim_{t \to 1^-}  - \left( 1 + \epsp\right)^3 \left[ \left(1+\epsp\right)^2 -1 \right] t^{(1+\epsp)^2 -2} - \left( 1 -\epsp\right)^3 \left[ \left( 1 -\epsp \right)^2-1\right] t^{(1-\epsp)^2 -2} \\
    ={}& -2 \epsp ^2 \left(5 \epsp ^2+7\right)
     < 0 \,,
\end{split}
\] which together with $\frac{\partial}{\partial t}(f'(t_0)) = 0$ indicates that $f'(t)$ is strictly decreasing for $t \in (t_0,1)$. We conclude that $t_0$ is the unique local extreme and also the global maximum of $f'(t)$ on $t \in (0,1)$.

For $1 \leq \epsp \leq 2$, we have for all $t \in (0,1)$
\begin{equation*}
    \begin{split}
         - \left( 1 + \epsp\right)^3 \left[ \left(1+\epsp\right)^2 -1 \right] t^{(1+\epsp)^2 -2} <{}& 0 \,,
        \\  - \left( 1 -\epsp\right)^3 \left[ \left( 1 -\epsp \right)^2-1\right] t^{(1-\epsp)^2 -2}  \leq{}& 0 \,.
    \end{split}
\end{equation*}
It follows that $\frac{\partial f'(t)}{ \partial t} < 0$, which implies that $f'(t)$ is strictly decreasing.

\end{proof}

A direct application of \cref{lem:fprime_trd} gives the following \cref{lem:fprime_zeros}

\begin{lemma}\label{lem:fprime_zeros}
For all  $0<\epsp<1$  sufficiently close to 1, $f' (t)$ has exactly two zeros 
on $t \in (0,1)$.
\end{lemma}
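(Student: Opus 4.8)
The plan is to reduce the zero-counting to a single sign question and then settle that sign by a one-point estimate. Fix $0<\epsp<1$ and write $f'(t)=f'(t,\epsp)$. By \cref{lem:fprime_trd}, $f'$ is strictly increasing on $(0,t_0)$ and strictly decreasing on $(t_0,1)$, so $t_0$ is the unique global maximizer of $f'$ on $(0,1)$; and by \cref{lem:fprime_lim}, $\lim_{t\to 0^+}f'(t)=-\infty$ while $\lim_{t\to 1^-}f'(t)<0$. Hence the number of zeros of $f'$ on $(0,1)$ is governed entirely by the sign of $f'(t_0)$: if $f'(t_0)>0$, the intermediate value theorem applied on $(0,t_0)$ and on $(t_0,1)$, together with the strict monotonicity on each piece, yields exactly one zero in each subinterval, hence exactly two in total; if $f'(t_0)\le 0$, then $f'<0$ on $(0,1)\setminus\{t_0\}$ and there is at most one zero. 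So it suffices to prove that $f'(t_0,\epsp)>0$ whenever $\epsp$ is close enough to $1$.

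To do this I would take a fixed test point $t_\star\in(0,1)$ and show $f'(t_\star,\epsp)>0$ for $\epsp$ near $1$; since $t_0$ is the global maximizer, $f'(t_0,\epsp)\ge f'(t_\star,\epsp)$ is then immediate. Using
\[
f'(t,\epsp)=1-(1+\epsp)^3 t^{(1+\epsp)^2-1}-(1-\epsp)^3 t^{(1-\epsp)^2-1},
\]
for any fixed $t\in(0,1)$ the first subtracted term tends to $8t^3$ and the second tends to $0$ as $\epsp\to 1^-$ (the factor $(1-\epsp)^3\to 0$ kills the bounded quantity $t^{(1-\epsp)^2-1}\to t^{-1}$), so $f'(t,\epsp)\to 1-8t^3$. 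Choosing $t_\star=\tfrac14$ gives the limit $1-8\cdot\tfrac1{64}=\tfrac78>0$, and by continuity of $(t,\epsp)\mapsto f'(t,\epsp)$ near $(\tfrac14,1)$ there is $\epsp^\ast<1$ with $f'(\tfrac14,\epsp)>0$ for all $\epsp\in(\epsp^\ast,1)$. Combining this with the previous paragraph, for every such $\epsp$ the maximum value $f'(t_0,\epsp)$ is strictly positive, and therefore $f'(\cdot,\epsp)$ has exactly two zeros on $(0,1)$.

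I do not expect a genuine obstacle here: the shape of $f'$ is already pinned down by \cref{lem:fprime_trd} and \cref{lem:fprime_lim}, so the only substantive content is locating the positivity of the maximum, and the limit $f'(t,\epsp)\to 1-8t^3$ makes that transparent — the maximum, wherever it sits in $(0,1)$ for $\epsp<1$, cannot be $\le 0$ because the limiting profile $1-8t^3$ is strictly positive on all of $(0,\tfrac12)$. The only mild care needed is to note that the pointwise limit at $t=\tfrac14$ plus plain continuity (no uniformity) suffices, since keeping $t=\tfrac14$ bounded away from $0$ keeps the exponent $(1-\epsp)^2-1$ away from the $t\to 0$ blow-up.
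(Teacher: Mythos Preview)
Your argument is correct and shares the paper's structural skeleton: both use \cref{lem:fprime_trd} and \cref{lem:fprime_lim} to reduce the two-zero claim to the single question of whether $f'(t_0,\epsp)>0$ for $\epsp$ near $1$. The difference lies in how that positivity is established. The paper plugs the explicit formula \eqref{eq:t0} for $t_0$ into $f'$, rewrites the result in terms of the auxiliary quantity $A=\bigl(\tfrac{1+\epsp}{1-\epsp}\bigr)^3\bigl(\tfrac{2+\epsp}{2-\epsp}\bigr)$, and shows the two subtracted terms vanish as $\epsp\to 1^-$, yielding $f'(t_0)\to 1$. You instead exploit that $t_0$ is the global maximizer, so it suffices to exhibit \emph{any} point where $f'$ is positive; choosing the fixed test point $t_\star=\tfrac14$ makes the limit $f'(t_\star,\epsp)\to 1-8t_\star^3=\tfrac78$ immediate. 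Your route is shorter and avoids tracking $t_0(\epsp)$ or introducing $A$; the paper's route, while heavier, yields the sharper limit $f'(t_0)\to 1$ and keeps the computation centered on $t_0$, which it reuses later (e.g., in the proof of \cref{lem:f}\ref{it:f_delta1}).
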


\begin{proof}[Proof of \cref{lem:fprime_zeros}]

By \cref{lem:fprime_trd}, we know that $f'(t)$ is strictly increasing on $t \in 
(0,t_0)$ and strictly decreasing on $(t_0,1)$. 
Recall that \cref{lem:fprime_lim} shows that for $ 0<\epsp<1 $, $ \lim_{t\to 
0^+} f'(t)=-\infty $ and $ \lim_{t\to 1^-} f'(t)<0 $.
Therefore it suffices to show 
$f'(t_0) > 0$ for all $\epsp$ sufficiently close to $1^-$. We define 
\begin{equation}\label{eq:defA}
    A = \left( \frac{1 +\epsp}{1-\epsp} \right)^3 \left( \frac{2  + \epsp}{2  - 
    \epsp}\right) \,.
\end{equation}
We have $ A $ tends to $+ \infty$ as $\epsp \to 1^-$. We then write
\begin{equation*}
    \begin{split}
        f'(t_0) = 1-(1+\epsp)^3 A^{ - \frac{1}{2} - \frac{\epsp}{4}} - 
        (1-\epsp)^3 A^{\frac{1}{2} - \frac{\epsp}{4}}\,.
    \end{split}
\end{equation*} 
Note that $\lim_{\epsp \to 1^-}(1+\epsp)^3 A^{ - \frac{1}{2} - \frac{\epsp}{4}} 
= 0$, and \[
\lim_{\epsp \to 1^-} (1-\epsp)^3 A^{ \frac{1}{2} - \frac{\epsp}{4}} = 
\lim_{\epsp \to 1^-} (1-\epsp)^{\frac{3}{2} + \frac{3 \epsp}{4}} \cdot \left[ 
(1 + \epsp)^3 \left( 1 + \frac{2 \epsp}{2  - \epsp} \right)\right]^{\frac{1}{2} 
- \frac{\epsp}{4}} = 0 \,.
\]Therefore we conclude that $f'(t_0) > 0$ as $\epsp \to 1^-$.

\end{proof}

We denote the two zeros in \cref{lem:fprime_zeros} by $t_1 = t_1 (\epsp)$ and 
$t_2 = t_2(\epsp)$ where $t_1 < t_2$. 

Now we are ready to prove \cref{lem:f}. 
\begin{proof}[Proof of \cref{lem:f}]
We show \ref{it:f_delta1} first. Note that for any fixed $\epsp<1$, $f(0) = 0$. 
Therefore it suffices to show that for any $\epsp$ sufficiently close to $0$, 
the derivative
$ f'(t)< 0 $.
  Since by 
\cref{lem:fprime_trd} we have  $f'(t)<\sup_{t \in (0,1)} f' (t) = f'(t_0)$ 
when $0<\epsp <1$ , it remains to show that $f'(t_0) < 0$ for all $\epsp$ 
sufficiently close to 0. 

In light of \eqref{eq:t0}, $f'(t_0) < 0$ is equivalent to \[
1 - (1 +\epsp)^3 \left[ \left( \frac{1 +\epsp}{1-\epsp} \right)^3 \left( 
\frac{2  + \epsp}{2  - \epsp}\right)\right]^{- \frac{ \epsp^2 + 2 \epsp}{4 
\epsp}} - (1-\epsp)^3 \left[ \left( \frac{1 +\epsp}{1-\epsp} \right)^3 \left( 
\frac{2  + \epsp}{2  - \epsp}\right)\right]^{-\frac{\epsp^2 - 2 
\epsp}{4 \epsp}} < 0 \,.
\]
Recall that we define
 \[
A =  \left( \frac{1 +\epsp}{1-\epsp} \right)^3 \left( \frac{2  + \epsp}{2  - 
\epsp}\right) \,.
\] 
 Rearranging the terms yields $A^{{\epsp}/{4}} < (1+\epsp)^3 A^{-1/2} + 
 (1-\epsp)^3 A^{1/2}$. Since $A>1$ and $\epsp<1$, we have $A^{\epsp/4} < 
 A^{1/2}$. Thus it now suffices to show $A^{1/2} < (1+\epsp)^3 A^{-1/2} + 
 (1-\epsp)^3 A^{1/2}$, or equivalently $A<(1+\epsp)^3/[1-(1-\epsp)^3]$. We can 
 further simplify this into \[
\frac{2  + \epsp}{2  - \epsp} < \frac{(1-\epsp)^3}{1-(1-\epsp)^3} \,.
\]Finally, note that  $\mathrm{LHS}\to 1$ and  $\mathrm{RHS}\to + \infty$ as 
$\epsp \to 0^+$. 
Therefore there must exist $\delta_1 \in (0,1)$ such that: for any $0< \epsp < 
\delta_1$, $f'(t)<0$ for all $t \in (0,1)$. Thus $f(t) <0$ for all $t\in(0,1)$.

Now we show \ref{it:f_delta2}. 
By \cref{lem:fprime_zeros}, we know that for all $\epsp$ sufficiently close to 
$1^-$, $f'$ has exactly two zeros $t_1$ and $t_2$. By \cref{lem:fprime_trd}, we 
know that $f' (t) > 0$ for $t \in (t_1, t_2)$. These imply that $f(t)$ is 
decreasing on $t \in (0,t_1)$, increasing on $t \in (t_1, t_2)$ and 
decreasing on $t \in (t_2,1)$, which gives $\argmax_{t \in [0,1]} f(t) 
\subseteq \{ 0,  t_2 \}$.
Furthermore, since $f(0)=0$ and $f'(t) <0$ for $t \in (0,t_1)$, we know 
$f(t) < 0$ in $t \in(0,t_1)$. Also note that $f(1) = -1 <0$. Therefore, 
depending on $\epsp$, the sign of $f(t)$ in $t \in(0,1)$ only has two 
possibilities: either $f(t) < 0$ for all $t\in (0,1)$ except possibly one point 
where $f(t) = 0$, or there exist $\tau_1 $ and $ \tau_2$ as described in 
\ref{it:f_delta2}. In the latter case we have $0 < t_1 < \tau_1 < t_2 < \tau_2 
< 1$.

We now show the existence of such $\tau_1$ and $\tau_2$ for all $\epsp$ sufficiently close to $1^-$. Since we have shown that $\argmax_{t \in [0,1]} f(t) \subseteq \{ 0, t_2 \}$ and $f(0)=0$, it suffices to show $f(t_2) > 0$. Since $f'(t_2) = 0$, we have $f(t_2) >0 \Leftrightarrow f(t_2) - t_2 \cdot f' (t_2) > 0 \Leftrightarrow [(1+\epsp)^3 - (1+\epsp)]t_2^{(1+\epsp)^2} > [(1-\epsp) - (1-\epsp)^3] t_2^{(1-\epsp)^2}$, which can be simplified into\[
\frac{\left( 1+\epsp\right)^3 - \left( 1 + \epsp \right)}{\left( 1 -\epsp \right) - \left( 1-\epsp\right)^3} > \frac{1}{t_2^{4 \epsp}} \,.
\] Since $\epsp <1$, it then suffices to show \[
1 + \frac{6}{\frac{2}{\epsp} + \epsp -3} \geq \frac{1}{t_2^4} \,.
\] Observe that $\mathrm{LHS}\to + \infty$ as $\epsp \to 1^-$. It remains to show that $t_2$ is bounded away from $0$ as $\epsp \to 1^-$, i.e., $\liminf_{\epsp \to 1^-} t_2 (\epsp) > 0 $. We claim that $\liminf_{\epsp \to 1^-} t_2 \geq \frac{1}{2}$. To show this, we note that 
\begin{equation*}
    \begin{split}
        \liminf_{\epsp \to 1^-} f' \left(q , \epsp \right) = \liminf_{\epsp \to 1^-} 1 - (1+\epsp)^3 \cdot {q^{(1+\epsp)^2 - 1}} - (1-\epsp)^3 \cdot {q^{(1-\epsp)^2 - 1}} = 1 - 2^3 \cdot {q^3}  \,,
    \end{split}
\end{equation*} which equals zero when $q=\frac{1}{2}$.

The claim in \ref{it:f_delta2} that $ \tau_2 (\epsp) \geq \frac{1}{3}$ follows directly from the above analysis since $t_2 < \tau_2$ and $\liminf_{\epsp \to 1^-} t_2 \geq \frac{1}{2}$. 

To show $\lim_{\epsp \to 1^-} \tau_1 (\epsp) = 0$, we claim that $\tau_1 \leq (1-\epsp)^{0.9}$ as $\epsp \to 1^-$. Then it suffices to show that $f((1-\epsp)^{0.9} , \epsp) > 0$ for all $\epsp \to 1^-$. We have 
\begin{equation*}
    \begin{split}
       \frac{1}{(1-\epsp)^{0.9}}\cdot f((1-\epsp)^{0.9} , \epsp) ={}& 1 - (1+\epsp) (1-\epsp)^{0.9 [(1+\epsp)^2 - 1]} -  (1-\epsp)^{1 + 0.9[(1-\epsp)^2-1]} \,,
    \end{split}
\end{equation*}which tends to 1 as $\epsp \to 1^-$. This implies \ref{it:f_delta2}. 

We now show \ref{it:f_big_epsp}. First note that $f(0)=0$ and $f(1)= -1$.

When $\epsp=1$, $f(t)=t - 2t^4$. In this case, we have $f(t) > 0$ for $t \in (0,2^{-1/3})$ and $f(t) <0$ for $t \in (2^{- 1/3} , 1)$.

When $1 < \epsp \leq 2 $, by \cref{lem:fprime_trd}, we have $f'(t) = 1 + (\epsp -1)^3 t^{(\epsp-1)^2-1} - (\epsp+1)^3 t^{(\epsp+1)^2 - 1}$ being strictly decreasing on $t \in (0,1)$. Therefore the function $f(t)$ is concave. Since $\lim_{t \to 0^+} f'(t) >0$, $f(0)=0$ and $f(1) = -1 < 0$, the result follows by concavity. 

When $2 <\epsp$, again since $f(0)=0$ and $f(1)=-1$, it suffices to show $f$ is strictly increasing and then strictly decreasing on $t \in (0,1)$. Note that since $\lim_{t \to 0^+} f' (t) =1 >0$ and $\lim_{t \to 1^-} f'(t) < 0  $, it then suffices to show $f'(t)$ is increasing and then decreasing on $(0,1)$. To show this, it suffices to show that if $ f''(\hat{t}) = \frac{\partial}{\partial t} f(\hat{t})<0$ for some $\hat{t} \in (0,1)$, then $f'' (t) < 0$ for all $t \in [\hat{t} , 1)$. Now, since \[
f''(\hat{t}) < 0 \Leftrightarrow \frac{(\epsp-1)^3 \left[ (\epsp-1)^2 - 1  \right]}{(\epsp+1)^3 \left[ (\epsp+1)^2 - 1  \right]} < \hat{t}^{(\epsp+1)^2 - (\epsp-1)^2} \,,
\]and $\hat{t}^{(\epsp+1)^2 - (\epsp-1)^2} < t^{(\epsp+1)^2 - (\epsp-1)^2}$ for all $t \geq \hat{t}$, we conclude that $f''(t) < 0$ for all $t \in  [ \hat{t},1)$. So we are done.

\end{proof}

Now we are in a position to prove \cref{thm:gaussian}. 

\begin{proof}[Proof of \cref{thm:gaussian}]
Let $t_j = e^{- v_j^2}$ for all $j \in [d]$. By \cref{lem:rob_loss} and \cref{lem:rob_loss_deri}, we have 
\begin{equation}\label{eq:L_and_f}
\begin{split}
    \frac{d L_n}{d n } ={}& W \sum_{j \in [d]} \mu (j) \frac{\partial L(v_j , \epsp_j)}{\partial v_j} \cdot \frac{d v_j}{ dn} = \frac{2 W}{\sqrt{\pi}} \sum_{j \in [d]} \mu(j) f( t_j , \epsp_j) \cdot \frac{\mu(j)}{2 \sqrt{2 }\sigma(j) \sqrt{n}} \,,
    \\ ={}& \frac{W}{\sqrt{2 n \pi}} \sum_{j \in [d]} \frac{\mu^2(j)}{\sigma(j)} f(t_j , \epsp_j) \,. 
\end{split}
\end{equation} By part \ref{it:f_delta1} of \cref{lem:f}, when $\eps < \delta_1 
\min_{j \in [d]} \mu(j) $, we have for all $j \in [d]$, it holds that $\epsp_j < 
\delta_1$ and thus $ f(t_j , \epsp_j)<0 $ for all $t \in (0,1)$. Combining it with 
\eqref{eq:L_and_f} yields $\frac{d L_n}{ dn } < 0$. 

When $\max_{j \in [d]} \mu (j) \leq \eps$, we have for all $j \in [d]$, it holds that $1 < \epsp_j$. It follows from part \ref{it:f_big_epsp} of \cref{lem:f} that for all $j\in [d]$, there exists $\tau_2 (\epsp_j)$ such that $f(t_j, \epsp_j)>0 \ \forall \ t_j \in (0, \tau_2 (\epsp_j)  )$. Pick $\tau_2 = \min_j \tau_2 (\epsp_j)$. Then for all $j\in [d]$, we have $f(t_j, \epsp_j)>0$ when $t_j < \tau_2$. Since $t_j = e^{- v_j^2} = \exp ( - \frac{n \mu^2(j)}{2\sigma^2 (j)}) $, when $\exp ( - \frac{n \mu^2(j)}{2\sigma^2 (j)}) < \tau_2$, or equivalently $n > 2\log \left( \frac{1}{\tau_2} \right) \max_{j \in [d]} \frac{\sigma^2(j)}{\mu^2 (j)}$, we have $\frac{d L_n}{ d n} > 0 $ . 

When $\delta_2 \cdot \max_{j \in [d]} \mu(j) < \eps < \min_{j \in [d]} \mu(j)$, we have for all $j \in [d]$, it holds that $\delta_2 < \epsp_j < 1$. Then by part \ref{it:f_delta2} of \cref{lem:f}, for all $j \in [d]$, $\exists \ \tau_1(\epsp_j)$ and $\tau_2 (\epsp_j)$ such that 
\begin{equation}\label{eq:pf_f_j}
    \begin{split}
        f(t_j , \epsp_j) \begin{cases} <0 {}& \quad \forall  t \in (0,\tau_1(\epsp_j)) \,,\\
    >0 {}& \quad \forall  t \in (\tau_1(\epsp_j),\tau_2(\epsp_j)) \,,\\
    <0 {}& \quad \forall  t \in (\tau_2(\epsp_j),1) \,,
    \end{cases}
    \end{split}
\end{equation}where $\tau_1(\epsp_j) \to 0^+$ as $\epsp_j \to 1^-$ and 
$\tau_2(\epsp_j) > \frac{1}{3}$, for all $j\in [d]$. Let $\tau_2 =\max_{j \in [d] } 
\tau_2 (\epsp_j)  >\frac{1}{3}$, $\tau_1 = \min_{j \in [d]} \tau_1 (\epsp_j)$ 
and $\hat{\tau}_1 = \max_{j \in [d]} \tau_1 (\epsp_j) $. Note that since 
$\lim_{\epsp_j \to 1^-} \tau_1 (\epsp_j) = 0$, without loss of generality we 
can assume $\hat{\tau}_1 < \frac{1}{3}$. It follows from \eqref{eq:pf_f_j} 
that for all $j\in [d]$
\begin{equation}\label{eq:pf_f_j_2}
    \begin{split}
        f(t_j , \epsp_j)  \begin{cases} <0 {}& \quad \forall  t \in (0,\tau_1) \,,\\
    >0 {}& \quad \forall  t \in \left(\hat{\tau}_1,\frac{1}{3} \right) \,,\\
    <0 {}& \quad \forall  t \in (\tau_2 ,1) \,.
    \end{cases}
    \end{split}
\end{equation}
Denote $\gamma = \frac{\mu(j)}{\sigma (j)}$ for all $j\in [d]$ since 
this ratio is fixed. Then we have $t_j = \exp \left( - \frac{\mu^2 
(j)n}{2\sigma^2(j)  } \right) = \exp (- \gamma^2 n/2)$. Therefore we can choose 
$N_4 = \log (\tau_1^{-1}) \cdot \left( \frac{2}{\gamma^2} \right)$, $N_3 = \log 
(\hat{\tau}_1^{-1}) \cdot \left( \frac{2}{\gamma^2}  \right)$, $N_2 = \log(3) 
\cdot \left( \frac{2}{\gamma^2}  \right)$ and $N_1 = \log (\tau_2^{-1}) \cdot 
\left( \frac{2}{\gamma^2} \right)$ where $N_1 < N_2 < N_3 < N_4$ and the result 
follows from \eqref{eq:L_and_f} and \eqref{eq:pf_f_j_2}. 

\end{proof}

\begin{proof}[Proof of \cref{cor:gaussian}]
From the proof of \cref{thm:gaussian}, in this simplified case we have $\tau_1 = \hat{\tau}_1$ and $\tau_2 = \tau_2 (\epsp_j)$ for all $j$. It follows that the thresholds $N_1, \ N_2, \ N_3,$ and $N_4$ in \cref{thm:gaussian} satisfy $N_1 = N_2$, and $N_3$ is no longer needed and can be replaced by $N_4$. Therefore only two thresholds are needed in \cref{cor:gaussian}. We denote the two thresholds as $N_1$ and $N_2$. 

It remains to show $\lim_{\eps \to \mu_0^-} N_2 (\eps) - N_1 (\eps) = + 
\infty$. From part \ref{it:f_delta2} of \cref{lem:f} and \eqref{eq:L_and_f}, we 
know the derivative $\frac{d L_n}{dn}$ is positive when $t := \exp (- \frac{n 
\mu_0^2}{2\sigma_0^2}) \in (\tau_1 , \tau_2)$, or equivalently $n \in \left( 
\log(\frac{1}{\tau_2}) \frac{2\sigma_0^2}{\mu_0^2}, \log(\frac{1}{\tau_1}) 
\frac{2\sigma_0^2}{\mu_0^2} \right)$. By \ref{it:f_delta2} of \cref{lem:f}, we 
know $\tau_1 \to 0^+$ as $\eps \to \mu_0^-$ while $\tau_2$ is bounded away from 
$0$. This shows $\lim_{\eps \to \mu_0^-} \log(\frac{1}{\tau_1}) - 
\log(\frac{1}{\tau_2}) = + \infty $ and  completes the proof. 
\end{proof}

\section{Proof of \cref{lem:frob_well_defined}}\label{sec:proof_frob_equivalent_definition}

Let $f^* \in S_2$, i.e., $f^*$ is a minimizer of $\sum_{i=1}^n \max_{||\tilde{x}_i-x_i||_\infty < \eps} H\left(-y_i (\tilde{t}_{i} - f(\tilde{s}_{i})) \right)$ with the smallest $\ell_1$ norm. To show $S_2$ is nonempty and such $f^*$ does exist, we specify the form of $f^*$. We claim that $f^*$ can take the following form
\begin{equation}\label{eq:form_of_frob}
    \begin{split}
        f^*(s) & = \sum_{j=1}^N \alpha_j \ind[s \in I_j] , 
    \end{split}
\end{equation}where $I_j = (j-\eps, j + \eps)$, $j\in [N]$. Indeed, by definition of $H$, we know that the value of $f^*$ outside those intervals $I_j$'s won't change the value of $H\left(-y_i (\tilde{t}_{i} - f(\tilde{s}_{i})) \right)$. Therefore in order to attain the smallest possible $\ell_1$ norm, we must have $f^*(s) = 0$ for all $s \notin \cup_j I_j$. 

Note that by letting $\eps <1/2$, any two intervals have no overlap. To see why $f^*$ is a constant function over each interval $I_j$, we consider three possible cases of the dataset $\{ (x_i,y_i), i\in [n] \}$. For the first case, suppose that those data points with $s = j$ contain only positive points. Then in order to correctly classify these points with $\eps$ perturbation, we must have $f^*(s) \leq \mu -\eps$ for all $s \in I_j$. In order to minimize $||f^*||_1$, we would take $\alpha_j = \min\{0,\ \mu-\eps \}$. Similarly, if those points purely consist of negative points, then $\alpha_j = \max\{0,\ -\mu+\eps \}$. For the second case, suppose that those data points with $s = j$ contain both positive and negative points. Suppose the number of positive points exceeds the number of negative points. Then to correctly classify the positive points, we have $f^*(s) \leq \mu -\eps$ for all $s \in I_j$. To correctly classify the negative points, we have $f^*(s) \geq -\mu +\eps$ for all $s \in I_j$. If $-\mu +\eps \leq 0 \leq \mu - \eps$, then $\alpha_j = 0$. Otherwise, if $-\mu +\eps > \mu - \eps$, then $f^*$ can never simultaneously classify both classes correctly. It will choose to correctly classify the class with more points, which is the positive class. Then $\alpha_j = \mu -\eps$. On the other hand, if negative class has more points, then $\alpha_j = -\mu + \eps$. If the two class have equal number of points at $s=j$, then $\alpha_j$ can be either $-\mu + \eps$ or $\mu-\eps$. For the third case, assume no point in the training set has $s=j$. Then $\alpha_j = 0$. 

We have now specified the form that $f^* \in S_2$ can take, which also indicates that $S_2$ is nonempty. We now show for all sufficiently small $\lambda$, $S(\lambda) = S_2$. 

First we show $S(\lambda) \subseteq S_2$. Let $f \in S(\lambda)$. We want to show $f \in S$ and $||f||_1 \leq || \hat{f}||_1$ for all $\hat{f} \in S$. Suppose on the contrary that $f \notin S$. Then by definition of $H$, there exists $f^* \in S$ s.t. 
\begin{equation*}
    \sum_{i=1}^n \max_{||\tilde{x}_i-x_i||_\infty < \eps} H\left(-y_i (\tilde{t}_{i} - f^* (\tilde{s}_{i})) \right) \leq \sum_{i=1}^n \max_{||\tilde{x}_i-x_i||_\infty < \eps} H\left(-y_i (\tilde{t}_{i} - f(\tilde{s}_{i})) \right) - 1/2
\end{equation*}and since $S_2$ is nonempty we can further assume $f^*$ satisfies
\begin{equation*}
    ||f^*||_1 \in \argmin_{\hat{f}\in S} ||\hat{f}||_1 .
\end{equation*}Since $f \in S(\lambda)$, we then have $\lambda ||f||_1 \leq \lambda||f^*||_1 - 1/2$, which implies $||f^*||_1 \geq 1/2\lambda$. From above analysis we know $f^*$ must take the form of \cref{eq:form_of_frob} where $\alpha_j \leq |\mu-\eps|$, and $I_j$ has length equal to $2 \eps$. This implies $||f^*||_1 \leq 2N\eps|\mu-\eps|$.  Therefore, if we pick $\lambda < \frac{1}{4 N \eps |\mu-\eps|} $, then such $f^*$ cannot exist. Therefore, for all sufficiently small $\lambda$, we have $f \in S$. 

Now we show $||f||_1 \leq || \hat{f}||_1$ for all $\hat{f} \in S$. Suppose on the contrary that there exists $f^* \in S$ such that $||f^*||_1 < ||f||_1$. However, since we have already shown 
\begin{equation*}
    \sum_{i=1}^n \max_{||\tilde{x}_i-x_i||_\infty < \eps} H\left(-y_i (\tilde{t}_{i} - f^* (\tilde{s}_{i})) \right) = \sum_{i=1}^n \max_{||\tilde{x}_i-x_i||_\infty < \eps} H\left(-y_i (\tilde{t}_{i} - f(\tilde{s}_{i})) \right) ,
\end{equation*} this would contradict the fact that $f \in S(\lambda)$. Therefore we have $S(\lambda) \subseteq S_2 $. 

To see $S_2 \subseteq S(\lambda)$ for all sufficiently small $\lambda$, we again pick $\lambda < \frac{1}{4 N \eps |\mu-\eps|}$. Note that since $||f^*||_1 \leq 2N\eps|\mu-\eps|$ for all $f^* \in S_2$, we have $\lambda||f^*||_1 < \frac{1}{2}$. Now suppose on the contrary that there exists $f \notin S_2$ such that
\begin{equation*}
    \sum_{i=1}^n \max_{||\tilde{x}_i-x_i||_\infty < \eps} H\left(-y_i (\tilde{t}_{i} - f(\tilde{s}_{i})) \right) + \lambda ||f||_1 < \sum_{i=1}^n \max_{||\tilde{x}_i-x_i||_\infty < \eps} H\left(-y_i (\tilde{t}_{i} - f^*(\tilde{s}_{i})) \right) + \lambda ||f^*||_1.
\end{equation*}Since $f^* \in S_2$, we must have $\lambda ||f||_1 < \lambda ||f^*||_1 < \frac{1}{2}$. Now, if $\sum_{i=1}^n \max_{||\tilde{x}_i-x_i||_\infty < \eps} H\left(-y_i (\tilde{t}_{i} - f(\tilde{s}_{i})) \right) \leq \sum_{i=1}^n \max_{||\tilde{x}_i-x_i||_\infty < \eps} H\left(-y_i (\tilde{t}_{i} - f^*(\tilde{s}_{i})) \right)$, this would contradict the fact that $f^* $ is in $\argmin_S ||f||_1$. Therefore we must have $\sum_{i=1}^n \max_{||\tilde{x}_i-x_i||_\infty < \eps} H\left(-y_i (\tilde{t}_{i} - f(\tilde{s}_{i})) \right) > \sum_{i=1}^n \max_{||\tilde{x}_i-x_i||_\infty < \eps} H\left(-y_i (\tilde{t}_{i} - f^*(\tilde{s}_{i})) \right)$. However, by definition of $H$, this implies 
\begin{equation*}
    \begin{split}
    \sum_{i=1}^n \max_{||\tilde{x}_i-x_i||_\infty < \eps} H\left(-y_i (\tilde{t}_{i} - f(\tilde{s}_{i})) \right) + \lambda ||f||_1 &\geq \sum_{i=1}^n \max_{||\tilde{x}_i-x_i||_\infty < \eps} H\left(-y_i (\tilde{t}_{i} - f^*(\tilde{s}_{i})) \right) + \frac{1}{2} + \lambda ||f||_1
    \\ & \geq \sum_{i=1}^n \max_{||\tilde{x}_i-x_i||_\infty < \eps} H\left(-y_i (\tilde{t}_{i} - f^*(\tilde{s}_{i})) \right) + \lambda ||f^*||_1,
    \end{split}
\end{equation*}which is a contradiction. Therefore $S_2 \subseteq S(\lambda)$. Altogether we have $S(\lambda) = S_2$.

\section{Proof of \cref{thm:brick-wall-test-loss}}\label{sec:proof_brick-wall-test-loss}

The proof follows from the \cref{lem:frob_well_defined} and its proof. 
By \cref{lem:frob_well_defined}, we have $S(\lambda) = S_2$ and we can consider the equivalent definition that $\frob_n \in S_2$. From the proof of \cref{lem:frob_well_defined}, we know $\frob_n$ must take the form of \eqref{eq:form_of_frob}. Since $|\alpha_j| \leq |\mu-\eps|$, when $\eps < 2 \mu$, we have $|\alpha_j| < \mu$ and thus $|\frob_n(s)| < \mu$ for all $s \in \bR$. For such $\frob_n$, we have $H \left(  -y\left( t - \frob_n(s) \right) \right) = 0$ for all $(x,y) = (s,t,y)$ in the support of $\cD_{2N}$. This implies $L_n = 0$ for all $n$. 

Assume $ 2 \mu < \eps < 1/2$. Then $|\alpha_j|$ can take the value of either $0$ or $|\mu-\eps| > |\mu|$. When $\alpha_j = 0$, $\frob_n$ can classify both the positive and negative points at location $s=j$ correctly. When $|\alpha_j|> \mu$, then $\frob_n$ can only classify one of the two classes correctly. Note that $\alpha_j = 0$ if and only if there is no point with $s=j$ in the training set. Let the random variable $Z \in {0}\cup[N]$ denote the cardinality of the set $\{j\in [N]: \ s_i\neq j \ \textnormal{for all} \ i \in [n] \} $, which is a function of the training set $\{(x_i,y_i)\}_{i=1}^n$. Then the generalization error can be written as 
\begin{equation*}
    L_n = \bE_{\{(x_i,y_i)\}_{i=1}^n\stackrel{\textnormal{i.i.d.}}{\sim} 
		\cD_{2N}} \frac{N-Z}{N} = 1 - \frac{\bE_{\{(x_i,y_i)\}_{i=1}^n} Z}{N} . 
\end{equation*}Note that $\bE_{\{(x_i,y_i)\}_{i=1}^n} Z$ decreases as $n$ increases. Therefore $L_{n} < L_{n+1}$ for all $n$.

\section{Further Details on Gaussian Mixture with 0-1 Loss}

\subsection{Robust Classifier under 0-1 Loss}\label{sec:minimizer_is_interval}

If the training dataset is $ \{(x_i,y_i)\}_{i=1}^n $, we 
define the neuralized dataset $ \{(x'_i,y_i)\}_{i=1}^n $ that satisfies $ 
x'_i=x_i-y_i\eps $ for all $i\in [n] $. In other words, for a positive 
sample $ (x_i, y_i=1) $, we obtain its neutralized sample by shifting $ x_i $ 
to the negative direction by $ \eps $, i.e., $ x'_i=x_i-\eps $; for a negative 
sample $ (x_i,y_i=-1) $, its neutralized sample is obtained by shifting $ x_i $ 
to the positive direction by $ \eps $, i.e., $ x'_i=x_i+\eps $. We see that the 
dataset remains unchanged after neutralization if $ \eps=0 $. With this definition, the robust classifier can be expressed as the following.

\begin{prop}\label{prop:rob-0-1}
	Given the training dataset $ \{(x_i,y_i)\}_{i=1}^n $ and the neuralized 
	dataset $ \{(x'_i,y_i)\}_{i=1}^n $, 
	the robust classifier is given by \begin{equation}\label{eq:wrob-0-1}
	\wrob_n \in \argmin_{w\in \bR} \sum_{i=1}^{n} y_i\ind[x'_i< w]\,.
	\end{equation}
\end{prop}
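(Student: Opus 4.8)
The plan is to perform the inner maximization in \eqref{eq:wrob_general} explicitly, so that adversarial training over the original data becomes an ordinary $0$--$1$ empirical risk minimization over the neutralized data, and then to recognize that minimized empirical risk as the criterion in \eqref{eq:wrob-0-1}. The first step exploits that $d=1$, so $B^\infty_{x_i}(\eps)$ is the interval $[x_i-\eps,\,x_i+\eps]$: a positive sample can be pushed to a misclassification $\tilde{x}_i<w$ exactly when $x_i-\eps<w$, and a negative sample can be pushed to $\tilde{x}_i>w$ exactly when $x_i+\eps>w$. With $x_i'=x_i-y_i\eps$, both cases are captured by
\[
\max_{\tilde{x}_i\in B^\infty_{x_i}(\eps)}\ind[y_i(\tilde{x}_i-w)<0]=\ind[y_i(x_i'-w)<0],\qquad x_i'=x_i-y_i\eps ,
\]
so the objective in \eqref{eq:wrob_general} becomes $\sum_{i=1}^{n}\ind[y_i(x_i'-w)<0]$, the plain empirical $0$--$1$ risk of $w$ on $\{(x_i',y_i)\}_{i=1}^{n}$. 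I would note in passing that whether $B^\infty_{x_i}(\eps)$ is taken open or closed is immaterial here, since the supremum of the indicator over the two versions of the interval agree.

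Next I would rewrite this empirical risk in the form of \eqref{eq:wrob-0-1}. Splitting by label, a positive sample contributes $\ind[x_i'<w]=y_i\ind[x_i'<w]$, a negative sample contributes $\ind[x_i'>w]$, and the trichotomy $\ind[x_i'>w]+\ind[x_i'<w]+\ind[x_i'=w]=1$ gives $\ind[x_i'>w]=y_i\ind[x_i'<w]+1-\ind[x_i'=w]$ when $y_i=-1$. Summing,
\[
\sum_{i=1}^{n}\ind[y_i(x_i'-w)<0]=\sum_{i=1}^{n}y_i\,\ind[x_i'<w]+\#\{i:y_i=-1\}-\#\{i:y_i=-1,\ x_i'=w\} .
\]
For every $w$ outside the finite set $\{x_1',\dots,x_n'\}$ the last two terms are a constant independent of $w$, so there the robust objective and $g(w):=\sum_{i}y_i\ind[x_i'<w]$ differ by an additive constant and hence have identical minimizers, which already yields \eqref{eq:wrob-0-1} away from the breakpoints.

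The only delicate point, and the main obstacle, is checking that the finitely many breakpoints $w=x_j'$ do not alter the argmin, since there the displayed identity can be off by one unit (when $y_j=-1$). I would handle it as follows: because the training points are drawn from the absolutely continuous distribution $\dgau$, almost surely the $x_i'$ are pairwise distinct, so $g$ is a step function whose jump at $x_j'$ equals $y_j$; therefore no minimizer of $g$ can sit at an $x_j'$ with $y_j=-1$ (the value just to the right would be strictly smaller), so at every minimizer of $g$ among the $x_i'$ the correction term vanishes, while conversely at $w=x_j'$ with $y_j=-1$ one has $g(x_j')\ge\min g+1$, which exactly absorbs the unit correction. Combined with the previous step this shows the robust objective attains its minimum precisely on $\argmin_w g(w)$, giving \eqref{eq:wrob-0-1}. (Alternatively, one may simply observe that ties contribute a null event, or adopt the Heaviside convention at $x=w$, under which the identity in the second step holds verbatim.) I expect the reduction and the algebraic rewriting to be routine, with all the care concentrated in this breakpoint bookkeeping.
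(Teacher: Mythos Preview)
Your proposal is correct and follows the same two-step route as the paper: first collapse the inner maximization to evaluation at the neutralized point $x_i'=x_i-y_i\eps$, then rewrite $\sum_i\ind[y_i(x_i'-w)<0]$ in terms of $\sum_i y_i\ind[x_i'<w]$. You are in fact more scrupulous than the paper, whose last display asserts the literal equality $\sum_i\ind[y_i(x_i'-w)<0]=\sum_i y_i\ind[x_i'<w]$; as you correctly observe, the two sides differ by the constant $\#\{i:y_i=-1\}$ (and by one unit at a breakpoint $x_j'$ with $y_j=-1$), so only the $\argmin$ sets coincide, and your breakpoint bookkeeping is what actually justifies that step.

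One very small quibble: at a breakpoint $x_j'$ with $y_j=-1$ lying immediately to the left of a minimizing plateau of $g$, your own inequality $g(x_j')\ge\min g+1$ can hold with equality, in which case the unit correction makes the robust objective tie its minimum there while $g$ does not; so strictly speaking the robust $\argmin$ can contain finitely many extra boundary points not in $\argmin_w g(w)$. This is harmless for the downstream tiebreaking discussion (and the paper's proof does not address it either), but your sentence ``attains its minimum precisely on $\argmin_w g(w)$'' slightly overstates the containment.
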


Now one can see the tiebreaking issue in light of \cref{prop:rob-0-1}. To see 
this, let $ s $ be the permutation of $ [n] $ such that $ x'_{s(1)}\le 
x'_{s(2)} \le \dots \le x'_{s(n)} $. 
The $ n $ points divide the real line into $ n+1 $ intervals: $ (-\infty, 
x'_{s(1)}] $, $ (x'_{s(i)}, x'_{s(i+1)}] $ for $ 1\le i\le n-1 $, and $ 
(x'_{s(n)},\infty) $. 
Let $ w^* $ be a minimizer of 
\eqref{eq:wrob-0-1}. If $ w^* $ lies in any of the above $ n+1 $ intervals, then any 
other point in the same interval is also a minimizer, since at these two points the objective function has the same value. Therefore, a tiebreaking  procedure is required here. 

For the agnostic tiebreak, if $ w^*\in (x'_{s(i)}, x'_{s(i+1)}] $, it chooses $ \wrob_n $ uniformly at random from the interval. If 
$ w^* > x'_{s(n)} $, it chooses $ \wrob_n $ arbitrarily close to $ x'_{s(n)} $ 
from above. If $ w^* \le x'_{s(1)} $, it chooses $ \wrob_n = x'_{s(1)} $.

\begin{proof}[Proof of \cref{prop:rob-0-1}:]
By \eqref{eq:wrob_general}, it suffices to show that under the 0-1 loss  
\begin{equation}\label{eq:pf_prop_rob_1}
    \begin{split}
    \sum_{i=1}^n \max_{\tilde{x}_i\in B^\infty_{x_i}(\eps)} \ind[y_i(\tilde{x}_i-w)<0] = \sum_{i=1}^{n} y_i\ind[x'_i< w] \,.
    \end{split}
\end{equation}Conditioning on whether there exists $\tilde{x}_i\in B^\infty_{x_i}(\eps)$ such that $\ind[y_i(\tilde{x}_i-w)<0] = 1$ or not, one can deduce that
\begin{equation*}
    \begin{split}
        \argmax_{\tilde{x}_i\in B^\infty_{x_i}(\eps)} 
        \ind[y_i(\tilde{x}_i-w)<0] \supseteq \argmin_{\tilde{x}_i\in 
        B^\infty_{x_i}(\eps)} y_i(\tilde{x}_i-w) = \{ x_i' \} \,,
    \end{split}
\end{equation*}and it follows that 
\begin{equation*}
    \begin{split}
        \sum_{i=1}^n \max_{\tilde{x}_i\in B^\infty_{x_i}(\eps)} \ind[y_i(\tilde{x}_i-w)<0]  = \sum_{i=1}^n \ind[y_i(x_i'-w)<0] = \sum_{i=1}^{n} y_i\ind[x'_i< w] \,.
    \end{split}
\end{equation*}
\end{proof}

\subsection{Test Loss and Optimal Tiebreak}\label{sec:0-1-test loss}

To find the optimal tiebreaking in hingsight, we need to minimize the test loss over the model parameter $w$, which is given by \cref{prop:0-1-test-loss}. 

\begin{prop}\label{prop:0-1-test-loss}
	The test loss of classifier $ w $ is given by 
	\begin{equation}\label{eq:test-loss-0-1}
	\bE_{(x,y)\sim \dgau}[\ind[y(x-w)<0]] = \frac{1}{2} + \frac{1}{2}\left( 
	\Phi\left(\frac{w-\mu}{\sigma}\right) - \Phi\left( \frac{w+\mu}{\sigma} 
	\right) \right) \,,
	\end{equation}
	where $\Phi$ is the CDF of the standard normal distribution. 
	Furthermore, the minimizer of \eqref{eq:test-loss-0-1} is $ w=0 $.
\end{prop}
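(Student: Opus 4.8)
The plan is to compute the test loss by conditioning on the label $y$, and then to locate the minimizer by a first-order analysis that exploits the monotonicity of the Gaussian density.

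First I would split the expectation according to the value of $y$, each of $\pm 1$ occurring with probability $1/2$. On the event $y=1$ we have $x\sim\cN(\mu,\sigma^2)$ and $\ind[y(x-w)<0]=\ind[x<w]$, so this branch contributes $\mathbb{P}(x<w)=\Phi\!\left(\frac{w-\mu}{\sigma}\right)$; on the event $y=-1$ we have $x\sim\cN(-\mu,\sigma^2)$ and $\ind[y(x-w)<0]=\ind[x>w]$, so this branch contributes $\mathbb{P}(x>w)=1-\Phi\!\left(\frac{w+\mu}{\sigma}\right)$. Averaging the two branches gives $\frac{1}{2}\Phi\!\left(\frac{w-\mu}{\sigma}\right)+\frac{1}{2}\left(1-\Phi\!\left(\frac{w+\mu}{\sigma}\right)\right)$, which is exactly \eqref{eq:test-loss-0-1}. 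The only point needing care here is the sign bookkeeping when multiplying the indicator's argument by $y=-1$, which flips the one-sided event.

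For the minimizer, I would differentiate the right-hand side of \eqref{eq:test-loss-0-1} in $w$, obtaining $\frac{1}{2\sigma}\left(\phi\!\left(\frac{w-\mu}{\sigma}\right)-\phi\!\left(\frac{w+\mu}{\sigma}\right)\right)$, where $\phi$ is the standard normal density. Since $\phi$ is even and strictly decreasing in $|\cdot|$ and $\mu\ge 0$, we have $|w-\mu|>|w+\mu|$ for $w<0$ and $|w-\mu|<|w+\mu|$ for $w>0$; hence the derivative is negative on $(-\infty,0)$ and positive on $(0,\infty)$, so the test loss strictly decreases and then strictly increases, with unique minimizer $w=0$ (in the degenerate case $\mu=0$ the loss is the constant $\frac12$ and $w=0$ remains a minimizer). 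There is no substantial obstacle: the statement is elementary, and the only mildly delicate observation is that comparing the two Gaussian densities reduces to comparing $|w-\mu|$ with $|w+\mu|$, which pins the minimizer at the midpoint $w=0$.
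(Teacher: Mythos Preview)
Your proposal is correct and follows essentially the same approach as the paper: condition on $y=\pm 1$, evaluate each branch as a normal tail probability via $\Phi$, average, and then differentiate to determine the sign of the derivative on either side of $w=0$. Your argument is in fact slightly more thorough than the paper's (you make explicit the comparison of $|w-\mu|$ with $|w+\mu|$ and note the degenerate case $\mu=0$), but the underlying method is identical.
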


\cref{prop:0-1-test-loss} indicates that the optimal tiebreak in hindsight chooses the point closest to $ 0 $ 
(i.e., the point with the minimum absolute value) from (the closure of) the interval where $ w^* 
$ lies. This is because $ w=0 $ minimizes the test loss in 
\eqref{eq:test-loss-0-1}, and one can see that \eqref{eq:test-loss-0-1} increases as $|w|$ increases. Indeed, the derivative of \eqref{eq:test-loss-0-1} is given by $\frac{1}{2 \sigma \sqrt{2 \pi}} \left( \exp({- \frac{(w-\mu)^2}{2\sigma}}) - \exp({- \frac{(w+\mu)^2}{2\sigma}}) \right)$, which is negative for $w<0$ and positive for $w>0$.

\begin{proof}[Proof of \cref{prop:0-1-test-loss}:]
Conditioning on $y = \pm 1$, we have 
\begin{equation*}
    \begin{split}
        & \bE_{(x,y)\sim \dgau}[\ind[y(x-w)<0]]\\
        = {}& \mathbb{P} (y=1) \cdot \bE_{x|y=1} \left[ \ind[y(x-w)<0]  \right] +  \mathbb{P} (y=-1) \cdot \bE_{x|y=-1} \left[ \ind[y(x-w)<0] \right]
        \\ ={}& \frac{1}{2} \cdot \bE_{x \sim \cN(\mu,\sigma)} [ \ind[x-w < 0] ] + \frac{1}{2} \cdot \bE_{x \sim \cN(-\mu,\sigma)} [ \ind[x-w > 0] ] 
        \\ = {}& \frac{1}{2} \cdot \mathbb{P}_{z \in \cN(0,1)} \left( z < \frac{w-\mu}{\sigma} \right)  +  \frac{1}{2} \cdot \mathbb{P}_{z \in \cN(0,1)} \left( z > \frac{w+\mu}{\sigma} \right)
        \\ ={}& \frac{1}{2} \cdot \Phi \left(  \frac{w - \mu}{\sigma}  \right) + \frac{1}{2} \cdot \left[ 1 - \Phi \left(  \frac{w + \mu}{\sigma}  \right)  \right] \,.
    \end{split}
\end{equation*}Since the derivative is $\frac{1}{2 \sigma \sqrt{2 \pi}} \left( \exp({- \frac{(w-\mu)^2}{2\sigma}}) - \exp({- \frac{(w+\mu)^2}{2\sigma}}) \right)$, we see that $w^* = 0$ minimizes the above quantity. 
\end{proof}

\section{Additional Details about the SVM Experiment}\label{sec:svm-detail}

We study the 2-dimensional support vector machine where the data is generated as $y \sim \unif(\{\pm 1\})$ and $X \sim \cN(y \mu , I)$ where the signal level $\mu$ is set as $\mu = (1,1)^T$. We consider the common setting of hinge loss with $\ell_2$ penalty, under which the robust classifier is defined as 
\begin{equation}\label{eq:svm-training-loss}
    \wrob_n \in \argmin_w \left[ \frac{1}{n} \sum_{i=1}^n  \max_{\| x_i' - x_i\|_\infty \leq \eps} \max \left\{ 0,\ 1- y_i \left( \langle w,x_i' \rangle -b  \right) \right\} \right] + \frac{1}{2} \lambda \| w \|_2^2. 
\end{equation} The standard test loss (the $y$-axis in \cref{fig:weak_svm_hinge_loss} and \cref{fig:strong_svm_hinge_loss}) of the robust classifier is given by 
\begin{equation*}
    \bE_{(x,y)} \max \left\{ 0,\ 1- y \left( \langle w,x \rangle -b  \right) \right\},
\end{equation*} where the penalty term is not included. 
The robust classifier $\wrob_n$ is solved for by optimizing \eqref{eq:svm-training-loss} which is convex in $w$ using gradient descent.

\end{document}